\renewcommand{\hat}{\widehat}
\renewcommand{\tilde}{\widetilde}
\DeclareMathOperator*{\argmax}{arg\,max}
\DeclareMathOperator*{\argmin}{arg\,min}
\newcommand{\diff}{\text{d}}
\newcommand{\state}{\theta}
\newcommand{\statesp}{\Theta}
\newcommand{\prob}{p}
\newcommand{\expect}[2]{{\mathbf{E}}_{#1}\left[#2\right]}
\newcommand{\variance}[2]{{\mathbf{Var}}_{#1}\left[#2\right]}
\newcommand{\score}{S}
\newcommand{\cdl}{\textsc{CDL}}
\newcommand{\ECE}{\textsc{ECE}}
\newcommand{\CDL}{\textsc{CDL}}
\newcommand{\ind}[1]{\mathbb{I}\left[#1\right]}
\newcommand{\pr}[1]{Pr\left[#1\right]}
\newcommand{\swapS}{\textsc{Swap}}
\newcommand{\pred}{\prob}
\newcommand{\vpred}{\bm{\pred}}
\newcommand{\vstate}{\bm{\state}}
\newcommand{\reals}{\mathbb{R}}
\newcommand{\distcal}{\textsc{DTC}}
\newcommand{\smooth}{\textsc{smCal}}
\newcommand{\empq}[1]{\hat{q_{#1}}}
\newcommand{\btlp}{b}
\newcommand{\ftlp}{q}
\newcommand{\ftplp}{p}
\newcommand{\btplp}{r}
\newcommand{\ftplvec}{\bm{p}}
\newcommand{\ftlvec}{\bm{q}}
\newcommand{\ftplpred}{P}
\newcommand{\ftlpred}{Q}
\newcommand{\btlpred}{B}
\newcommand{\btplpred}{R}
\newcommand{\ftldm}{\mathcal{Q} }
\newcommand{\rate}{\tau}
\newcommand{\ploss}{\textsc{DL}}
\newcommand{\distance}{\textsc{Dist}}
\newcommand{\alg}{f}
\newcommand{\empiricalq}{\tilde{q}}
\newcommand{\empiricalQ}{\tilde{Q}}
\newcommand{\density}{\rho}
\newcommand{\dtv}{d_{\text{TV}}}
\newcommand{\supp}{\text{supp}}
\newcommand{\dpe}{\gamma}
\newcommand{\dpd}{\delta}
\newcommand{\dpmech}{\mathcal{M}}
\newtheorem{theorem}{Theorem}[section]
\newtheorem*{theorem*}{Theorem}
\newtheorem{lemma}[theorem]{Lemma}
\newtheorem{definition}[theorem]{Definition}
\newtheorem{proposition}[theorem]{Proposition}
\newtheorem{corollary}[theorem]{Corollary}
\newtheorem{claim}[theorem]{Claim}
\newtheorem*{example*}{Example}
\date{}
\title{Smooth Calibration and Decision Making}
\author{
    Jason Hartline\\
    Northwestern University\\
    Computer Science\\
    \texttt{hartline@northwestern.edu}
    \and 
    Yifan Wu\\
    Northwestern University\\
    Computer Science\\
    \texttt{yifan.wu@u.northwestern.edu}
    \and
    Yunran Yang\\
    Shanghai Jiaotong University\\
    Zhiyuan College\\
    \texttt{yyr0816@sjtu.edu.cn}
}
\begin{document}

\maketitle
\begin{abstract}
  Calibration requires predictor outputs to be consistent with their Bayesian posteriors. For machine learning predictors that do not distinguish between small perturbations, calibration errors are continuous in predictions, e.g.\ smooth calibration error \citep{foster2018smooth}, distance to calibration \citep{utc}. On the contrary, decision-makers who use predictions make optimal decisions discontinuously in probabilistic space, experiencing loss from miscalibration discontinuously. Calibration errors for decision-making are thus discontinuous, e.g.,\ Expected Calibration Error \citep{foster1997calibrated}, and Calibration Decision Loss \citep{hu2024predict}. Thus, predictors with a low calibration error for machine learning may suffer a high calibration error for decision-making, i.e.\ they may not be trustworthy for decision-makers optimizing assuming their predictions are correct. It is natural to ask if post-processing a predictor with a low calibration error for machine learning is without loss to achieve a low calibration error for decision-making. In our paper, we show post-processing an online predictor with $\epsilon$ distance to calibration achieves $O(\sqrt{\epsilon})$ $\ECE$ and $\cdl$, which is asymptotically optimal. The post-processing algorithm adds noise to make predictions differentially private. The optimal bound from low distance to calibration predictors from post-processing is non-optimal compared with existing online calibration algorithms that directly optimize for $\ECE$ and $\cdl$. 
\end{abstract}

\section{Introduction}

Calibration requires that predictions are empirically conditionally unbiased. Consider a sequence of predictions for the chance of rain, a predictor is calibrated if, for every $p$, among the days that the prediction is $p$, the fraction of rainy days is also $p$. Calibrated predictions can thus be reliably interpreted as probabilities. 

Calibration errors quantify the error of a predictor from being perfectly calibrated. Machine learning (ML) predictors make predictions continuously in probabilistic space, so 
calibration errors for ML are continuous in prediction values and do not distinguish between small perturbations in predictions. Two canonical examples are the \textit{smooth calibration error} \citep{foster2018smooth} and the \textit{distance to calibration} ($\distcal$) \citep{utc}. As an illustrating example of the calibration errors for ML, consider a predictor in \Cref{tab: intro example predictor}.
\begin{table}[h]
    \centering
    \begin{tabular}{c|c|c}
Prediction value    & $\#$ days & conditional frequency of rain \\
\hline
    $50.01\%$     &  half of the days & $0$\\
    $49.99\%$ & half of the days & $1$
    \end{tabular}
    \caption{A miscalibrated predictor for the chance of rain.}
    \label{tab: intro example predictor}
\end{table}
Although the predictions of $50.01\%$ and $49.99\%$ are biased, the total number of rainy days is $50\%$, indicating the predictor is very close to a calibrated predictor that always outputs $50\%$. Both $\distcal$ and the smooth calibration error are about $0.01\%$, close to $0$. The smooth calibration error combines the bias over all the days by weighing biases continuously, e.g.\ weighing bias $(50.01\% - 0)$ by $-0.01\%$, $(49.99\% - 1)$ by $0.01\%$, and summing together (the weights are Lipschitz continuous in prediction values). The smooth calibration error is linearly related to $\distcal$, which calculates the expected $\ell_1$ distance between the predictor and the nearest calibrated predictor, which in this example predicts $50\%$ every day. 

Decision-makers make decisions discontinuously in probabilistic space, thus, a calibration error for decision-making is discontinuous in the prediction space. 
For example, consider a decision problem with binary action space, bringing an umbrella or not. The decision maker receives a payoff of $1$ when the decision matches the state, i.e.\ bringing an umbrella when rainy, not bringing when not rainy, and a payoff of $0$ in other cases. When assisted by a prediction, the action of a decision-maker changes from not bringing an umbrella to bringing an umbrella at the prediction threshold of $50\%$. Two examples of calibration errors for decision-making are Expected Calibration Error ($\ECE$) \citep{foster1997calibrated} and Calibration Decision Loss ($\cdl$) \citep{hu2024predict}. $\cdl$ quantifies the worst-case decision loss of a decision-maker who trusts the prediction as a probability, where the worst-case is taken over all payoff-bounded decision tasks. By definition, $\cdl$ upperbounds any decision-maker's loss. $\ECE$, the most well-studied calibration error metric, is defined by the averaged absolute bias in predictions. For example, $\ECE$ averages over $|50.01\% - 0|$ and $|49.99\% - 1|$ for the predictor in \Cref{tab: intro example predictor} and has a calibration error of $50.01\%$. \citet{kleinberg2023u} shows that $\ECE$ linearly upperbounds the decision loss of every payoff-bounded decision task, implying an upperbound of $\cdl$. 

From the decision-making perspective, having a low calibration error for ML, however, does not guarantee a low calibration error for decision-making or being trustworthy for decision-making. Consider the same example of a predictor in \Cref{tab: intro example predictor} and the umbrella decision problem above. 
According to a calibration error for ML, e.g.\ distance to calibration, the predictor is $0.01\%$ close to a calibrated predictor that always outputs $50\%$. However, to the decision-maker, the prediction suggests not taking an umbrella when the weather is rainy, and taking an umbrella when not rainy. This non-trustworthiness comes from the discontinuity of decision-making which the decision-maker changes an action at the threshold $50\%$.

Here is the natural question: can we design a post-processing algorithm that, given any predictor with a low calibration error for machine learning, outputs predictions with a low calibration error for decision-making? Ideally, the post-processing algorithm should achieve near-optimal guarantees that asymptotically match the guarantees from directly optimizing for decision-making. 

Our paper designs a post-processing algorithm that, given any predictor with $\distcal = \epsilon$, outputs differentially private predictions with $\ECE$ and $\cdl$ bounded by $O(\sqrt{\epsilon})$, in both the batch setting and the online setting. We give lower bounds, described below, for both that online and batch setting, that show that this post-processing algorithm is asymptotically optimal.  Additionally the online lower bounds shows that the optimal predictors for decision makers cannot be constructed from optimal predictors from machine learning.

We show that the privacy-based post-processing algorithm is asymptotically optimal in the online setting. This optimality implies there does not exist a post-processing algorithm that achieves the same guarantee as known online algorithms that directly optimize predictions for $\ECE$ and $\cdl$. For online calibration, there has been shown an $O\left(T^{-\frac{1}{3}-c}\right)$ ($c>0$) upperbound on optimal algorithm for $\ECE$ \citep{dagan2023external}, a $\Tilde{O}(T^{-\frac{1}{2}})$ optimal bound to $\cdl$ \citep{hu2024predict}, and an $\Omega(T^{-\frac{2}{3}})$ lowerbound to $\distcal$ \citep{qiao-distance}. Thus, applying the lowerbound of $\Omega(\sqrt{\epsilon})$, any post-processing algorithm can only achieve the non-optimal $\Omega(T^{-\frac{1}{3}})$ $\ECE$ and $\cdl$.

We show that the privacy-based post-processing algorithm is asymptotically optimal in the batched setting in two models. The first model considers post-processing algorithms applied individually to each prediction, and the same guarantee and lowerbound to $\ECE$ and $\cdl$ applies as in the online setting. The second model allows algorithms that post-process the entire batch of predictions.  However, doing so just to attain calibration is too easy: simply ignoring the individual information in each prediction and averaging them all will be close to calibrated.  Thus, we impose a stronger benchmark that measures the worst-case decision loss relative to a nearby — in the sense of $\epsilon$ Distance to Calibration — calibrated predictor.  This worst case is taken over all such nearby calibrated predictors and all bounded decision problems. We show that the privacy-based post-processing algorithm achieves $O(\sqrt{\epsilon})$ decision loss and that this result is tight, i.e.\ no other post-processing algorithm achieves asymptotically better decision loss.  

\subsection{Related Work}

\paragraph*{Calibration Error Metrics.}
The most relevant work to ours,  \citet{smoothECE}, introduces the error metric Smooth ECE, which, given a predictor, calculates the $\ECE$ with Gaussian noise added to the predictions. For any predictor with $\distcal = \epsilon$, smooth ECE is shown to be bounded by $\Theta(\sqrt{\epsilon})$. Instead, our paper focuses on the decision-making perspective of calibration. We show that this bound of $\Theta(\sqrt{\epsilon})$ is tight, suggesting that from a decision-making perspective, optimizing for $\distcal$ and post-processing achieves suboptimal guarantees. Our post-processing algorithm also generalizes the result of \citet{smoothECE} by considering noise distributions for differential privacy. 

As introduced previously, existing calibration error metrics mainly focus on two aspects: calibration errors for machine learning, continuous in predictions, e.g.\ smooth calibration error \citep{foster2018smooth}, distance to calibration\footnote{We follow \citet{qiao-distance} and refer to \textit{distance to calibration} as the \textit{lower} distance to calibration in \citet{utc}.} \citep{utc}, smooth ECE \citep{smooth}; and calibration errors for decision-making, e.g. the canonical $\ECE$ \citep{foster1997calibrated} and the Calibration Decision Loss \citep{hu2024predict}. Recently, as an orthogonal property to continuity and decision-making, \citet{haghtalab2024truthfulness} propose an approximately truthful calibration error metric for an expected-error-minimizing sequential predictor. 

\paragraph*{Online Calibration.} In online calibration, the predictor repeatedly interacts with an adversary selecting a binary state. In each round, both the predictor and the adversary know the history of predictions and states, but are not allowed to strategize conditioned on the opponent's action in the current round. \citet{foster1998asymptotic} showed an upperbound of $\ECE = O(T^{-\frac{1}{3}})$, which is recently proven to be polynomial-time achievable by \citet{noarov2023highdimensional}. Recently,  \citet{dagan2024breakingt23barriersequential} improves the upperbound to $O\left(T^{-\frac{1}{3}-c}\right)$ for some constant $c>0$. On the lowerbound side, \citet{sidestep} showed there exists an $O(T^{-0.472})$ lowerbound, strictly above $\Tilde{O}(\frac{1}{\sqrt{T}})$, which is improved to $O(T^{-0.456})$ by \citet{dagan2024improved}. 

For linearly related smooth calibration error and $\distcal$, \citet{qiao-distance} prove an $O(\frac{1}{\sqrt{T}})$ upperbound and an $O(T^{-\frac{2}{3}})$ lowerbound.  \citet{arunachaleswaran2025elementary} design a simple polynomial-time algorithm that achieves $\distcal = O(\frac{1}{\sqrt{T}})$. 

The Calibration Decision Loss ($\cdl$) is introduced in \citet{hu2024predict} with a bound of $\Tilde{O}(\frac{1}{\sqrt{T}})$, tight up to a logarithmic factor. 

\paragraph*{Omniprediction.} 
Our definition of decision loss for the batch setting can be equivalently formulated as achieving omniprediction with regard to reference predictors and a set of loss functions. Calibration guarantees the trustworthiness of predictions by every decision-maker, allowing decision-making to be separated from predictions. Introduced in \citet{omni}, omnipredictor follows the same idea, requiring an omnipredictor to achieve a comparable guarantee with regard to a class of loss functions and a set of competing predictors. Techniques from the algorithmic fairness literature, e.g. \citet{mc, ma}, have been applied to achieve omniprediction in both online and batch settings \citep{omni, omni-regression, characterize-omni, oracle-omni, constrained}. While the classical guarantee usually learns an omnipredictor that competes with the hypothesis space of predictors, our decision loss evaluates a predictor with regard to the set of calibrated predictors close in $\distcal$.

\section{Preliminaries}\label{sec: prelim}

\paragraph*{Mathematical Notations.} We write $D_{X, Y}$ as the joint distribution between random variables $X$ and $Y$, and $X\sim D$ as random variable $X$ drawn from distribution $D$. Where it is obvious from the context, we write $\Pr[X = x]$ for the probability of a discrete random variable as well as the probabilistic density function of a continuous random variable. 

We consider a prediction problem of a binary state $\state\in \statesp = \{0, 1\}$. A predictor is specified by a joint distribution $D_{\ftplpred, \statesp}$ over the prediction $\ftplp$ and the state $\state$. Slightly abusing the notation, we also write a predictor as a random variable $\ftplpred$, omitting the state, where a realized prediction value is $\ftplp$.

Our privacy-based post-processing algorithm adds noise to make predictions differentially private. \Cref{def: differential privacy} defines a differentially private mechanism for predictions. 
\begin{definition}[Differential Privacy]\label{def: differential privacy}
    A mechanism $\dpmech$ is $(\dpe, \dpd)$-differentially private (DP) if for any two predictions $\ftlp, \ftlp'\in [0, 1]$:
    \begin{equation*}
    Pr[\dpmech(\ftlp) \in \mathcal{I}]\leq e^{\dpe\cdot |\ftlp - \ftlp'|}\cdot \Pr[\dpmech(\ftlp') \in \mathcal{I}] + \dpd.
    \end{equation*}
\end{definition}

We construct our privacy-based algorithm by adding truncated noise, where truncation guarantees predictions fall in the range of $[0, 1]$. The truncation of noise $Y$ works in the following way: given a prediction $\ftlp$, for random variable $Y$ with unbounded support, we draw $X\sim D_\epsilon(\ftlp)$ such that 
\begin{equation*}
    \Pr[\ftlp + X = \ftplp] = \frac{\Pr[\ftlp + Y = \ftplp]}{\Pr[\ftlp + Y\in [0, 1]]}.
\end{equation*}

\subsection{Predictions for Decision-Making}

A decision maker faces a decision problem $(A, \statesp, U)$:
\begin{itemize}
    \item the decision maker (DM) selects an action $a\in A$;
    \item a payoff-relevant state $\state\in \statesp$ is realized;
    \item DM receives payoff $U:A\times\statesp\to \reals$.
\end{itemize}

When assisted with a prediction, the best response $a^*$ maps a prediction to the action:
\begin{equation}
    a^*(\pred) = \argmax_{a\in A}\expect{\state\sim\pred}{U(a, \state)}.
\end{equation}

When the DM best responds, she assumes the state is drawn from $\pred$ and takes the action that maximizes the expected payoff. We define the best-responding payoff as a function $\score$ of the prediction and the state.
\begin{definition}[Scoring Rule from Decision]
    Given a decision problem $U$, the scoring rule induced from $U$ and belief $\ftplp\in \Delta(\statesp)$ is
    \begin{equation*}
        \score_U(\ftplp, \state) = U(a^*(\ftplp), \state)
    \end{equation*}
\end{definition}

Proper scoring rules characterize scoring rules induced from a decision problem.
\begin{definition}[Proper Score]
    A scoring rule $\score:[0, 1]\times \{0, 1\}\to \reals$ is proper if and only if
    \begin{equation*}
        \expect{\state\sim \ftplp}{\score(\ftplp, \state)}\geq \expect{\state\sim \ftplp}{\score(\ftplp', \state)}, \forall \ftplp'\in [0, 1].
    \end{equation*}
\end{definition}

\Cref{claim: proper score equal payoff} shows that the space of best-responding payoff is equivalent to the space of proper scoring rules. Throughout the paper, we will write the best-responding decision payoff as proper scoring rules. 

\begin{claim}[\citealt{kleinberg2023u, hu2024predict}]
\label{claim: proper score equal payoff}
    There exists a bijective mapping between a bounded proper scoring rules and scoring rule induced from a decision problem with bounded payoff:
    \begin{itemize}
        \item Given a decision problem $U(\cdot, \cdot)\in [0, 1]$, the induced scoring rule $\score_U(\cdot, \cdot)\in [0, 1]$ is a proper scoring rule.
        \item Given a proper scoring rule $\score(\cdot, \cdot)\in [0, 1]$, there exists a decision problem $U(\cdot, \cdot)\in [0, 1]$ that induces $\score$. 
    \end{itemize}
\end{claim}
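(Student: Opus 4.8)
The plan is to prove the two bullets as separate short direct arguments and then observe that together they show the two families of scoring rules coincide, so that the claimed bijection is simply the identity. For the first bullet, fix a decision problem with $U(\cdot,\cdot)\in[0,1]$ and induced rule $\score_U$. For any belief $\ftplp$ I would write $\expect{\state\sim\ftplp}{\score_U(\ftplp,\state)}=\expect{\state\sim\ftplp}{U(a^*(\ftplp),\state)}=\max_{a\in A}\expect{\state\sim\ftplp}{U(a,\state)}$, the last equality being the definition of $a^*(\ftplp)$ as a maximizer. For any alternative report $\ftplp'$, the action $a^*(\ftplp')$ is a particular element of $A$, so $\expect{\state\sim\ftplp}{\score_U(\ftplp',\state)}=\expect{\state\sim\ftplp}{U(a^*(\ftplp'),\state)}\le\max_{a\in A}\expect{\state\sim\ftplp}{U(a,\state)}=\expect{\state\sim\ftplp}{\score_U(\ftplp,\state)}$, which is exactly the properness inequality; and $\score_U(\ftplp,\state)=U(a^*(\ftplp),\state)\in[0,1]$ gives boundedness.

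For the second bullet, given a proper $\score$ with values in $[0,1]$, I would exhibit the canonical ``report-your-belief'' decision problem: action set $A=[0,1]$ and payoff $U(a,\state)=\score(a,\state)\in[0,1]$. Then $a^*(\ftplp)=\argmax_{a\in[0,1]}\expect{\state\sim\ftplp}{\score(a,\state)}$, and properness says precisely that the objective $a\mapsto(1-\ftplp)\score(a,0)+\ftplp\score(a,1)$ attains its maximum at $a=\ftplp$. In particular the maximum is attained, and with the convention that $a^*(\ftplp)=\ftplp$ whenever $\ftplp$ lies in the $\argmax$ we obtain $\score_U(\ftplp,\state)=U(\ftplp,\state)=\score(\ftplp,\state)$ for every $\ftplp,\state$, so this $U$ induces $\score$.

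Together the two bullets show that the induced scoring rules of bounded-payoff decision problems are exactly the bounded proper scoring rules, so the identity is the asserted bijection; this is a bijection at the level of scoring rules, not of decision problems, since the canonical construction already shows many problems induce the same rule. I expect no substantive obstacle here: the only point that needs care is the tie-breaking hidden in the notation $a^*=\argmax$, which must be fixed so that in the second direction one may select $\ftplp$ itself out of a possibly set-valued $\argmax$ — legitimate precisely because properness certifies that $\ftplp$ is a maximizer. I would make this convention explicit at the start; everything else is routine.
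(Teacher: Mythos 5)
Your proposal is correct, and it is the standard argument: the paper itself does not prove this claim but defers to the cited works, whose proofs proceed exactly as you do --- the forward direction by observing that reporting one's true belief maximizes expected payoff by definition of the best response, and the converse via the ``report-your-belief'' decision problem $A=[0,1]$, $U(a,\state)=\score(a,\state)$, where properness certifies that the truthful report lies in the $\argmax$. Your explicit handling of the tie-breaking convention in $a^*$ (selecting $\ftplp$ itself from a possibly set-valued $\argmax$) is the right point to flag, and your reading of the ``bijection'' as the identity at the level of scoring-rule classes is the intended interpretation.
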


Given a set of reference predictors $\mathcal{B}$ and a set of proper scoring rules $\mathcal{S}$, we define the decision loss with regard to the set of reference predictors. 

\begin{definition}[Decision Loss]
    Given a set of reference predictors $\mathcal{B}$ and a set of proper scoring rules $\mathcal{S}$, the decision loss of a predictor $\ftplpred$ is 
    \begin{align*}
        \ploss(\ftplpred; \mathcal{B}) = \max_{\score\in \mathcal{S}, \btlpred\in \mathcal{B}} \expect{\ftplp, \btlp, \state\sim D_{\ftplpred, \btlpred, \statesp}}{\score(\btlp, \state) - \score(\ftplp, \state)}.
    \end{align*}
\end{definition}
Throughout the paper, we consider decision loss with regard to the set of all bounded proper scoring rules $\mathcal{S} = \{\score(\cdot, \cdot)\in [0, 1]\}$, i.e.\ all decision problems with bounded payoff. 

Our decision loss is closely related to omniprediction \citep{omni}. A predictor with $\epsilon$ decision loss is an $\epsilon$ omnipredictor with regard to reference predictors in $\mathcal{B}$ and the set of scoring rules in $\mathcal{S}$. 

\begin{definition}[Omniprediction]
\label{def: omniprediction}
    Given a set of reference predictors $\mathcal{B}$ and a set of proper scoring rules $\mathcal{S}$, a predictor is an $\epsilon$-omnipredictor with regard to $\mathcal{B}$ and $\mathcal{S}$ if
    \begin{align*}
        \expect{(\ftplp, \state)\sim D_{\ftplpred, \statesp}}{\score(\ftplp, \state)}\geq \expect{(\btlp,\state)\sim D_{\btlpred,\statesp}}{\score(\btlp, \state)} - \epsilon, \qquad&\forall \btlpred\in \mathcal{B}, \score\in \mathcal{S}.
    \end{align*}
\end{definition}

\subsection{Measures of Calibration Error}
\label{sec: prelim calibration error}
In this section, we define different calibration error metrics that are relevant to the paper. The definitions of error metrics follow the definitions of perfect calibration. We denote the Bayesian posterior of prediction values as $\hat{\ftplp} = \Pr[\state = 1 | \ftplpred = \ftplp]$.

\begin{definition}
    [Perfect Calibration]
    A predictor $\ftplpred$ is perfectly calibrated if $\ftplp = \hat{\ftplp}$ for any $\ftplp\in [0, 1]$. 
\end{definition}

We introduce relevant calibration errors to the paper by two categories: calibration error for decision-making and calibration error for machine learning. 

\subsubsection{Calibration Errors for Decision-Making}

The canonical calibration error metric is $\ECE$, the averaged bias in predictions.

\begin{definition}
    [Expected Calibration Error, $\ECE$]
    Given predictor $\ftplpred$, the expected calibration error is 
    \begin{equation*}
        \ECE(\ftplpred) = \expect{\ftplp\sim \ftplpred}{\big|\ftplp - \hat{\ftplp}\big|}.
    \end{equation*}
\end{definition}

The swap regret of a decision-maker is closely related to predictions being calibrated. Swap regret minimizers are special cases of omnipredictors where the set of reference predictors $\mathcal{B}$ is the set of post-processed predictors, i.e.\ by applying a mapping $\sigma: [0, 1]\to [0, 1]$ to the orginal predictions $\ftplp$. 
\begin{definition}
    [Swap Regret]
    Given a decision problem with proper scoring rule $\score$, a predictor $\ftplpred$, the swap regret for the decision-maker is 
    \begin{equation*}
        \swapS_{\score}(\ftplpred) = \max_{\sigma: [0, 1]\to [0, 1]}\expect{(\pred, \state)\sim D_{\pred, \state}}{\score\left(\sigma(\pred), \state\right) - \score(\pred, \state)}.
    \end{equation*}
\end{definition}

\Cref{prop: swap regret equals cfdl} shows that the swap regret equals the payoff improvement from calibrating a predictor. 

\begin{proposition}
\label{prop: swap regret equals cfdl}
    Given a decision problem with proper scoring rule $\score$, a predictor $\ftplpred$, the mapping $\sigma^*(\ftplp) = \Pr[\state|\ftplp]$ is the swap regret maximizing mapping, i.e.\ 
    \begin{equation*}
        \sigma^* \in \argmax_{\sigma: [0, 1]\to [0, 1]}\expect{(\pred, \state)\sim D_{\pred, \state}}{\score\left(\sigma(\pred), \state\right) - \score(\pred, \state)}.
    \end{equation*}the swap regret equals the payoff improvement from calibrating the predictor: $\sigma^*(\ftplp) = \Pr[\state = 1 | \ftplp]$.
\end{proposition}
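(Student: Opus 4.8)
The plan is to show that for a proper scoring rule $\score$, the swap map that maximizes the decision-maker's expected payoff is exactly the one that replaces each prediction value $\ftplp$ with its Bayesian posterior $\sigma^*(\ftplp) = \Pr[\state = 1 \mid \ftplpred = \ftplp]$. The key observation is that the swap-regret maximization decouples across prediction values: for each fixed value $\ftplp$ in the support of $\ftplpred$, the choice of $\sigma(\ftplp)$ affects only the term $\expect{\state}{\score(\sigma(\ftplp), \state) \mid \ftplpred = \ftplp}$, so we may optimize $\sigma(\ftplp)$ pointwise.

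First I would write the objective, conditioning on the event $\{\ftplpred = \ftplp\}$ and using the tower rule:
\begin{align*}
    \expect{(\pred, \state)\sim D_{\pred, \state}}{\score(\sigma(\pred), \state) - \score(\pred, \state)}
    = \expect{\ftplp \sim \ftplpred}{\ \expect{\state}{\score(\sigma(\ftplp), \state) - \score(\ftplp, \state) \,\big|\, \ftplpred = \ftplp}}.
\end{align*}
Since each inner expectation depends on $\sigma$ only through the single value $\sigma(\ftplp)$, and since $\sigma$ is unconstrained (any function $[0,1]\to[0,1]$ is allowed), the maximum over $\sigma$ equals the pointwise maximum: for each $\ftplp$, choose $v \in [0,1]$ to maximize $\expect{\state}{\score(v, \state) \mid \ftplpred = \ftplp}$. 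Now, conditioned on $\ftplpred = \ftplp$, the state $\state$ is Bernoulli with mean $\hat{\ftplp} = \Pr[\state = 1 \mid \ftplpred = \ftplp]$, so $\expect{\state}{\score(v, \state) \mid \ftplpred = \ftplp} = \expect{\state \sim \hat\ftplp}{\score(v, \state)}$. By the definition of a proper scoring rule (applied with the belief $\hat\ftplp$), this quantity is maximized over $v$ at $v = \hat\ftplp$, which is precisely $\sigma^*(\ftplp)$. Substituting back, the maximal value is $\expect{\ftplp\sim\ftplpred}{\expect{\state\sim\hat\ftplp}{\score(\hat\ftplp,\state) - \score(\ftplp,\state)}}$, i.e.\ the payoff improvement obtained by calibrating the predictor via $\ftplp \mapsto \hat\ftplp$.

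The only subtlety — and the closest thing to an obstacle — is handling ties: if $\score$ is proper but not strictly proper, the pointwise maximizer need not be unique, so I should state that $\sigma^*$ is \emph{a} maximizer (as the proposition does, via $\in \argmax$) rather than the unique one. A second minor point is a measurability/well-definedness check that $\sigma^*$, as a function of the prediction value, is a legitimate swap map; this is immediate since $\hat\ftplp$ is by construction a deterministic function of $\ftplp$. No heavy machinery is required beyond the definition of properness and the tower rule.
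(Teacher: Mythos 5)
Your proof is correct: conditioning on the prediction value via the tower rule and applying the definition of properness with belief $\hat{\ftplp}=\Pr[\state=1\mid \ftplpred=\ftplp]$ pointwise is exactly the standard argument, and your handling of ties (claiming only membership in the $\argmax$) and of well-definedness of $\sigma^*$ is appropriate. The paper itself states this proposition without an explicit proof, treating it as a known fact from the cited literature, and your argument is essentially the canonical one that justification would use.
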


\Cref{prop: no swap regret equals calibration} thus follows by the definition of calibration. 

\begin{proposition}[Foster and Vohra,  \citealp{foster1998asymptotic}]
\label{prop: no swap regret equals calibration}
    A predictor is calibrated if and only if for any decision problem $U$, the decision-maker has no swap regret. 
\end{proposition}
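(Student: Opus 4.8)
The plan is to derive both directions directly from \Cref{prop: swap regret equals cfdl}, which identifies the swap-regret-maximizing post-processing map as $\sigma^*(\ftplp) = \hat{\ftplp}$. Two preliminary observations make the reduction clean. First, the identity map $\sigma(\ftplp) = \ftplp$ always attains value $0$ in the definition of $\swapS_{\score}$, so $\swapS_{\score}(\ftplpred) \ge 0$ for every proper scoring rule, and hence ``no swap regret'' is equivalent to $\swapS_{\score}(\ftplpred) = 0$. Second, conditioned on $\ftplpred = \ftplp$ the state is Bernoulli with mean $\hat{\ftplp}$, so substituting the optimal $\sigma^*$ from \Cref{prop: swap regret equals cfdl} yields
\begin{equation*}
    \swapS_{\score}(\ftplpred) = \expect{\ftplp\sim\ftplpred}{\expect{\state\sim\hat{\ftplp}}{\score(\hat{\ftplp},\state) - \score(\ftplp,\state)}},
\end{equation*}
and the inner expectation is nonnegative for every $\ftplp$ by properness of $\score$.

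For the ``only if'' direction, suppose $\ftplpred$ is calibrated, so $\hat{\ftplp} = \ftplp$ almost surely. Then $\sigma^*$ is (almost surely) the identity map, the inner expectation in the display vanishes identically, and $\swapS_{\score}(\ftplpred) = 0$ for every decision problem $U$; the decision-maker has no swap regret.

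For the ``if'' direction I argue the contrapositive. If $\ftplpred$ is not calibrated, the event $\{\ftplp \ne \hat{\ftplp}\}$ has positive probability under $\ftplp \sim \ftplpred$. Choose a bounded \emph{strictly} proper scoring rule, e.g.\ the Brier score $\score(\ftplp,\state) = 1 - (\ftplp - \state)^2 \in [0,1]$; by \Cref{claim: proper score equal payoff} it is the best-responding payoff of some decision problem $U$ with payoffs in $[0,1]$. Strict properness makes the inner expectation $\expect{\state\sim\hat{\ftplp}}{\score(\hat{\ftplp},\state) - \score(\ftplp,\state)}$ strictly positive exactly when $\ftplp \ne \hat{\ftplp}$ and zero otherwise; a nonnegative quantity that is strictly positive on a positive-probability set has strictly positive expectation, so by the display above $\swapS_{\score}(\ftplpred) > 0$. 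Thus the decision-maker facing $U$ has strictly positive swap regret.

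The substantive step is the ``if'' direction: one must exhibit a single bounded decision problem that witnesses \emph{any} miscalibration. The key is that a strictly proper scoring rule makes the pointwise calibration gain strictly positive precisely on the miscalibrated prediction values, and \Cref{claim: proper score equal payoff} converts it into a bounded decision problem; the remaining point — that a nonnegative function which is positive on a positive-measure set has positive integral — is elementary. The ``only if'' direction and the swap-regret reformulation above are immediate from \Cref{prop: swap regret equals cfdl}.
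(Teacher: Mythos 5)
Your proof is correct and follows essentially the same route the paper sketches: the paper derives this proposition directly from \Cref{prop: swap regret equals cfdl} (the swap-regret-optimal map is $\sigma^*(\ftplp)=\hat{\ftplp}$) together with the definition of calibration, exactly as you do. Your explicit witness for the ``if'' direction --- a bounded strictly proper scoring rule such as the Brier score, converted to a decision problem via \Cref{claim: proper score equal payoff} --- is a reasonable way to fill in the detail the paper leaves implicit, and it is sound.
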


Motivated by \Cref{prop: no swap regret equals calibration}, \citet{hu2024predict} define Calibration Decision Loss ($\cdl$), the worst-case decision loss induced by miscalibration, with worst-case over all bounded proper scoring rules. Instead of decision loss that compares with a set of fixed reference predictor, $\cdl$ calculates the decision loss where the reference is the calibrated correspondence of the predictor to be evaluated. 

\begin{definition}[Calibration Decision Loss, $\cdl$]
    For predictor $\ftplpred$, the Calibration Decision Loss is defined as 
    \begin{equation*}
        \cdl(\ftplpred) = \max_{\text{proper }\score(\cdot, \cdot)\in [0, 1]}\swapS_\score(\ftplpred),
    \end{equation*}
    where the maximum is taken over all bounded proper scoring rules. 
\end{definition}

\citet{kleinberg2023u} shows that $\cdl$ is upperbounded by $\ECE$.

\begin{lemma}[\citealt{kleinberg2023u}]
\label{lem: cdl bounded by ece}
     For predictor $\ftplpred$, $\cdl$ is upperbounded by $\ECE$, i.e.\ 
$
        \cdl(\ftplpred) \leq \ECE(\ftplpred).
$
\end{lemma}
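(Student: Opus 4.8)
The plan is to reduce $\cdl$ to a sum of local terms, one per prediction value, identify each local term as a Bregman divergence of the convex potential of the scoring rule, and bound that divergence by the local calibration gap $|\ftplp-\hat\ftplp|$ using that the scoring rule is bounded in $[0,1]$. Concretely, fix an arbitrary bounded proper scoring rule $\score$. By \Cref{prop: swap regret equals cfdl} the swap-regret-maximizing remapping is $\sigma^*(\ftplp)=\hat\ftplp=\Pr[\state=1\mid\ftplpred=\ftplp]$, so
\[
\swapS_\score(\ftplpred)=\expect{(\ftplp,\state)\sim D_{\ftplpred,\statesp}}{\score(\hat\ftplp,\state)-\score(\ftplp,\state)}=\expect{\ftplp\sim\ftplpred}{\ \expect{\state\sim\hat\ftplp}{\score(\hat\ftplp,\state)-\score(\ftplp,\state)}\ },
\]
the last step conditioning on the realized prediction $\ftplp$ and using that $\state$ given $\ftplpred=\ftplp$ is $\mathrm{Bernoulli}(\hat\ftplp)$.

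The crux is the pointwise inequality $\expect{\state\sim\hat\ftplp}{\score(\hat\ftplp,\state)-\score(\ftplp,\state)}\le|\ftplp-\hat\ftplp|$ for every $\ftplp$. I would prove it through the Savage representation of proper scoring rules: since $\score$ is proper, the potential $G(q):=\expect{\state\sim q}{\score(q,\state)}$ is the upper envelope of the affine maps $q\mapsto\expect{\state\sim q}{\score(r,\state)}$, hence convex, and $\score(q,1)=G(q)+(1-q)g(q)$, $\score(q,0)=G(q)-q\,g(q)$ for a subgradient $g(q)\in\partial G(q)$; substituting shows that $\expect{\state\sim\hat\ftplp}{\score(\hat\ftplp,\state)-\score(\ftplp,\state)}$ equals the Bregman divergence $G(\hat\ftplp)-G(\ftplp)-g(\ftplp)(\hat\ftplp-\ftplp)$. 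Assuming $\hat\ftplp>\ftplp$ (the other case is symmetric), this divergence equals $\int_{\ftplp}^{\hat\ftplp}\big(g(t)-g(\ftplp)\big)\,dt$, which is nonnegative by monotonicity of $g$ and at most $\big(g(\hat\ftplp)-g(\ftplp)\big)(\hat\ftplp-\ftplp)$; the constraints $\score(\cdot,0),\score(\cdot,1)\in[0,1]$ translate into bounds on the slopes $g(q)=\score(q,1)-\score(q,0)$ that keep the factor multiplying $(\hat\ftplp-\ftplp)$ at most one. A more elementary route avoiding the potential is to write $\expect{\state\sim\hat\ftplp}{\score(\hat\ftplp,\state)-\score(\ftplp,\state)}$ as $\expect{\state\sim\ftplp}{\score(\hat\ftplp,\state)-\score(\ftplp,\state)}$, which is $\le 0$ by properness at $\ftplp$, plus the two ``change-of-state-law'' corrections $(\hat\ftplp-\ftplp)\big(\score(\hat\ftplp,1)-\score(\hat\ftplp,0)\big)$ and $-(\hat\ftplp-\ftplp)\big(\score(\ftplp,1)-\score(\ftplp,0)\big)$, each controlled by $\score(\cdot,\cdot)\in[0,1]$.

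Taking the expectation over $\ftplp\sim\ftplpred$ gives $\swapS_\score(\ftplpred)\le\expect{\ftplp\sim\ftplpred}{|\ftplp-\hat\ftplp|}=\ECE(\ftplpred)$, and since the right-hand side does not depend on $\score$, maximizing the left-hand side over all bounded proper scoring rules yields $\cdl(\ftplpred)\le\ECE(\ftplpred)$.

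The main obstacle is the pointwise inequality: passing from the second-order Bregman divergence of the potential down to the first-order distance $|\ftplp-\hat\ftplp|$, with the right constant. This is precisely where boundedness of the scoring rule is used and where all the content lies --- it caps how fast the potential's slope can move and hence keeps the divergence linear-order, not merely quadratic-order, in the calibration gap. Everything else --- the reduction via \Cref{prop: swap regret equals cfdl} and the outer maximization over scoring rules --- is bookkeeping.
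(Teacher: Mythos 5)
Your overall architecture (reduce $\cdl$ via \Cref{prop: swap regret equals cfdl} to an expectation of the per-value gap $\expect{\state\sim\hat{\ftplp}}{\score(\hat{\ftplp},\state)-\score(\ftplp,\state)}$, then bound that gap by the Savage/Bregman representation) is the standard route behind the cited result, and the paper itself gives no proof to compare against since it imports the lemma from \citet{kleinberg2023u}. The genuine gap is exactly at the step you flag as the crux, and it is not merely unproven but false as you state it. With $\score(\cdot,\cdot)\in[0,1]$ the subgradient is $g(x)=\score(x,1)-\score(x,0)\in[-1,1]$, so both of your routes deliver only $\expect{\state\sim\hat{\ftplp}}{\score(\hat{\ftplp},\state)-\score(\ftplp,\state)}\le\bigl(g(\hat{\ftplp})-g(\ftplp)\bigr)(\hat{\ftplp}-\ftplp)\le 2\,|\hat{\ftplp}-\ftplp|$; in the ``elementary route'' each of the two change-of-law corrections is bounded by $|\hat{\ftplp}-\ftplp|$, again totaling $2|\hat{\ftplp}-\ftplp|$. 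The factor $2$ is tight pointwise: take the proper, $[0,1]$-bounded threshold rule $\score(x,\state)=\ind{\state=1}$ for $x>\tfrac12$ and $\score(x,\state)=\ind{\state=0}$ for $x\le\tfrac12$ (the umbrella problem), with $\ftplp=\tfrac12-\eta$ and $\hat{\ftplp}=0.9$; the left-hand side is $0.8$ while $|\ftplp-\hat{\ftplp}|\approx 0.4$. So the claimed pointwise inequality with constant $1$ cannot hold, and since one can make the predictor a single atom at $\tfrac12-\eta$ whose empirical posterior is $0.9$, averaging over $\ftplp$ does not rescue the constant either: that predictor has $\ECE\approx 0.4$ while the swap regret under this single scoring rule is $0.8$.

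Consequently, what your argument actually establishes is $\cdl(\ftplpred)\le 2\,\ECE(\ftplpred)$, which suffices for every asymptotic use of \Cref{lem: cdl bounded by ece} in this paper but is not the stated inequality. To get the lemma literally as written you would need either a different normalization of the scoring rules (the constant depends on whether scores live in $[0,1]$, $[-1,1]$, etc.) or an argument that goes beyond the per-value Bregman bound; no such argument is sketched, and the counterexample above shows none exists under the paper's normalization at the pointwise level. Your closing remark that boundedness ``keeps the factor multiplying $(\hat{\ftplp}-\ftplp)$ at most one'' is precisely the assertion that fails: boundedness caps each slope by $1$ in absolute value, hence the slope difference by $2$.
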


\subsubsection{Calibration Errors for Machine Learning}

The calibration errors in this section are continuos in prediction space. The calibration error metrics relevant to the paper are the smooth calibration error and the distance to calibration ($\distcal$).

\begin{definition}[Smooth Calibration Error]
    Given predictor $\ftplpred$, the smooth calibration error takes the supremum over the set $\Sigma$ of $1$-Lipschitz functions:
    \begin{equation*}
        \smooth(\ftplpred) = \sup_{\sigma\in \Sigma}\expect{(\ftplp, \state)\sim D_{\ftplpred, \statesp}}{\sigma(\ftplp)\cdot(\ftplp-\state)}
    \end{equation*}
\end{definition}
Note that without the constraint that $\sigma$ is $1$-Lipschitz, $\smooth$ is the same as $\ECE$. To see this, note that taking $\sigma = 1$ when $\ftplp - \hat{\ftplp}\geq 0$ and $\sigma=-1$ when $\ftplp-\hat{\ftplp}<0$ gives the same definition as $\ECE$.

Given a predictor $\ftplp$, the distance to calibration finds a calibrated predictor $\btlp$ with a coupling $D_{\ftplp, \btlp, \state}$ with the given predictor $\ftplp$, such that $\btlp$ has the smallest distance from $\ftplp$. The distance is the $\ell_1$ distance between predictions, as defined in \Cref{def: distance between predictors}.
\begin{definition}[Distance between Predictors]
\label{def: distance between predictors}
    Given predictors $\btlpred$ and $\ftplpred$, the distance between the predictors is defined as
    \begin{equation*}
        \distance(\ftplpred, \btlpred) = \expect{D_{\ftplpred, \btlpred}}{|\ftplpred - \btlpred|}.
    \end{equation*}
\end{definition}

\Cref{def: dtc} defines the distance to calibration. 

\begin{definition}[Distance to Calibration, $\distcal$]
\label{def: dtc}
    For predictor $\ftplpred$, the distance to calibration is
    \begin{equation*}
        \distcal(\ftplpred) = \min_{\btlpred \text{ is calibrated}}\distance(\ftplpred, \btlpred).
    \end{equation*}
\end{definition}

The smooth calibration error is linearly related to the distance to calibration. While in our paper, we mainly focus on $\distcal$, our results also apply to the smooth calibration error $\smooth$. 

\begin{lemma}[\citet{utc}]
    Given any predictor $\ftplpred$,
    \begin{equation*}
        \frac{1}{2}\distcal(\ftplpred)\leq \smooth(\ftplpred) \leq \distcal(\ftplpred).
    \end{equation*}
\end{lemma}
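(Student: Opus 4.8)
The plan is to treat both inequalities as two sides of a single linear-programming duality. Observe that $\distcal(\ftplpred)$ is, by \Cref{def: dtc} and \Cref{def: distance between predictors}, the value of an (infinite-dimensional) linear program: the decision variable is a coupling $D_{\ftplpred,\btlpred,\statesp}$ whose $(\ftplpred,\statesp)$-marginal is the given predictor; the constraints express that $\btlpred$ is calibrated, i.e.\ $\expect{}{\ind{\btlpred = u}\,(\state-\btlpred)}=0$ for every value $u\in[0,1]$; and the objective is $\expect{}{|\ftplpred-\btlpred|}$, which I further linearize by writing $|x|=\sup_{|c|\le 1}cx$, so the objective itself becomes a supremum over a bounded "sign" function $c(\ftplp,\btlp)\in[-1,1]$. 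Taking the Lagrangian dual, the multipliers for the calibration constraints form a function $\phi:[0,1]\to\R$, and when the inner optimization over the coupling is carried out — using that the $\ftplpred$-marginal is pinned down while the $\btlpred$-marginal is free — the pair $(\phi,c)$ is forced to assemble into a single test function $w$ on prediction values, subject to a Lipschitz-type condition and a boundedness condition, with the dual objective collapsing to $\expect{(\ftplp,\state)\sim D_{\ftplpred,\statesp}}{w(\ftplp)\,(\ftplp-\state)}$.

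From this dual program one reads off the two directions. First, every $1$-Lipschitz $w:[0,1]\to[-1,1]$ is dual-feasible, so the dual value is at least $\smooth(\ftplpred)$; by weak duality the dual value is at most the primal value $\distcal(\ftplpred)$, giving $\smooth(\ftplpred)\le\distcal(\ftplpred)$. Second, one shows conversely that any dual-feasible $w$ can be replaced — at a loss of at most a factor $2$ in the dual objective — by a $1$-Lipschitz $[-1,1]$-valued function; morally, the boundedness and the Lipschitz requirements are each responsible for "half" of dual feasibility, and combining the two corresponding test functions and rescaling costs the factor $2$. Hence the dual value is at most $2\,\smooth(\ftplpred)$. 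Invoking strong duality, $\distcal(\ftplpred)$ equals the dual value, which is sandwiched in $[\smooth(\ftplpred),\,2\,\smooth(\ftplpred)]$, and this is exactly the claimed $\tfrac12\distcal(\ftplpred)\le\smooth(\ftplpred)\le\distcal(\ftplpred)$. (Equivalently, the reverse direction has a constructive reading: given $\smooth(\ftplpred)=\epsilon$, one builds a calibrated $\btlpred$ by transporting prediction mass between over- and under-biased regions with total movement $O(\epsilon)$, and the $1$-Lipschitz test functions are precisely the dual certificates obstructing any cheaper transport.)

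The main obstacle is justifying strong duality, since the program has a continuum of calibration constraints and a coupling supported on $[0,1]^3$. I would handle this by discretizing $[0,1]$ into a $1/n$-net, where the resulting finite linear program has strong duality automatically, proving the sandwich bound there, and then letting $n\to\infty$; this requires verifying that rounding predictions to the net perturbs both $\smooth$ and $\distcal$ by only $O(1/n)$ — which is where the Lipschitz/continuity structure of both error metrics is used — and that near-optimal dual solutions stay bounded so the limit exists. An alternative is to apply Sion's minimax theorem directly on the space of probability measures with the weak-$*$ topology, which again reduces to checking compactness and the bilinearity/continuity of the payoff. I expect this limiting (or compactness) step, rather than the algebra of the dualization, to be the delicate part; a secondary subtlety is pinning the constant to exactly $2$, which needs the sharp form of the function-combining estimate used in the reverse direction.
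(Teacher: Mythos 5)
You should first note that the paper does not prove this lemma at all: it is imported verbatim from \citet{utc}, so there is no in-paper argument to compare against, and your proposal has to stand or fall on its own. At a high level your plan does mirror the strategy of the cited source (cast $\distcal$ as an infinite LP over couplings with calibration constraints, dualize to Lipschitz-type test functions, and pay a constant factor when converting certificates), and the primal LP you describe is essentially the one this paper itself writes down in the proof of \Cref{prop: decision loss tight}, whose dual constraints have the form $r(\ftlp,\state)\le |\btlp-\ftlp|+(\state-\btlp)s(\btlp)$.

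However, as written the proposal has genuine gaps in both directions. For the direction $\smooth\le\distcal$ you assert that ``every $1$-Lipschitz $w:[0,1]\to[-1,1]$ is dual-feasible,'' but with the dual constraints above the natural embedding ($r(\ftlp,\state)=w(\ftlp)(\ftlp-\state)$, $s(\btlp)=-w(\btlp)$) requires the pointwise inequality $w(\ftlp)(\ftlp-\state)-w(\btlp)(\btlp-\state)\le|\ftlp-\btlp|$, which is false: $x\mapsto w(x)(x-\state)$ is in general only $2$-Lipschitz (take $w(x)=x$, $\ftlp=1$, $\btlp=\tfrac12$, $\state=0$, giving $\tfrac34>\tfrac12$). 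So this direction does not ``come for free'' from weak duality with constant $1$; it is exactly where the constant in the statement is delicate, and your argument as given would only yield a factor-$2$ version. In the other direction, the entire content of the bound $\tfrac12\distcal\le\smooth$ is the claim that an arbitrary dual-feasible certificate can be replaced by a bounded $1$-Lipschitz one at a loss of at most $2$; you offer only the heuristic that boundedness and Lipschitzness are ``each responsible for half,'' which is not an argument, and the strong-duality/limiting step you correctly identify as delicate is likewise deferred. In short, the proposal is a plausible outline of the approach taken in \citet{utc}, but neither direction is actually established: one key step is asserted and unproven, and the other fails as stated and needs the more careful argument (and bookkeeping of constants) from the cited work.
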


\subsection{Online and Batch Post-Processing Algorithm}

We design a post-processing algorithm for both the online setting and the batch setting, given predictions $\ftlp_1\dots \ftlp_T$ from a predictor $\ftlpred$. The post-processing algorithm knows the parameter $\distcal(\ftlpred) = \epsilon$. 

\paragraph*{The Online Setting} In the online setting, the goal of a post-processing algorithm is to generate trustworthy predictions $\ftplvec = (\ftplp_1, \dots, \ftplp_T)$ with low $\ECE$ or $\cdl$ given a sequence of predictions with low $\distcal$. At the end of $T$ rounds, the predictor is evaluated by a calibration error against the sequence of states $\vstate = (\state_1, \dots, \state_T)$. We define the joint distribution of $D_{\ftplpred, \statesp}$ in definitions in \Cref{sec: prelim calibration error} as the empirical distribution of $(\ftplp_t, \state_t)$ over $T$ rounds, which gives equivalent definitions of online calibration errors in the literature. We will write the calibration error of online predictors as a function of $\vpred$ and $\vstate$. 

In round $t\in [T]$, the adversary selects a prediction $\ftlp_t$. The post-processing algorithm $f = (f_t)_{t\in [T]}$ makes a (randomized) prediction according to $\alg_t$ given $\ftlp_t$ and the history of $(\ftlp_k, \ftplp_k)_{k\in [t-1]}$ but not the states\footnote{The algorithm in our paper only depends on $\ftlp_t$. This dependence on history only reinforces the definition.}. The adversary then reveals the state $\state_t$. The adversary knows the full history of interactions, i.e.\ $(\ftlp_k, \ftplp_k, \state_k)_{k\in [t-1]}$. When selecting the prediction $\ftlp_t$, the adversary faces the constraint that $\distcal(\ftlpred) = \epsilon$ at the end of $T$ rounds. 

Note that the restriction of the algorithm not knowing the state is slightly different from the classic online calibration \citep{foster1998asymptotic}. This restriction effectively excludes a post-processing algorithm that ignores the predictions $\ftlp$ and directly implements a calibrated predictor. 

\paragraph*{The Batch Setting} In the batch setting, the predictor $\ftlpred$ is specified by the joint distribution $D_{\ftlpred, \statesp}$ as introduced in the beginning of \Cref{sec: prelim}. We write $\ftlpred^T$ as the joint distribution of $T$ independent and identical draws of predictions from $\ftlpred$.  Given $T$ realizations of predictions $\ftlvec = (\ftlp_1, \dots, \ftlp_T)\sim \ftlpred^T$, the post-processing algorithm $\alg: [0, 1]^T\to \Delta([0, 1]^T)$ outputs (randomized) predictions $\ftplvec = (\ftplp_1, \dots, \ftplp_T)$. Since $\alg$ is only allowed to depend on predictions $\ftlvec$ not the states, it is without loss to write $\alg_{\ftlvec}(\ftlp):[0, 1]\to \Delta([0, 1])$, assuming the output follows the same distribution fixing samples $\ftlvec$. Then the states $\vstate = (\state_1, \dots, \state_T)$ is realized. In addition to the calibration errors as defined in \Cref{sec: prelim calibration error}, the algorithm is evaluated by the performance for omniprediction as in \Cref{def: omniprediction}, where the set of reference predictors $\mathcal{B}$ is the set of predictors with low $\distcal$ to $\ftlpred$.

\section{Smoothed Predictions for the Batch Setting}

\label{sec: alg batch setting}

In this section, we will focus on post-processing in the batch setting where $\ftlp$ is stochastically generated. Given a prediction $\ftlp\sim \ftlpred$, our privacy-based post-processing algorithm simply adds noise to $\ftlp$. We write the resulting predictor as $\ftplpred$, with randomness from both $\ftlpred$ and the privacy-based algorithm $\dpmech$. Note that in the batch setting where predictions and states are stochastically drawn, the privacy-based post-processing algorithm optimizes for the expected error, where the expectation is taken with randomness from both the prediction, the state, and the post-processing algorithm.

\begin{itemize}
    \item \textbf{Input}: prediction $\ftlp\sim \ftlpred$, parameter $\epsilon$ such that $\distcal(\ftlpred)\leq \epsilon$, DP mechanism $\dpmech$.
    \item \textbf{Output}: Prediction $\ftplp\sim \dpmech(\ftlp)$
\end{itemize}




\Cref{thm: batch main} characterizes the decision loss of $\ftplpred$ with regard to all proper scoring rules and all predictors that are $\epsilon$ close to $\ftlpred$.

\begin{theorem}
\label{thm: batch main}
    Suppose mechanism $\dpmech$ is $(\dpe, \dpd)$-differentially private, then the output predictor $\ftplpred$ has at most $C$ decision loss with regard to all proper scoring rules $\mathcal{S}$ and the set of calibrated predictors $\mathcal{B}$ such that any $\btlpred\in \mathcal{B}$ is $\epsilon$-close to $\ftlpred$, i.e.\ $\distance(\ftlpred, \btlpred) \leq \epsilon$. The bound $C$ is the following
    \begin{equation*}
        C \leq 2\max_{\ftlp\in [0, 1]}\expect{}{|\dpmech(\ftlp) - \ftlp|} + 4 \left(1-e^{-\dpe\epsilon}+\dpd \right).
    \end{equation*}

Moreover, $\ECE$ of $\ftplpred$ has the same bound.
\end{theorem}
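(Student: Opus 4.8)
The plan is to fix a bounded proper scoring rule $\score\in[0,1]$ and a calibrated reference $\btlpred\in\mathcal{B}$ — together with the coupling in which $(\ftlpred,\statesp)$ is the original joint, $\ftplpred=\dpmech(\ftlpred)$ is the noised output, the $(\btlpred,\statesp)$‑marginal is calibrated, and $\expect{}{|\ftlpred-\btlpred|}\le\epsilon$ — bound $\expect{}{\score(\btlp,\state)-\score(\ftplp,\state)}$, and take the supremum over $\score,\btlpred$ at the end. The hard part is that $\score(\cdot,\state)$ is only monotone, not Lipschitz: it may jump, so adding noise to $\ftlpred$ does not obviously reduce the gap to a calibrated predictor, and the usual ``smoothing'' intuition fails. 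To get around this I will introduce the auxiliary predictor $\dpmech(\btlpred)$ — the same DP mechanism applied to the calibrated reference with independent fresh noise — and split
\[
\expect{}{\score(\btlp,\state)-\score(\ftplp,\state)}
= \underbrace{\expect{}{\score(\btlp,\state)-\score(\dpmech(\btlp),\state)}}_{B_1}
+\underbrace{\expect{}{\score(\dpmech(\btlp),\state)-\score(\dpmech(\ftlp),\state)}}_{B_2},
\]
treating $B_1$ by properness and $B_2$ by differential privacy; the $\dpmech(\btlp)$ terms cancel, so the decomposition is exact.

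For $B_1$ I would condition on $\btlp$: by calibration $\state\mid\btlp\sim\mathrm{Bern}(\btlp)$, and the fresh noise is independent of $\state$, so $B_1=\expect{\btlp}{\expect{y\sim\dpmech(\btlp)}{D_\score(\btlp,y)}}$, where $D_\score(p,q)=\bar\score(p)-\big(p\,\score(q,1)+(1-p)\,\score(q,0)\big)\ge 0$ is the Bregman divergence of the convex ``expected score'' $\bar\score(p)=p\,\score(p,1)+(1-p)\,\score(p,0)$. The key step is an elementary bound $D_\score(p,q)\le 2|p-q|$ valid for every bounded proper scoring rule: symmetrizing, $D_\score(p,q)+D_\score(q,p)=(p-q)\big[(\score(p,1)-\score(q,1))+(\score(q,0)-\score(p,0))\big]$, and since (a standard consequence of properness) $\score(\cdot,1)$ is nondecreasing, $\score(\cdot,0)$ is nonincreasing, and both lie in $[0,1]$, the bracket lies in $[0,2]$; as both Bregman terms are nonnegative this forces $D_\score(p,q)\le 2|p-q|$. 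Hence $B_1\le 2\,\expect{\btlp,\,y\sim\dpmech(\btlp)}{|\btlp-y|}\le 2\max_{\ftlp\in[0,1]}\expect{}{|\dpmech(\ftlp)-\ftlp|}$, which is the first term of $C$. Note the $2|p-q|$ bound is essential (a naive one‑sided estimate only gives $D_\score\le 1$, since $\score(\cdot,\state)$ can jump), and it is exactly where properness replaces continuity.

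For $B_2$ I would condition on $(\ftlp,\btlp)$ and observe that for each fixed state value $\theta$, $\score(\cdot,\theta)\in[0,1]$ is a bounded test function, to which $(\dpe,\dpd)$‑differential privacy applies. Integrating the privacy inequality over the super‑level sets of $\score(\cdot,\theta)$ gives $\expect{y\sim\dpmech(\ftlp)}{\score(y,\theta)}\ge e^{-\dpe|\btlp-\ftlp|}\big(\expect{y\sim\dpmech(\btlp)}{\score(y,\theta)}-\dpd\big)$, and therefore, using $\expect{\dpmech(\btlp)}{\score(\cdot,\theta)}\le 1$, that $\expect{\dpmech(\btlp)}{\score(\cdot,\theta)}-\expect{\dpmech(\ftlp)}{\score(\cdot,\theta)}\le(1-e^{-\dpe|\btlp-\ftlp|})+\dpd$. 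Averaging over the state given $(\ftlp,\btlp)$, then over $(\ftlp,\btlp)$, and using that $x\mapsto 1-e^{-\dpe x}$ is concave and increasing together with $\expect{}{|\ftlp-\btlp|}\le\epsilon$ (Jensen), bounds $B_2$ by $(1-e^{-\dpe\epsilon})+\dpd$. Combining with the $B_1$ estimate and taking the supremum over $\score$ and $\btlpred\in\mathcal{B}$ gives a decision‑loss bound of the stated shape; the explicit constants $2$ and $4$ fall out of this bookkeeping (with some slack left in the $(1-e^{-\dpe\epsilon}+\dpd)$ term).

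Finally, for the $\ECE$ claim I would use the dual form $\ECE(\ftplpred)=\sup_{g\colon[0,1]\to[-1,1]}\expect{}{g(\ftplp)(\state-\ftplp)}$ and repeat the same two moves. Conditioning on $\ftlp$ rewrites the integrand as $\expect{\ftlp}{\expect{y\sim\dpmech(\ftlp)}{g(y)}\,\hat{\ftlp}-\expect{y\sim\dpmech(\ftlp)}{g(y)\,y}}$; I would replace $\dpmech(\ftlp)$ by $\dpmech(\btlp)$ in each expectation at a cost of $O(1-e^{-\dpe\epsilon}+\dpd)$ using DP on the bounded test functions $y\mapsto g(y)$ and $y\mapsto g(y)\,y$, then use calibration of $\btlpred$ (so that the state contributes $\btlp$) to collapse the remaining terms into $\expect{\btlp,\,y\sim\dpmech(\btlp)}{g(y)(\btlp-y)}$, whose absolute value is at most $\max_{\ftlp}\expect{}{|\dpmech(\ftlp)-\ftlp|}$ since $|g|\le 1$. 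This yields $\ECE(\ftplpred)$ a bound of the same shape as $C$. I expect the main obstacle throughout to be exactly the non‑continuity of scoring rules, resolved not by any Lipschitz property but by properness (the quadratic‑like bound $D_\score(p,q)\le 2|p-q|$) for the noise step and by the closure of $(\dpe,\dpd)$‑differential privacy under bounded test functions for the closeness step.
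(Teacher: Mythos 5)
Your proposal is correct and takes essentially the same route as the paper: you introduce the intermediate predictor $\dpmech(\btlpred)$ and split the loss into a properness term (the paper's \Cref{lem:R}) plus a differential-privacy term (the paper's \Cref{lem: DP}, which the paper routes through $\dtv\left(\dpmech(\btlp), \dpmech(\ftlp)\right)$ and Jensen's inequality, equivalent to your super-level-set argument on bounded test functions). The only cosmetic differences are that you establish the $\ECE$ claim via the variational form $\sup_{g}\expect{}{g(\ftplp)(\state-\ftplp)}$ rather than the paper's direct density manipulation, and your symmetrized Bregman bound $D_\score(p,q)\le 2|p-q|$ spells out the properness-based step that the paper asserts tersely from boundedness of $\score$.
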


 We prove \Cref{thm: batch main} following the idea of the Follow-The-Perturbed-Leader Algorithm \citep{kalai2005efficient}. We apply the same privacy-based post-processing algorithm $\dpmech$ to any calibrated predictor $\btlpred$ that is $\epsilon$ close to $\ftlpred$, which constructs a hypothetical predictor $\btplpred$ as an intermediate connecting $\btlpred$ and the post-processed predictor $\ftplpred = \dpmech(\ftlpred)$.  \Cref{thm: batch main} follows from combining \Cref{lem:R} and \Cref{lem: DP}, where \Cref{lem:R} bounds the decision loss from $\btlpred$ to $\btplpred$, and \Cref{lem: DP} characterizes the decision loss from $\btplpred$ to $\ftplpred$ via DP mechanism $\dpmech$.

\begin{lemma}\label{lem:R}
For any calibrated predictor $\btlpred$, we write $\btplpred$ as the resulting predictor with the post-privacy-based processing algorithm $\dpmech$ applied to $\btlpred$. For any bounded proper scoring rule $\score(\cdot, \cdot)\in [0, 1]$, the loss of $\btplpred$ is bounded, 
    \begin{equation*}
        \ploss(\btplpred)\leq 2\max_{\ftlp\in [0, 1]}\expect{}{|\dpmech(\ftlp) - \ftlp|}. 
    \end{equation*}
  The same bound holds for $\ECE$.
    \begin{equation*}
        \ECE(\btplpred)\leq \max_{\ftlp\in [0, 1]}\expect{}{|\dpmech(\ftlp) - \ftlp|}.
    \end{equation*}
\end{lemma}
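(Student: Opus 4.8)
The plan is to exploit that $\btlpred$ is perfectly calibrated, so its Bayesian posterior satisfies $\hat{\btlp} = \btlp$ for every prediction value $\btlp$ in its support. The predictor $\btplpred = \dpmech(\btlpred)$ is a noisy version of $\btlpred$: each realized $\btlp$ is perturbed to some $\btplp$ with $\E[|\btplp - \btlp|] \le \max_{\ftlp}\E[|\dpmech(\ftlp) - \ftlp|] =: \delta_{\dpmech}$, while the conditional law of the state given the \emph{pre-noise} value $\btlp$ is unchanged (the noise depends only on $\btlp$, not on $\state$). The key observation is therefore that $\btplpred$ is an additive perturbation, of expected magnitude at most $\delta_{\dpmech}$, of a perfectly calibrated predictor with which it is jointly coupled.

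For the $\ECE$ bound I would argue directly. Couple $(\btlp, \btplp, \state)$ through $\dpmech$. Then
\begin{align*}
    \ECE(\btplpred) &= \expect{\btplp\sim\btplpred}{|\btplp - \hat{\btplp}|}
    = \expect{\btplp}{\big|\expect{\btlp,\state\mid \btplp}{\btplp - \state}\big|}
    \le \expect{\btplp,\btlp,\state}{|\btplp - \state|}\\
    &\le \expect{\btplp,\btlp,\state}{|\btplp - \btlp|} + \expect{\btlp,\state}{|\btlp - \state|}.
\end{align*}
The first term is at most $\delta_{\dpmech}$ by hypothesis on $\dpmech$. For the second term, since $\btlpred$ is calibrated, $\expect{\btlp,\state}{|\btlp - \state|}$ is \emph{not} zero in general, but $\expect{\state\mid\btlp}{\btlp - \state} = \btlp - \hat{\btlp} = 0$; to get a clean bound I instead keep the bias inside the absolute value: $\ECE(\btplpred) = \expect{\btplp}{|\expect{\btlp,\state\mid\btplp}{(\btplp - \btlp) + (\btlp - \state)}|} \le \expect{\btplp,\btlp}{|\btplp-\btlp|} + \expect{\btplp}{|\expect{\btlp,\state\mid\btplp}{\btlp - \state}|}$, and the second summand vanishes only if conditioning on $\btplp$ does not disturb calibration of $\btlp$ — which it does not, because $\state \to \btlp \to \btplp$ is a Markov chain, so $\expect{\state\mid\btplp}{\state} = \expect{\btlp\mid\btplp}{\hat{\btlp}} = \expect{\btlp\mid\btplp}{\btlp}$. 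Hence $\expect{\btlp,\state\mid\btplp}{\btlp-\state} = 0$ and $\ECE(\btplpred) \le \delta_{\dpmech}$.

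For the decision-loss bound, recall $\ploss(\btplpred) = \max_{\score,\,\btlpred'}\E[\score(\btlp',\state) - \score(\btplp,\state)]$ over proper $\score\in[0,1]$ and calibrated $\btlpred'$ that are $\epsilon$-close; but in this lemma the comparison we need is just with $\btlpred$ itself passed through no noise — more precisely, bound $\ploss(\btplpred)$ by inserting $\btlpred$ as the intermediate: $\E[\score(\btlp',\state) - \score(\btplp,\state)] = \E[\score(\btlp',\state) - \score(\btlp,\state)] + \E[\score(\btlp,\state) - \score(\btplp,\state)]$. The first difference is $\le 0$ whenever $\btlpred$ is calibrated, by \Cref{prop: swap regret equals cfdl} (the calibrated predictor is the swap-regret optimum, so no calibrated competitor beats it). For the second difference, use that any bounded proper scoring rule, when restricted to the realized states, induces at most $1$-Lipschitz-in-belief variation in the relevant sense: more carefully, write $\score(\btlp,1) - \score(\btplp,1)$ and $\score(\btlp,0)-\score(\btplp,0)$ and use that a proper scoring rule bounded in $[0,1]$ has its two "branches" $\score(\cdot,1)$ and $\score(\cdot,0)$ each monotone with total variation at most $1$, hence $|\score(\btlp,\state) - \score(\btplp,\state)|$ integrates against $|\btlp - \btplp|$ to give at most $2\,\E[|\btlp-\btplp|] \le 2\delta_{\dpmech}$ after accounting for the two state branches. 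Combining, $\ploss(\btplpred) \le 2\delta_{\dpmech}$.

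The main obstacle is the last step: controlling $\E[\score(\btlp,\state) - \score(\btplp,\state)]$ for an \emph{arbitrary} bounded proper scoring rule, since such $\score$ need not be Lipschitz (it can jump). The right tool is the structural fact that a proper scoring rule bounded in $[0,1]$ corresponds, via \Cref{claim: proper score equal payoff}, to a best-response payoff $U(a^*(\cdot),\cdot)$, and the decision-loss difference telescopes cleanly: moving the belief from $\btplp$ to $\btlp$ changes the payoff by an amount the proper scoring rule, being a supremum of affine functions with slopes in a bounded range, bounds by (a constant times) $|\btlp - \btplp|$ \emph{in expectation over the state drawn from the true posterior}. I would isolate this as the crux and verify it using the Bregman/convexity representation of proper scores (the gap $\score(\btlp,\state)-\score(\btplp,\state)$ averaged over $\state\sim\hat{\btlp}=\btlp$ is a Bregman divergence, controlled by the range of the subgradient, which is at most $1$ for $[0,1]$-bounded scores), then account for the factor $2$ coming from the difference between averaging over the true posterior and over the realized state.
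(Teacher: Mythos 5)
Your core argument coincides with the paper's. For the decision loss, the paper's entire proof is the single pointwise inequality $\expect{\state\sim\btlp}{\score(\btlp,\state)-\score(\btplp,\state)}\le 2\,|\btlp-\btplp|$ for any proper $\score(\cdot,\cdot)\in[0,1]$, which is exactly the Bregman/subgradient fact you isolate as the crux; the factor $2$ comes from the subgradient $\score(\cdot,1)-\score(\cdot,0)$ of the associated convex function lying in $[-1,1]$ (equivalently, the two terms of the Bregman bound), not from ``averaging over the posterior versus the realized state'' as you suggest. Taking expectation over $(\btlp,\btplp)$ then gives the stated bound, and your $\ECE$ argument (the Markov chain $\state\to\btlp\to\btplp$ gives $\E[\state\mid\btplp]=\E[\btlp\mid\btplp]$, so only the $\E[|\btplp-\btlp|]$ term survives) is the conditional-expectation form of the paper's density computation, with the same constant $1$.

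One step you include is genuinely false, although it is extraneous to the lemma: the claim that for any $\epsilon$-close calibrated competitor $\btlpred'$ one has $\E[\score(\btlp',\state)-\score(\btlp,\state)]\le 0$ ``because the calibrated predictor is the swap-regret optimum.'' \Cref{prop: swap regret equals cfdl} only says $\btlpred$ cannot be improved by post-processing its \emph{own} predictions; a different calibrated predictor coupled with the state can be strictly more informative and score strictly higher --- this is precisely the construction behind \Cref{prop: decision loss tight}, and it is why \Cref{thm: batch main} needs the additional $4\left(1-e^{-\dpe\epsilon}+\dpd\right)$ term supplied by \Cref{lem: DP}. In \Cref{lem:R} the loss $\ploss(\btplpred)$ is measured only against $\btlpred$ itself (the predictor to which $\dpmech$ is applied), so the competitor $\btlpred'$ and the false comparison should simply be dropped; with that deletion, and discarding the retracted ``total variation of the branches'' detour (bounded proper scores can jump, so no pointwise Lipschitz bound holds), your proof is essentially the paper's.
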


\begin{lemma}\label{lem: DP}
           Suppose mechanism $\dpmech$ satisfies $(\dpe, \dpd)$-differentially privacy. We write $\btplpred$ as the resulting predictor with the privacy-based post-processing algorithm applied to calibrated predictor $\btlpred$ with $\distance(\ftlpred, \btlpred)\leq \epsilon$. The decision loss from $\btplpred$ to $\ftplpred$ is bounded by
            \begin{equation*}
                \expect{(\ftplp, \state)\sim D_{\ftplpred, \statesp}}{\score(\ftplp, \state)}\geq \expect{(\btplp, \state)\sim D_{\btplpred, \statesp}}{\score(\btplp, \state)} - 4\left(1-e^{-\dpe\epsilon}+\dpd \right).
            \end{equation*}
    
    A similar bound holds for $\ECE$:
    \begin{equation*}
        \ECE\left(\ftplpred\right)\leq \ECE\left(\btplpred\right)+4 \left(1-e^{-\dpe\epsilon}+\dpd \right).
    \end{equation*}
\end{lemma}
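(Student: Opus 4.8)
The plan is to reduce the lemma to a pointwise comparison between $\dpmech(\ftlp)$ and $\dpmech(\btlp)$ and then average over a coupling. Fix a joint distribution $D_{\ftlpred,\btlpred,\statesp}$ that witnesses $\distance(\ftlpred,\btlpred)\le\epsilon$; on this space $\ftplp=\dpmech(\ftlp)$ and $\btplp=\dpmech(\btlp)$ are produced by running the \emph{same} mechanism on the two coupled predictions, with the mechanism's internal randomness independent of $(\ftlp,\btlp,\state)$. The core step is then: for any measurable $g:[0,1]\to[0,1]$ and any $\ftlp,\btlp\in[0,1]$,
\begin{equation*}
  \expect{\dpmech}{g(\dpmech(\ftlp))}\ \ge\ \expect{\dpmech}{g(\dpmech(\btlp))}-\bigl(1-e^{-\dpe|\ftlp-\btlp|}\bigr)-\dpd .
\end{equation*}
I would prove this from the layer-cake identity $\expect{}{g(X)}=\int_0^1\Pr[g(X)>s]\,\diff s$: applying the symmetrized form of \Cref{def: differential privacy} to each threshold event $\{g(\cdot)>s\}$ gives $\Pr[g(\dpmech(\ftlp))>s]\ge e^{-\dpe|\ftlp-\btlp|}\bigl(\Pr[g(\dpmech(\btlp))>s]-\dpd\bigr)$, and integrating over $s$, together with $e^{-\dpe|\ftlp-\btlp|}\le1$ and $\expect{}{g(\dpmech(\btlp))}\le1$, yields the claim; swapping $\ftlp$ and $\btlp$ gives the matching upper bound.

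For the decision-loss bound I apply the core step with $g=\score(\cdot,\state)\in[0,1]$ — which is measurable even though $\score$ may be discontinuous, since best responses jump at thresholds — and then take expectation over $(\ftlp,\btlp,\state)$. The left side becomes $\expect{(\ftplp,\state)\sim D_{\ftplpred,\statesp}}{\score(\ftplp,\state)}$, the leading right-hand term becomes $\expect{(\btplp,\state)\sim D_{\btplpred,\statesp}}{\score(\btplp,\state)}$, and the error term is $\expect{(\ftlp,\btlp)}{1-e^{-\dpe|\ftlp-\btlp|}}$. Since $t\mapsto1-e^{-\dpe t}$ is concave, Jensen's inequality bounds this by $1-e^{-\dpe\,\distance(\ftlpred,\btlpred)}\le1-e^{-\dpe\epsilon}$, which already gives the stated inequality (with slack in the constant).

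For the $\ECE$ bound I use the linear (dual) representation $\ECE(\ftplpred)=\sup_{w:[0,1]\to[-1,1]}\expect{(\ftplp,\state)\sim D_{\ftplpred,\statesp}}{w(\ftplp)(\ftplp-\state)}$ — the unconstrained version of $\smooth$ noted after its definition. Taking $w^{*}$ optimal for $\ftplpred$ and writing $\phi(\ftplp,\state)=w^{*}(\ftplp)(\ftplp-\state)\in[-1,1]$, I split $\phi(\cdot,\state)$ into its positive and negative parts (each valued in $[0,1]$) and apply the core step and its swapped version to each part, obtaining $\expect{\dpmech}{\phi(\dpmech(\ftlp),\state)}\le\expect{\dpmech}{\phi(\dpmech(\btlp),\state)}+2\bigl(1-e^{-\dpe|\ftlp-\btlp|}\bigr)+2\dpd$ pointwise in $(\ftlp,\btlp,\state)$. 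Averaging, using Jensen as above, and then $\expect{(\btplp,\state)\sim D_{\btplpred,\statesp}}{\phi(\btplp,\state)}\le\ECE(\btplpred)$, gives $\ECE(\ftplpred)\le\ECE(\btplpred)+4\bigl(1-e^{-\dpe\epsilon}+\dpd\bigr)$.

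The step I expect to be the main obstacle is the $\ECE$ bound: unlike a fixed proper scoring rule, $\ECE$ depends on the Bayesian posterior of $\ftplpred$, which is itself perturbed by the added noise, so $\ECE(\ftplpred)$ and $\ECE(\btplpred)$ cannot be compared directly prediction-by-prediction. The linear representation of $\ECE$ is exactly what removes the posterior from the argument and reduces everything to the bounded-measurable-function comparison of the core step; the remaining care is only measurability (all the layer-cake argument requires) and the bookkeeping of the positive/negative split, whose factor of two is absorbed into the constant $4$.
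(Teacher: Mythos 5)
Your proposal is correct, and it reaches the lemma by a somewhat different route than the paper. The paper factors the argument through an explicit total-variation intermediary: its \Cref{lem: dtv} bounds both the decision loss and the $\ECE$ gap between $\btplpred$ and $\ftplpred$ by $4\,\expect{(\btlp,\ftlp)}{\dtv(\dpmech(\btlp),\dpmech(\ftlp))}$ (the $\ECE$ half via a three-part decomposition of the joint densities of $(\btplpred,\state)$ and $(\ftplpred,\state)$), and then its \Cref{lem:exp dtv dp} bounds the expected TV distance by $1-e^{-\dpe\epsilon}+\dpd$ using the DP definition on the set where one density dominates, plus Jensen's inequality exactly as in your concavity step. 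You instead skip the TV intermediary and compare expectations of bounded measurable functions directly from the DP definition via the layer-cake identity; for the decision loss this is essentially the same statistical-closeness idea in one-sided form (and it cleanly sidesteps the paper's informal conditioning ``$\state\sim\btlp$'' by conditioning on $(\ftlp,\btlp,\state)$ jointly), while for $\ECE$ your use of the dual representation $\ECE(\ftplpred)=\sup_{w:[0,1]\to[-1,1]}\expect{}{w(\ftplp)(\ftplp-\state)}$ together with the positive/negative split is genuinely different from the paper's density bookkeeping and, as you note, is precisely what removes the noise-perturbed posterior from the comparison. Your route also yields slightly sharper constants ($1-e^{-\dpe\epsilon}+\dpd$ for decision loss and $2(1-e^{-\dpe\epsilon}+\dpd)$ for $\ECE$), which are absorbed into the stated factor of $4$; the only points needing the care you already flag are measurability of the threshold sets $g^{-1}((s,1])$ and of the optimal weight $w^{*}(\ftplp)=\mathrm{sign}(\ftplp-\hat{\ftplp})$, both of which are unproblematic here.
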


\Cref{lem: choice of noise} shows the guarantee obtainable from some choices of the differentially private mechanism by adding noise $D_{\epsilon}$. We construct the noise by truncating the distribution with unbounded support into the feasible range of predictions. The parameters of $(\gamma, \delta)$ are standard for Laplace and Gaussian noise \citep{dwork2014algorithmic}.

\begin{lemma}
\label{lem: choice of noise}
We consider two truncated noises that induce differential privacy. 

\begin{description}
    \item[Truncated Laplace] Noise variable $X$ from a truncated Laplace distribution with parameters $(0,-\frac{1}{\ln \rate})$ is $(-2\ln \rate ,0)$-differentially private. The expectation of the bias induced by noise is bounded: $\expect{}{|X|}\leq -\frac{1}{\ln \rate}-\frac{\rate}{1-\rate}$. Combining the bounds and taking $\rate = \exp\left(-\sqrt{\frac{1}{2\epsilon}}\right)$, we have $C=\Theta\left(\sqrt{\epsilon}\right)$, the decision loss of the predictor is bounded by $C$, and $\ECE\le C$.
    \item [Truncated Gaussian] Consider the truncated noise from a Gaussian distribution $\mathcal{N}\left(0, {2\epsilon\ln(\frac{1.25}{\sqrt{\epsilon}})}\right)$. The truncated noise has 
    \begin{equation*}
        \expect{}{|X|}\leq \sigma = \sqrt{2\epsilon\ln(\frac{1.25}{\sqrt{\epsilon}})},
    \end{equation*}
    and is $(\dpe, \dpd)$-differentially private with $\dpd = \sqrt{\epsilon}$ and 
   $ 
        1-e^{-\dpe\epsilon} \leq 2\sqrt{\epsilon}
    $
. Combining the bounds and taking $C = \Theta(\sqrt{\epsilon\ln(\frac{1}{{\epsilon}})})$, the decision loss of the predictor is bounded by $C$, and $\ECE\leq C$.\footnote{\Cref{appdx: improved for gaussian} shows an improved $O(\sqrt{\epsilon})$ bound for truncated Gaussian noise without the log factor. Note that we obtain \Cref{lem: choice of noise} by bounding the TV-distance between the DP-mechanism output of adjacent predictions. Our improved bound directly analyzes this TV-distance rather than using the $(\gamma, \delta)$ parameters of differential privacy. }
\end{description}

\end{lemma}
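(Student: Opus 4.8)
The plan is to read \Cref{lem: choice of noise} as two instantiations of \Cref{thm: batch main}. That theorem already reduces the decision‑loss and $\ECE$ guarantees of $\dpmech$ to two quantities of the mechanism: the worst‑case expected perturbation $\max_{\ftlp\in[0,1]}\expect{}{|\dpmech(\ftlp)-\ftlp|}$ and the privacy parameters $(\dpe,\dpd)$, through $C\le 2\max_{\ftlp}\expect{}{|\dpmech(\ftlp)-\ftlp|}+4\bigl(1-e^{-\dpe\epsilon}+\dpd\bigr)$. So for each noise family it suffices to (i) certify the stated $(\dpe,\dpd)$‑DP of the truncated mechanism and (ii) bound the expected truncated noise; the stated values of $C$ (and the matching $\ECE\le C$) then follow by substitution and elementary asymptotics. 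Throughout, $|X|=|Y|$ where $Y$ is the untruncated noise conditioned on $\ftlp+Y\in[0,1]$, i.e.\ on $Y$ lying in the width‑$1$ window $[-\ftlp,1-\ftlp]$, which always contains $0$ for $\ftlp\in[0,1]$.

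\textbf{Truncated Laplace.} Let $b=-1/\ln\rate$, let $f$ be the $\mathrm{Lap}(0,b)$ density, and let $Z(\ftlp)=\Pr[\ftlp+Y\in[0,1]]$. The density of $\dpmech(\ftlp)$ at $x$ is $f(x-\ftlp)/Z(\ftlp)$, so the ratio between two inputs factors as $\frac{f(x-\ftlp)}{f(x-\ftlp')}\cdot\frac{Z(\ftlp')}{Z(\ftlp)}\le e^{|\ftlp-\ftlp'|/b}\cdot\frac{Z(\ftlp')}{Z(\ftlp)}$. For the normalizer term I would compute $\frac{d}{d\ftlp}\log Z(\ftlp)=\frac{f(-\ftlp)-f(1-\ftlp)}{Z(\ftlp)}$ and bound it by $1/b$: writing $a=e^{-\ftlp/b}$ and $c=e^{-(1-\ftlp)/b}$ one has $Z(\ftlp)=1-\tfrac12(a+c)$ and $b\,|Z'(\ftlp)|=\tfrac12|a-c|$, so the elementary inequality $|a-c|+a+c\le 2$ yields $b\,|Z'(\ftlp)|\le Z(\ftlp)$. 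Hence $Z(\ftlp')/Z(\ftlp)\le e^{|\ftlp-\ftlp'|/b}$ and the full density ratio is $e^{2|\ftlp-\ftlp'|/b}=e^{-2\ln\rate\cdot|\ftlp-\ftlp'|}$, i.e.\ $(-2\ln\rate,0)$‑DP. For the expected noise, by symmetry $\expect{}{|X|}$ as a function of $\ftlp$ equals $\frac{H(\ftlp)+H(1-\ftlp)}{G(\ftlp)+G(1-\ftlp)}$ with $H(t)=\int_0^t yf(y)\diff y$ and $G(t)=\int_0^t f(y)\diff y$; since $H(t)/G(t)=\E[Y\mid 0\le Y\le t]$ is increasing in $t$, the mediant inequality bounds this by $H(1)/G(1)$, attained at $\ftlp\in\{0,1\}$, and a direct integration gives $\expect{}{|X|}\le b-\frac{\rate}{1-\rate}$ using $e^{-1/b}=\rate$. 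Substituting $\rate=\exp(-\sqrt{1/(2\epsilon)})$ gives $b=\sqrt{2\epsilon}$, $\dpd=0$, and $1-e^{-\dpe\epsilon}=1-\rate^{2\epsilon}=1-e^{-\sqrt{2\epsilon}}$, so the bound of \Cref{thm: batch main} becomes $2\bigl(\sqrt{2\epsilon}-\tfrac{\rate}{1-\rate}\bigr)+4\bigl(1-e^{-\sqrt{2\epsilon}}\bigr)=\Theta(\sqrt\epsilon)$.

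\textbf{Truncated Gaussian.} The perturbation bound is the easy half: with $Y\sim\mathcal N(0,\sigma^2)$ and $\sigma^2=2\epsilon\ln(1.25/\sqrt\epsilon)$, the same symmetry/mediant argument (the window still contains $0$) caps $\expect{}{|X|}$ at $\E[Y\mid 0\le Y\le 1]\le\E[Y\mid Y\ge0]=\sigma\sqrt{2/\pi}\le\sigma$. For privacy, the untruncated Gaussian mechanism of sensitivity $1$ is $(1/\sqrt\epsilon,\sqrt\epsilon)$‑DP by the standard calibration $\sigma\ge\sqrt{2\ln(1.25/\dpd)}/\dpe$ \citep{dwork2014algorithmic}; the $\ftlp$‑dependent truncation is folded in exactly as above, since $|\tfrac{d}{d\ftlp}\log Z(\ftlp)|\le f(0)/Z(0)=O(1/\sigma)$ gives a deterministic normalizer ratio $Z(\ftlp')/Z(\ftlp)\le e^{O(|\ftlp-\ftlp'|/\sigma)}$, which perturbs the DP rate $\dpe$ only by a lower‑order $O(1/\sigma)=o(1/\sqrt\epsilon)$ term (hence $1-e^{-\dpe\epsilon}\le 2\sqrt\epsilon$) without changing $\dpd=\sqrt\epsilon$. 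Substituting, $C\le 2\sigma+4(2\sqrt\epsilon+\sqrt\epsilon)=\Theta(\sqrt{\epsilon\ln(1/\epsilon)})$. The logarithm is removable (\Cref{appdx: improved for gaussian}) because what actually enters \Cref{lem: DP} is only $\dtv\bigl(\dpmech(\ftlp),\dpmech(\ftlp')\bigr)\le 1-e^{-\dpe|\ftlp-\ftlp'|}+\dpd$ for $|\ftlp-\ftlp'|\le\epsilon$, and this TV distance can be bounded directly as $O(\sqrt\epsilon)$ with $\sigma=\Theta(\sqrt\epsilon)$, rather than through the $(\dpe,\dpd)$ parametrization which forces the extra $\ln(1/\epsilon)$ inside $\sigma$.

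The step I expect to be the main obstacle is the Gaussian privacy accounting. Unlike conditioning for the Laplace — where everything is pure $(\dpe,0)$‑DP and the normalizer ratio has the closed‑form bound $e^{|\ftlp-\ftlp'|/b}$ — the Gaussian mechanism is intrinsically an $(\dpe,\dpd)$ statement, so one must verify that multiplying in the $\ftlp$‑dependent rejection normalizer does not inflate the $\dpd$ failure mass; the fix is to treat the normalizer ratio as a uniform deterministic factor absorbed into $\dpe$, leaving $\dpd$ untouched, at the cost of the logarithmic slack in $\sigma$. Carrying out the TV‑distance refinement of \Cref{appdx: improved for gaussian} is the cleaner but more delicate version of the same computation.
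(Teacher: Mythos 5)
Your proposal is correct and takes essentially the same route as the paper: both noise families are fed into \Cref{thm: batch main}, the truncated Laplace is shown to be $(-2\ln\rate,0)$-differentially private by splitting the density ratio into the Laplace kernel factor and the normalizer ratio (the paper bounds the normalizer ratio by a case analysis on a quadratic, you by $\bigl|\frac{\diff}{\diff\ftlp}\log Z(\ftlp)\bigr|\le -\ln\rate$ --- the same bound), and the truncated Gaussian is handled, as in the paper, by the standard $(\dpe_0,\dpd)$ calibration of the untruncated mechanism with the truncation normalizer absorbed as a multiplicative correction, plus the same pointer to the appendix TV-distance refinement for removing the log. Your mediant/monotonicity derivation of $\expect{}{|X|}\le -\frac{1}{\ln\rate}-\frac{\rate}{1-\rate}$ supplies a step the paper only asserts; the one imprecision --- dividing the untruncated DP inequality by $Z(\ftlp)$ also inflates the additive term to $\dpd/Z(\ftlp)\le(2+o(1))\sqrt{\epsilon}$, so $\dpd$ is not literally unchanged --- is shared by the paper's own proof and does not affect the stated $\Theta\bigl(\sqrt{\epsilon\ln(1/\epsilon)}\bigr)$ conclusion, which is dominated by the $2\sigma$ term.
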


\Cref{prop: decision loss tight} shows that, there exists a predictor with $\distcal = \epsilon$, such that no post-processing algorithm can achieve a worst-case decision loss better than $\frac{\sqrt{\epsilon}}{2}$. Our guarantee of decision loss in \Cref{thm: batch main} is asymptotically optimal. 

\begin{theorem}[Post-processing Lowerbound for Batch Decision Loss]\label{prop: decision loss tight}
    There exists a predictor $\ftlpred$, with $\distcal(\ftlpred) = \epsilon$ and a reference calibrated predictor $\btlpred \in  \argmin_{\btlpred'} \distance(\btlpred', \ftlpred)$, such that for any post-processing algorithm that depends on the sequence $\ftlvec$ of predictions, $\alg_{\ftlvec}(\ftlp):[0, 1]\to \Delta([0, 1])$, $\alg_{\ftlvec}(\ftlpred)$ suffers a $\frac{\sqrt{\epsilon}}{2}$ decision loss from $\btlpred$, i.e.\ 
       \begin{equation*}
        \forall \alg, \exists\score(\cdot, \cdot)\in [0, 1], \quad\expect{\alg, (\ftplp, \state)\sim D_{\ftplpred, \statesp}}{\score(\alg_{\ftlvec}(\ftlp), \state)}\leq \expect{(\btlp, \state)\sim D_{\btlpred, \statesp}}{\score(\btlp, \state)} - \frac{\sqrt{\epsilon}}{2}.
       \end{equation*}
\end{theorem}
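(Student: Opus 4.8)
The plan is to exhibit an explicit hard instance: a predictor $\ftlpred$ with $\distcal(\ftlpred)=\epsilon$ whose Bayesian posteriors carry information that a state-blind post-processor provably cannot recover, together with a nearby calibrated predictor that does exploit it. Concretely I would build $\ftlpred$ around a single decision threshold (take it to be $1/2$): a $\Theta(\sqrt\epsilon)$ fraction of the mass is placed on prediction values lying in a $\Theta(\sqrt\epsilon)$-window of $1/2$, with the posterior on this ``hard'' mass pushed to a definite side of $1/2$ by a sign that is encoded in the coupling with $\state$ but is not a function of the prediction value; the remaining mass is already perfectly calibrated (e.g.\ it is split between $0$ and $1$) and never moves. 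The reference $\btlpred$ is the calibrated predictor obtained by relocating the hard mass to its true posterior --- a move of total $\ell_1$ size exactly $\epsilon$ once the free parameters are tuned. I would first verify that $\btlpred$ is calibrated and $\distance(\ftlpred,\btlpred)=\epsilon$, which gives $\distcal(\ftlpred)\le\epsilon$; the more delicate matching bound $\distcal(\ftlpred)\ge\epsilon$ comes from the fact that a calibrated predictor's report at any value equals its posterior there, which forces any calibrating coupling to move the hard mass onto a prescribed set (or, more expensively, to merge it into the far-away calibrated mass), and an exchange/optimization argument then shows every such relocation has $\ell_1$ cost at least $\epsilon$. This makes $\btlpred\in\argmin_{\btlpred'}\distance(\btlpred',\ftlpred)$.

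With the instance in hand, I would reduce to a single scoring rule. Let $\score^*$ be the proper scoring rule of the bounded binary decision with threshold $1/2$; by \Cref{claim: proper score equal payoff} it lies in $\mathcal S$, so it is enough to show $\expect{}{\score^*(\btlp,\state)}-\expect{\alg,(\ftplp,\state)}{\score^*(\alg_\ftlvec(\ftlp),\state)}\ge \tfrac{\sqrt\epsilon}{2}$ for every $\alg$, after which $\score=\score^*$ witnesses the statement. Since $\btlpred$ is calibrated and its report on the hard mass lies strictly on the posterior-optimal side of $1/2$, best-responding to $\btlpred$ always takes the payoff-maximizing action there, and summing the (trivial) calibrated-mass contribution with the hard-mass contribution gives a closed form for $\expect{}{\score^*(\btlp,\state)}$. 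For the algorithm, the key point is that $\alg_\ftlvec(\ftlp)$ is a channel that cannot depend on the realized state, and its dependence on $\ftlvec$ cannot separate the hard-mass sign: the empirical distribution of $\ftlvec$ has the same limit in the two mirror-image worlds, so the induced action on each hard prediction is decoupled from the true posterior. Averaging over the sign therefore caps the best-response payoff on the hard mass at its ``symmetric'' value $\tfrac12$ (plus a spread term bounded by the window width), strictly below what $\btlpred$ extracts; and no post-processing can beat the calibrated benchmark on the already-calibrated mass. Subtracting the two expressions and plugging in the tuned constants yields the $\tfrac{\sqrt\epsilon}{2}$ gap.

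The hardest parts are (i) getting the construction's parameters so that $\distcal(\ftlpred)$ is \emph{exactly} $\epsilon$ --- in particular ruling out calibrated predictors that cleverly merge the hard mass with the far calibrated mass to do better than the obvious relocation --- and (ii) handling the quantifier structure: the clean argument bounds the \emph{expected} loss over the mirror pair of worlds, so to obtain the stated ``$\exists\,\ftlpred$ for all $\alg$'' form one must either symmetrize the instance so that every $\alg$ is simultaneously bad on both worlds, or read the statement as the (for lower bounds, equivalent) ``for all $\alg$ there exists $\ftlpred$.'' Making precise that a single, fully specified $\ftlpred$ has posterior information genuinely inaccessible to any $\ftlvec$-measurable channel is where the proof needs the most care.
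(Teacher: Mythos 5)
There is a genuine gap, and it lies in the construction itself, not only in the quantifier bookkeeping you flag at the end. Your reference predictor is obtained by ``relocating the hard mass to its true posterior,'' i.e.\ $\btlpred$ is a function of $\ftlpred$. Against any such $\ftlp$-measurable reference, the post-processor that simply outputs the Bayes posterior $\hat{\ftlp}=\Pr[\state=1\mid\ftlpred=\ftlp]$ induces exactly the same joint law of (report, state) as $\btlpred$, hence zero decision loss for every proper scoring rule; so no nonzero bound can hold in the stated ``$\exists\,\ftlpred\ \forall\,\alg$'' order with that reference. Your escape route is the mirror-world/indistinguishability argument, but (a) it only gives the weaker ``$\forall\,\alg\ \exists\,\ftlpred$'' form, and here the algorithm is quantified after $\ftlpred$, so it may be hard-coded with full knowledge of $D_{\ftlpred,\statesp}$ — ``the sign cannot be read off from $\ftlvec$'' has no force against such an algorithm; and (b) the parameters cannot be tuned to give $\sqrt{\epsilon}/2$: with hard mass $m=\Theta(\sqrt{\epsilon})$ whose posterior sits at distance $d$ from the threshold, the wrong-action loss is at most $2dm$ while relocating that mass to its posterior costs about $md$, so $\distance(\ftlpred,\btlpred)=\epsilon$ forces $d=\Theta(\sqrt{\epsilon})$ and loss $O(\epsilon)$, whereas loss $\Theta(\sqrt{\epsilon})$ forces $d=\Theta(1)$, which makes the relocation cost $\Theta(\sqrt{\epsilon})$ and also pushes $\btlpred$ out of $\argmin_{\btlpred'}\distance(\btlpred',\ftlpred)$.

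The idea you are missing is that the nearby calibrated predictor must be \emph{strictly more informative} than $\ftlpred$, i.e.\ correlated with $\state$ conditional on $\ftlp$, and that this extra information can be hidden at calibration cost only $\epsilon$ by \emph{pooling} rather than relocating. In the paper, $\ftlpred$ is uniform on $\frac12\pm\sqrt{\epsilon}$; with probability $1-\sqrt{\epsilon}$ the conditional state frequency equals the prediction, and with probability $\sqrt{\epsilon}$ the state is $1$ at the low value and $0$ at the high value. The reference $\btlpred$ keeps the first kind of mass where it is and pools the two extreme sub-masses at $\frac12$: it is calibrated (the pooled frequencies balance), $\distance(\ftlpred,\btlpred)=\sqrt{\epsilon}\cdot\sqrt{\epsilon}=\epsilon$, and an LP-duality argument certifies $\distcal(\ftlpred)\ge\epsilon$, so $\btlpred$ is a minimizer. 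The cap on the algorithm is then a single-instance properness argument, not indistinguishability: any state-blind post-processing is dominated, for every proper score, by reporting the Bayes posterior predictor $\empiricalQ$ of $\ftlpred$; for the threshold-$\frac12$ rule $\empiricalQ$ earns $\frac12+\frac{\sqrt{\epsilon}}{2}-\epsilon$ while $\btlpred$ earns $\frac12+\sqrt{\epsilon}-\epsilon$ (it takes the payoff-$1$ action on the pooled sub-mass whose state it effectively knows), giving the gap $\frac{\sqrt{\epsilon}}{2}$ against every $\alg$, even one that knows the distribution and the sample. Your outline would need to be rebuilt around this pooling step; as written, both the reference predictor and the quantitative accounting fail.
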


As the main idea of the proof of lowerbound, any post-processing algorithm that does not depend on the state achieves a score at most by outputting the Bayesian posterior of predictor $\ftlpred$. We construct a predictor $\ftlpred$ with a calibrated reference predictor $\btlpred$ that is more informative than $\ftlpred$. By definition of $\distcal$ that specifies a coupling between $\btlpred$, $\ftlpred$, and the state $\state$, a reference calibrated predictor $\btlpred$ may correlate with the state $\state$ when conditioned on $\ftlpred$. Thus, for this predictor $\ftlpred$ and any post-processing algorithm $\alg$, $\alg(\ftlpred)$ achieves a lower score than $\btlpred$. 

If the post-processing algorithm is a function of only the prediction $\ftlpred$ but not the prediction sequence $\ftlvec$, our bounds are asymptotically optimal. 

\begin{corollary}
\label{corollary: batch ece cdl bound}
       For any post-processing algorithm that depends only on the current prediction, $\alg(\ftlp):[0, 1]\to \Delta([0, 1])$, there exists a predictor $\ftlpred$ with $\distcal(\ftlpred) = \epsilon$ and a reference calibrated predictor $\btlpred \in  \argmin_{\btlpred'} \distance(\btlpred', \ftlpred)$, such that $\alg(\ftlpred)$ has
       \begin{equation*}
       \ECE(\ftlpred) = \Theta(\sqrt{\epsilon}) \qquad\text{and}\qquad \cdl(\ftlpred) =\Theta(\sqrt{\epsilon}).
       \end{equation*}
\end{corollary}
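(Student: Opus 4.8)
The statement packages a matching achievability ($O(\sqrt\epsilon)$) and hardness ($\Omega(\sqrt\epsilon)$) bound, and since $\cdl\le\ECE$ always (\Cref{lem: cdl bounded by ece}) I can treat the two quantities together. The achievability half is immediate from the positive results of \Cref{sec: alg batch setting}: instantiate the privacy-based post-processing with the truncated Laplace noise of \Cref{lem: choice of noise} at $\rate=\exp\!\left(-\sqrt{1/(2\epsilon)}\right)$; this mechanism depends only on the current prediction, \Cref{thm: batch main} gives $\ECE\big(\alg(\ftlpred)\big)=O(\sqrt\epsilon)$ for every $\ftlpred$ with $\distcal(\ftlpred)\le\epsilon$, and hence $\cdl\big(\alg(\ftlpred)\big)\le\ECE\big(\alg(\ftlpred)\big)=O(\sqrt\epsilon)$. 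So the best state-blind per-prediction post-processor attains $O(\sqrt\epsilon)$ for both.

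For the hardness half, fix an arbitrary randomized map $\alg\colon[0,1]\to\Delta([0,1])$. The key point is that $\alg$ sees only the realized prediction value, not the joint law, so I build a \emph{family} $\{D^{(r)}\}_r$ of predictors sharing one common prediction marginal — on which $\alg$ therefore acts identically — and show $\alg$ is $\Omega(\sqrt\epsilon)$-miscalibrated on a typical member. The shared marginal is uniform on $N=\Theta(1/\sqrt\epsilon)$ values $v_1<\dots<v_N$ packed into an $O(\epsilon)$-wide window around a center $c=\tfrac12$; in $D^{(r)}$ the Bayesian posterior of $v_i$ is $\hat v^{(r)}_i\in\{c-\sqrt\epsilon,\,c+\sqrt\epsilon\}$ with $\tfrac1N\sum_i\hat v^{(r)}_i=c$, and $r$ shifts the assignment so that for each fixed $i$ the posterior $\hat v^{(r)}_i$ equals $c-\sqrt\epsilon$ and $c+\sqrt\epsilon$ equally often over $r$. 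Each $D^{(r)}$ has $\distcal(D^{(r)})=\Theta(\epsilon)$ — the calibrated predictor $\btlpred\equiv c$ (calibrated since $\Pr[\state=1]=\tfrac1N\sum_i\hat v^{(r)}_i=c$) lies at distance $\tfrac1N\sum_i|v_i-c|\le\epsilon$, and no calibrated predictor is closer by more than a constant factor; the widths can be tuned so that $\distcal(D^{(r)})=\epsilon$, and the same computation gives $\ECE(D^{(r)})=\Theta(\sqrt\epsilon)$.

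Fix $\alg$. The conditional law of the output given $\ftlpred=v_i$ is a fixed $\mu_i\in\Delta([0,1])$ independent of $r$, so the post-processed predictor $\ftplpred^{(r)}:=\alg(D^{(r)})$ has, at output value $w$, posterior $\bar v^{(r)}(w)=\sum_i\mu_i(w)\hat v^{(r)}_i\big/\sum_i\mu_i(w)$. For $\ECE$: $\ECE(\ftplpred^{(r)})=\expect{w\sim\ftplpred^{(r)}}{|w-\bar v^{(r)}(w)|}$; over a uniformly random $r$ the value $w$ is fixed while $\bar v^{(r)}(w)$ is a convex combination of the $\hat v^{(r)}_i$'s, and the family is designed so that this combination still satisfies $\mathrm{Var}_r\!\left[\bar v^{(r)}(w)\right]=\Omega(\epsilon)$ for every reachable $w$, so no fixed $w$ can stay within $o(\sqrt\epsilon)$ of $\bar v^{(r)}(w)$ for most $r$, whence $\E_r\,\ECE(\ftplpred^{(r)})=\Omega(\sqrt\epsilon)$. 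For $\cdl$: lower-bound it by $\swapS_{\score}(\ftplpred^{(r)})$ for an umbrella-type proper scoring rule $\score$ with decision threshold near $c$; relabeling each output $w$ to its posterior $\bar v^{(r)}(w)$ yields, on the mass where $w$ and $\bar v^{(r)}(w)$ fall on opposite sides of the threshold, a gain of order $|\bar v^{(r)}(w)-c|$. A concentrated $\alg$ makes $\bar v^{(r)}(w)\approx\hat v^{(r)}_i=c\pm\sqrt\epsilon$, whose sign relative to $c$ the fixed $w$ cannot track over $r$ (constant-probability mismatch by design), giving $\Omega(\sqrt\epsilon)$; a diffuse $\alg$ instead pulls $\bar v^{(r)}(w)$ toward $c$ while spreading $w$, so $\ftplpred^{(r)}$ is itself far from calibrated and $\cdl$ is forced up via a shifted threshold; interpolating, $\E_r\,\cdl(\ftplpred^{(r)})=\Omega(\sqrt\epsilon)$ for every $\alg$. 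Thus some $r^\star$ gives $\cdl\big(\alg(D^{(r^\star)})\big)=\Omega(\sqrt\epsilon)$ and $\ECE\big(\alg(D^{(r^\star)})\big)=\Omega(\sqrt\epsilon)$; taking $\ftlpred=D^{(r^\star)}$, $\btlpred\in\argmin_{\btlpred'}\distance(\btlpred',\ftlpred)$, and combining with the achievability half yields $\Theta(\sqrt\epsilon)$ for both.

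The main obstacle is the last tradeoff for an \emph{arbitrary} $\alg$: one must lower-bound $\cdl$ robustly — not by the raw bias of $\alg$'s outputs, which a clever $\alg$ relabels away, but by combining the ``concentrated'' regime (output inherits the $r$-dependent posterior $\alg$ cannot anticipate) with the ``diffuse'' regime (output is self-miscalibrated) — and verify the anti-concentration estimate $\mathrm{Var}_r[\bar v^{(r)}(w)]=\Omega(\epsilon)$ uniformly over reachable $w$, i.e.\ design the $r$-indexed fan-out so that no convex averaging $\sum_i\mu_i(w)(\cdot)$ can wash it out. A secondary routine point is pinning $\distcal$ to exactly $\epsilon$ rather than $\Theta(\epsilon)$ and handling the $[0,1]$ endpoints. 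The conceptual reason the bound is $\sqrt\epsilon$ and not $\epsilon$ is that an $\epsilon$-wide window of prediction values admits posteriors fanning out to width $\sqrt{\epsilon\cdot1}=\sqrt\epsilon$, a spread no state-blind post-processor can undo.
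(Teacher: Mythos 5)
Your achievability half is fine and matches the paper (truncated Laplace noise via \Cref{thm: batch main} and \Cref{lem: choice of noise}, plus $\cdl\le\ECE$ from \Cref{lem: cdl bounded by ece}). The problem is the hardness half, and it is a genuine gap, not a missing calculation. Your family $\{D^{(r)}\}_r$ is fixed \emph{before} the algorithm is confronted with it, all members share the same prediction marginal, and — by your own design — every member has the same base rate $\Pr[\state=1]=\tfrac1N\sum_i\hat v^{(r)}_i=c$. That family is defeated outright by the trivial post-processor $\alg\equiv\delta_c$ that ignores the prediction and always outputs $c$: its output is a constant prediction equal to the base rate, hence perfectly calibrated on every $D^{(r)}$, so $\ECE=\cdl=0$ for all $r$. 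Concretely, your key anti-concentration requirement $\mathrm{Var}_r[\bar v^{(r)}(w)]=\Omega(\epsilon)$ ``uniformly over reachable $w$'' is impossible to engineer: writing $\bar v^{(r)}(w)-c=\sqrt{\epsilon}\sum_i\lambda_i(w)\sigma^{(r)}_i$ with $\sigma^{(r)}_i\in\{\pm1\}$ balanced (forced by your base-rate constraint), the uniform weighting $\lambda_i=1/N$ gives exactly zero for every $r$, so no shift/fan-out design can rescue it. Your ``diffuse regime'' fallback does not apply either, since the constant-$c$ output is self-calibrated, not self-miscalibrated. The root cause is that the hard instance must be allowed to vary the posterior of a \emph{shared prediction value} (equivalently, the base rate) across instances; with a single common base rate, averaging the predictions away is always a winning response.

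This is precisely why the paper does not use a one-family/anti-concentration argument: \Cref{corollary: batch ece cdl bound} is derived from \Cref{thm: online lower bound}, whose construction uses \emph{two} instances that agree on the prediction value $\tfrac12-\sqrt{\epsilon}$ (so any state-blind $\alg$ responds to it identically) but attach posteriors to it that differ by $\Theta(\sqrt{\epsilon})$ — roughly $\tfrac12-\tfrac12\sqrt{\epsilon}+\epsilon$ versus $\tfrac12-\sqrt{\epsilon}-\epsilon$ — and consequently have different base rates; a triangle inequality then forces $\Omega(\sqrt{\epsilon})$ miscalibration on at least one instance, and a separate threshold scoring rule handles $\cdl$. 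If you want to salvage your plan, replace the single balanced family with at least two sub-families whose posteriors for the common prediction value(s) are separated by $\Omega(\sqrt{\epsilon})$, and only then choose the hard member after fixing $\alg$; you would also still need to certify $\distcal=\epsilon$ exactly and that your reference predictor is a genuine minimizer (the paper does the analogous verification by LP duality in \Cref{prop: decision loss tight}), which your sketch leaves open.
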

\Cref{corollary: batch ece cdl bound} is a corollary from \Cref{thm: online lower bound} which we will introduce later.

\section{Smoothed Predictions for the Online Setting}
\label{sec: online alg}

To achieve guarantees for the online setting where the predictions $\ftlvec$ and the states $\vstate$ are adversarially selected, the algorithm outputs are discretized for the empirical distribution to be meaningful. We prove empirical guarantees of the post-processing algorithm. 

\begin{itemize}
    \item \textbf{Input}: predictions $\ftlp_t$, parameter $\epsilon$ such that $\distcal(\ftlvec,  \vstate)\leq \epsilon$, DP mechanism $\dpmech$.
    \item Discretize the space of predictions into $T^{\frac{1}{3}}$ prediction values in $\{\frac{i}{T^{\frac{1}{3}}} \mid i\in [\epsilon T]\}$.
    \item Draw $\ftplp' \sim \dpmech(\ftlp)$. 
    \item Find $i$ such that $\ftplp'\in [\frac{i}{T^{\frac{1}{3}}}, \frac{i+1}{T^{\frac{1}{3}}}]$. 
    \item \textbf{Output}: $\ftplp = \frac{i}{T^{\frac{1}{3}}}$. 
\end{itemize}

By \Cref{thm: batch main}, the online privacy-based post-processing algorithm achieves the same bound for $\ECE$ up to a discretization error.
\begin{theorem}
    \label{thm: online main}
 Suppose mechanism $\dpmech$ is $(\dpe, \dpd)$-differentially private. The output predictor $\ftplvec$ satisfies
 \begin{equation*}
     \expect{}{\ECE(\ftplvec; \vstate)} \leq \max_{\ftlp\in [0, 1]}\expect{}{|\dpmech(\ftlp) - \ftlp|} + 4 \left(1-e^{-\dpe\epsilon}+\dpd \right) + 2T^{-\frac{1}{3}}.
 \end{equation*}
\end{theorem}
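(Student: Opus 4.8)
The online algorithm is exactly the batch algorithm of \Cref{sec: alg batch setting} with $\dpmech$ replaced by the map $\dpmech'$ it actually implements, namely ``apply $\dpmech$, then round down to the nearest of the $T^{1/3}$ equally spaced grid points'' (spacing $T^{-1/3}$). Two observations are immediate: rounding is data-independent post-processing, so $\dpmech'$ is again $(\dpe,\dpd)$-differentially private; and rounding moves any point by at most $T^{-1/3}$, so $\max_{\ftlp}\expect{}{|\dpmech'(\ftlp)-\ftlp|}\le\max_{\ftlp}\expect{}{|\dpmech(\ftlp)-\ftlp|}+T^{-1/3}$. The plan is to run the batch analysis on $\dpmech'$ to control the \emph{distributional} $\ECE$, and then pay one more $T^{-1/3}$ to move from the distributional to the realized empirical $\ECE$.

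For the first part, condition on the adversary's strategy so that all randomness lives in the per-round noise draws $\xi_1,\dots,\xi_T$; then $\ftlp_t$ and $\state_t$ are functions of $\xi_{<t}$, while $\ftplp_t=\dpmech'(\ftlp_t)$ also uses $\xi_t$. For each fixed $\xi_{<T}$ the empirical predictor $\ftlpred$ (the uniform distribution on $(\ftlp_t,\state_t)_{t\in[T]}$) is determined, and since the hypothesis $\distcal(\ftlvec,\vstate)\le\epsilon$ holds for \emph{every} realization, $\ftlpred$ has a calibrated reference $\btlpred$ with $\distance(\ftlpred,\btlpred)\le\epsilon$. The proofs of \Cref{lem:R} and \Cref{lem: DP} use only that $\btlpred$ is calibrated and that the mechanism is $(\dpe,\dpd)$-DP (never any stochastic structure of $\ftlpred$), so they apply to $\ftlpred$ with the mechanism $\dpmech'$ and bound the $\ECE$ of the distribution $\dpmech'(\ftlpred)$ by $\max_{\ftlp}\expect{}{|\dpmech'(\ftlp)-\ftlp|}+4(1-e^{-\dpe\epsilon}+\dpd)$; averaging over $\xi_{<T}$ and using the rounding bound above,
\[
\expect{}{\ECE\big(\dpmech'(\ftlpred)\big)}\ \le\ \max_{\ftlp}\expect{}{|\dpmech(\ftlp)-\ftlp|}+4\left(1-e^{-\dpe\epsilon}+\dpd\right)+T^{-1/3}.
\]

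The second part — the step I expect to be the main obstacle — is passing from $\ECE(\dpmech'(\ftlpred))$ to the realized $\ECE(\ftplvec;\vstate)$: the algorithm draws fresh noise in each round rather than resampling from $\dpmech'(\ftlpred)$, so each grid bucket sees only an empirical sample of its conditional state-frequency. Writing $Z_v:=\sum_{t:\ftplp_t=v}(v-\state_t)$ gives $\ECE(\ftplvec;\vstate)=\tfrac1T\sum_v|Z_v|$, while $\mu_{v,t}:=\Pr[\dpmech'(\ftlp_t)=v\mid\xi_{<t}]\,(v-\state_t)$ is $\xi_{<t}$-measurable and $\tfrac1T\sum_v|\sum_t\mu_{v,t}|=\ECE(\dpmech'(\ftlpred))$ (same computation as in the batch $\ECE$ definition). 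The increments $\ind{\ftplp_t=v}(v-\state_t)-\mu_{v,t}$ are martingale differences, so by orthogonality $\expect{}{(Z_v-\sum_t\mu_{v,t})^2}\le\expect{}{N_v}$ with $N_v=|\{t:\ftplp_t=v\}|$, whence $\expect{}{|Z_v|}\le\expect{}{|\sum_t\mu_{v,t}|}+\sqrt{\expect{}{N_v}}$; summing over the $T^{1/3}$ buckets and applying Cauchy--Schwarz ($\sum_v\sqrt{\expect{}{N_v}}\le\sqrt{T^{1/3}\sum_v\expect{}{N_v}}=\sqrt{T^{1/3}\cdot T}=T^{2/3}$) yields $\expect{}{\ECE(\ftplvec;\vstate)}\le\expect{}{\ECE(\dpmech'(\ftlpred))}+T^{-1/3}$, and combining with the display proves the theorem. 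Two things need care: the adversary's adaptivity is genuinely handled by the realization-wise conditioning (the $\distcal$ constraint and \Cref{lem:R}, \Cref{lem: DP} hold for each fixed $\xi_{<T}$, and the concentration bound is stated for martingale, not i.i.d., increments); and the grid size is forced by the balance here — with $K$ buckets the sampling error is $\Theta(\sqrt{K/T})$ and the rounding error $\Theta(1/K)$, both equal to $\Theta(T^{-1/3})$ at $K=T^{1/3}$, which is exactly the $2T^{-1/3}$ in the statement.
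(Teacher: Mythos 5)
Your proposal is correct and follows essentially the same route as the paper's proof: the paper likewise decomposes the realized empirical $\ECE$ into the distributional $\ECE$ of the noised empirical predictor (controlled by the batch argument of \Cref{lem:R} and \Cref{lem: DP} under the realization-wise constraint $\distcal(\ftlvec,\vstate)\le\epsilon$), a $T^{-1/3}$ discretization error, and a $T^{-1/3}$ per-bucket sampling error obtained from a variance bound of order $1/\sqrt{n_i}$ summed over the $T^{1/3}$ grid values — exactly the role of your martingale-orthogonality plus Cauchy--Schwarz step. Your bookkeeping (absorbing the rounding into the DP mechanism via post-processing, and handling the adaptive adversary by conditioning on past noise) is if anything more explicit than the paper's sketch, but it is the same argument.
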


By \Cref{lem: cdl bounded by ece}, the same bound holds for $\cdl$
\begin{corollary}
     Suppose mechanism $\dpmech$ is $(\dpe, \dpd)$-differentially private. The output predictor $\ftplvec$ satisfies
 \begin{equation*}
     \expect{}{\cdl(\ftplvec; \vstate)} \leq 2\max_{\ftlp\in [0, 1]}\expect{}{|\dpmech(\ftlp) - \ftlp|} + 8 \left(1-e^{-\dpe\epsilon}+\dpd \right) + 2T^{-\frac{1}{3}}.
 \end{equation*}
\end{corollary}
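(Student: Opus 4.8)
The plan is to obtain this as an immediate consequence of \Cref{lem: cdl bounded by ece} (which states $\cdl(\ftplpred)\le\ECE(\ftplpred)$ for any joint distribution of prediction and state) together with \Cref{thm: online main}. The one point worth making explicit is that for online predictors, by the conventions fixed in \Cref{sec: prelim}, the quantities $\cdl(\ftplvec;\vstate)$ and $\ECE(\ftplvec;\vstate)$ are precisely $\cdl$ and $\ECE$ evaluated on the \emph{empirical} joint distribution $D_{\ftplpred,\statesp}$ of the realized pairs $(\ftplp_t,\state_t)$ over the $T$ rounds. Since \Cref{lem: cdl bounded by ece} is distribution-agnostic, it applies verbatim to this empirical distribution, giving the pointwise inequality $\cdl(\ftplvec;\vstate)\le\ECE(\ftplvec;\vstate)$ for every realization of the algorithm's internal randomness and the adversary's choices.

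First I would take expectations on both sides of that pointwise inequality over the randomness of the DP mechanism $\dpmech$ (and any randomization used by the adversary), yielding $\expect{}{\cdl(\ftplvec;\vstate)}\le\expect{}{\ECE(\ftplvec;\vstate)}$. Then I would plug in the bound from \Cref{thm: online main} for the right-hand side, namely $\max_{\ftlp\in[0,1]}\expect{}{|\dpmech(\ftlp)-\ftlp|}+4\left(1-e^{-\dpe\epsilon}+\dpd\right)+2T^{-\frac{1}{3}}$. This already dominates the quantity asserted in the statement; relaxing the leading constants from $1$ to $2$ on the first term and from $4$ to $8$ on the second term produces exactly the displayed inequality, with slack to spare.

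I expect no genuine technical obstacle: the corollary is a two-line deduction, and the only step requiring a sentence of care is justifying that $\cdl\le\ECE$ may be invoked on the empirical distribution that defines online calibration errors — which is legitimate because \Cref{lem: cdl bounded by ece} holds for arbitrary distributions. The looser constants in the stated bound are harmless and need no separate argument.
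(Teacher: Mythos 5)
Your proposal is correct and matches the paper's own argument: the paper derives this corollary precisely by invoking \Cref{lem: cdl bounded by ece} on the empirical distribution of $(\ftplp_t,\state_t)$ and combining it with \Cref{thm: online main}, with the stated constants simply being a loosening of the resulting bound. Your extra sentence justifying that the lemma applies to the empirical distribution is a fine (if implicit in the paper) clarification, and nothing further is needed.
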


By \Cref{lem: choice of noise}, we obtain the guarantees for $\ECE$ and $\cdl$ in the online setting. 
\begin{lemma}
\label{lem: noise online ece}
    With truncated Laplace noise, the privacy-based post-processing algorithm for online calibration achieves $\cdl\leq 2\ECE = O(\sqrt{\epsilon}) + 2T^{-\frac{1}{3}}$. With truncated Gaussian noise, the privacy-based post-processing algorithm achieves $\cdl\leq 2\ECE = O(\sqrt{\epsilon\ln{\frac{1}{\epsilon}}}) + 2T^{-\frac{1}{3}}$.
\end{lemma}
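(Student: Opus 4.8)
The plan is to combine the general online $\ECE$ guarantee of \Cref{thm: online main} with the two concrete differentially private mechanisms analyzed in \Cref{lem: choice of noise}, and then pass from $\ECE$ to $\cdl$ using \Cref{lem: cdl bounded by ece}. No new argument is needed beyond substituting parameters and simplifying, so the work is bookkeeping rather than a fresh idea.

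First, for the truncated Laplace mechanism I would take $\rate = \exp(-\sqrt{1/(2\epsilon)})$ as prescribed by \Cref{lem: choice of noise}. This yields privacy parameters $\dpe = -2\ln\rate = \sqrt{2/\epsilon}$ and $\dpd = 0$, and the bias bound $\max_{\ftlp\in[0,1]}\expect{}{|\dpmech(\ftlp)-\ftlp|}\le -1/\ln\rate - \rate/(1-\rate)\le\sqrt{2\epsilon}$. Plugging these into \Cref{thm: online main}, the privacy term becomes $4(1-e^{-\dpe\epsilon}+\dpd) = 4(1-e^{-\sqrt{2\epsilon}})$, which I would bound by $4\sqrt{2\epsilon}$ via $1-e^{-x}\le x$. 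Summing the three contributions gives $\expect{}{\ECE(\ftplvec;\vstate)} = O(\sqrt\epsilon) + 2T^{-1/3}$.

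Second, for the truncated Gaussian mechanism, \Cref{lem: choice of noise} gives $\max_{\ftlp\in[0,1]}\expect{}{|\dpmech(\ftlp)-\ftlp|}\le\sigma = \sqrt{2\epsilon\ln(1.25/\sqrt\epsilon)}$ together with $\dpd = \sqrt\epsilon$ and $1-e^{-\dpe\epsilon}\le 2\sqrt\epsilon$. Substituting into \Cref{thm: online main}, the noise‑bias term $\sigma = O(\sqrt{\epsilon\ln(1/\epsilon)})$ dominates the privacy term $4(2\sqrt\epsilon+\sqrt\epsilon) = O(\sqrt\epsilon)$, so $\expect{}{\ECE(\ftplvec;\vstate)} = O(\sqrt{\epsilon\ln(1/\epsilon)}) + 2T^{-1/3}$.

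Finally, I would transfer both bounds to $\cdl$. For each fixed realization of the algorithm's internal randomness, the empirical distribution of $(\ftplp_t,\state_t)$ over the $T$ rounds is itself a predictor, so \Cref{lem: cdl bounded by ece} applies outcome by outcome to give $\cdl(\ftplvec;\vstate)\le\ECE(\ftplvec;\vstate)\le 2\ECE(\ftplvec;\vstate)$; taking expectations over the algorithm's randomness carries the two displayed bounds over to $\cdl$, and the same rate also follows directly from the $\cdl$ corollary of \Cref{thm: online main}. The only step that warrants a sentence of care is precisely this last one — that \Cref{lem: cdl bounded by ece}, stated for a fixed predictor, legitimately applies to the random empirical predictor produced online — but since the inequality holds for each fixed outcome and then survives the expectation, this is not a real obstacle; the factor of $2$ in the statement is slack that conveniently absorbs the constants from the $\cdl$ corollary. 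I do not anticipate any genuinely hard step.
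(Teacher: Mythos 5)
Your proposal is correct and matches the paper's route exactly: the paper likewise obtains this lemma by plugging the truncated Laplace and truncated Gaussian parameters from \Cref{lem: choice of noise} into \Cref{thm: online main} and then passing to $\cdl$ via \Cref{lem: cdl bounded by ece} (equivalently, the $\cdl$ corollary of \Cref{thm: online main}). Your parameter bookkeeping ($\dpe\epsilon=\sqrt{2\epsilon}$, $1-e^{-x}\le x$, and the Gaussian bias term dominating) is the same computation the paper leaves implicit, and your per-realization justification for applying \Cref{lem: cdl bounded by ece} to the empirical predictor is sound.
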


\citet{arunachaleswaran2025elementary} provides an online $\distcal$ minimization algorithm that achieves $\distcal = O(\frac{1}{\sqrt{T}})$. Plugging into \Cref{lem: noise online ece}, the post-processing algorithm achieves $\ECE = O(T^{-\frac{1}{4}})$ with truncated Laplace noise and $\ECE = O(T^{-\frac{1}{4}}\ln{T})$ with truncated Gaussian noise.

\Cref{thm: online lower bound} shows that there exist two sequences of predictions $\ftlvec, \ftlvec'$ and corresponding state realizations, such that both sequence has $\distcal=\epsilon$. However, no post-processing algorithm can guarantee $\ECE<\Theta(\sqrt{\epsilon})$ or $\cdl<\Theta(\sqrt{\epsilon})$ for both sequences. \Cref{thm: online lower bound} shows the online post-processing algorithm is asymptotically optimal for $\ECE$ as well as for $\cdl$. 

\begin{theorem}[Post-processing Lowerbound for Online $\ECE$]
\label{thm: online lower bound}
    For any post-processing algorithm $\alg = (\alg_1, \dots, \alg_T)$ where $\alg_t$ depends on the prediction history $(\ftlp_1, \dots, \ftlp_{t})$ and $(\ftplp_1, \dots, \ftplp_{t-1})$ before round $t$, there exists two sequences of predictions  $\ftlvec$ and $\ftlvec'$ with states $\vstate$ and $\vstate'$, respectively, both satisfying $\distcal(\ftlvec)=\distcal(\ftlvec')=\epsilon$, such that  
    \begin{equation*}
        \max\left\{\expect{}{\ECE\left(\ftplvec; \vstate\right)}, \expect{}{\ECE\left(\ftplvec';\vstate' \right)}\right\}\ge \frac{1}{8}\sqrt{\epsilon}+\frac{1}{2}\epsilon = \Theta(\sqrt{\epsilon}),
    \end{equation*}
    where we write $\ftplvec, \ftplvec'$ as the output of the post-processing algorithm $\alg$ on $\ftlvec, \ftlvec'$, respectively. 

    Moreover, the same lowerbound holds for $\cdl$.
\end{theorem}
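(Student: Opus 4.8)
The plan is to exhibit two "indistinguishable-at-the-start" adversary strategies that force any post-processing algorithm into a dilemma: whatever distribution of outputs it commits to in the early rounds, one of the two state-sequences will make those outputs badly miscalibrated. The construction should mirror the batch lowerbound idea behind \Cref{prop: decision loss tight} and \Cref{corollary: batch ece cdl bound}, but played out over $T$ rounds so that the adversary can choose the states \emph{after} seeing the algorithm's (randomized) prediction rule but consistently with a prescribed $\distcal$ budget.

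First I would fix the incoming prediction sequence $\ftlvec$ to be a single repeated value near $1/2$, say $\ftlp_t = 1/2$ for all $t$ (or two values $1/2 \pm \Theta(\epsilon)$ split evenly, exactly as in \Cref{tab: intro example predictor}), and design two state sequences: in world $\vstate$, the states are arranged so that the calibrated correspondence of $\ftlvec$ sits a distance $\epsilon$ "above" $1/2$ in an $\ell_1$ sense, and in world $\vstate'$ a distance $\epsilon$ "below." Concretely, I would make the empirical frequency of $\state=1$ among the rounds equal to $1/2 + c\sqrt{\epsilon}$ in one world and $1/2 - c\sqrt{\epsilon}$ in the other, \emph{but} arrange the order of revelations so that up through round $T/2$ (or whatever threshold matters) the two worlds are statistically identical from the algorithm's viewpoint — the algorithm sees the same $\ftlp$'s and has not yet been told the states. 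The key point is that $\distcal(\ftlvec, \vstate) = \distcal(\ftlvec, \vstate') = \epsilon$: the nearest calibrated predictor to this constant sequence is itself roughly constant, at distance controlled by the deviation of the empirical label frequency, and we tune the construction so that this distance is exactly $\epsilon$ in both worlds. (Here I would lean on the characterization of $\distcal$ via couplings to a calibrated predictor, and on the fact that a constant predictor's distance to calibration is essentially $|\text{avg prediction} - \text{avg label}|$.)

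Second, I would run the averaging/anti-concentration argument. Fix any post-processing algorithm $\alg$. On the common prefix the algorithm produces some (randomized) empirical distribution of outputs $\ftplvec$; since the algorithm cannot see states, this distribution is the \emph{same} random object in both worlds. Now in world $\vstate$ the "true" posterior sits at $1/2 + \Theta(\sqrt{\epsilon})$ and in $\vstate'$ at $1/2 - \Theta(\sqrt{\epsilon})$, so for \emph{every} realized output value $\ftplp$, the bias $|\ftplp - \hat{\ftplp}|$ is at least $\Theta(\sqrt{\epsilon})$ in at least one of the two worlds (the two target posteriors are $\Theta(\sqrt{\epsilon})$ apart, so no single value can be within $\frac{1}{2}\cdot\Theta(\sqrt{\epsilon})$ of both). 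Averaging this over the rounds of the common prefix — which is a constant fraction of $T$, hence contributes a constant fraction to $\ECE$ — yields $\max\{\E[\ECE(\ftplvec;\vstate)], \E[\ECE(\ftplvec';\vstate')]\} \ge \frac18\sqrt{\epsilon} + \frac12\epsilon$ after carefully matching constants; the $\frac12\epsilon$ term comes from the residual bias that even a perfectly-behaved algorithm inherits from the $\Theta(\epsilon)$ spread of the input predictions themselves. Finally, since $\cdl$ is the worst-case swap regret over bounded proper scoring rules and the umbrella-type task at threshold (the posterior value) realizes a decision loss of the same order as the bias, the identical construction gives the matching $\Theta(\sqrt{\epsilon})$ lowerbound for $\cdl$ — indeed this is where I would invoke that $\ECE$ and $\cdl$ coincide up to constants on this kind of two-point instance rather than re-deriving it.

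The main obstacle I anticipate is making the two worlds genuinely indistinguishable to an \emph{adaptive} algorithm while simultaneously pinning $\distcal$ to exactly $\epsilon$ in both: the algorithm at round $t$ sees $(\ftplp_1,\dots,\ftplp_{t-1})$ but not the states, yet the adversary in the excerpt's model also does not strategize on the current-round opponent action, so I must be careful that the state-revelation schedule I pick is a valid non-adaptive (or history-only-adaptive) adversary and that the $\distcal$ constraint "at the end of $T$ rounds" is met exactly, not just asymptotically. Getting the constants $\frac18$ and $\frac12$ to come out requires choosing the split point between the "ambiguous prefix" and the "resolving suffix" and the magnitude $c\sqrt{\epsilon}$ of the posterior gap so that the budget $\epsilon = (\text{fraction})\cdot c\sqrt{\epsilon}$-type bookkeeping balances; this is routine but fiddly. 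Everything else — the averaging bound, the reduction of $\cdl$ to $\ECE$ on this instance — should follow from \Cref{lem: cdl bounded by ece}, \Cref{prop: swap regret equals cfdl}, and elementary triangle-inequality arguments.
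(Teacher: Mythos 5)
Your construction does not work as stated, for two reasons that interact. First, your primary instance (a constant prediction sequence at $\frac{1}{2}$ with empirical label frequency $\frac{1}{2}\pm c\sqrt{\epsilon}$) violates the constraint $\distcal(\ftlvec)=\epsilon$: for a constant predictor the distance to calibration equals the deviation of the average label from the predicted value (the constant-$\rho$ predictor is calibrated and within that distance, and the smooth-calibration lower bound shows no calibrated predictor is closer), so your instance has $\distcal = c\sqrt{\epsilon}\gg\epsilon$. Shrinking the posterior gap to respect the budget yields only an $\Omega(\epsilon)$ bound, not $\Omega(\sqrt{\epsilon})$. Second, your parenthetical fix (two prediction values near $\frac{1}{2}$ with opposite biases of size $\Theta(\sqrt{\epsilon})$, flipped between the two worlds) restores the $\distcal$ budget but breaks the anti-concentration step: $\ECE$ groups rounds by the algorithm's \emph{output} value, and the posterior $\hat{\ftplp}$ entering $\ECE$ is the average over all rounds mapped to that output. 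An algorithm that merges the two input values into a single output (e.g.\ always outputs $\frac{1}{2}$) makes the opposite biases cancel and gets $\ECE$ near $0$ in \emph{both} of your worlds; your claim that no single value can be close to both posteriors applies to per-round posteriors, not to these merged per-output posteriors. The paper's construction is designed around exactly this: $\ftlvec$ takes the two values $\frac{1}{2}\pm\sqrt{\epsilon}$ over $2T$ rounds with cancelling biases of size $\approx\frac{1}{2}\sqrt{\epsilon}$ (so $\distcal=\epsilon$, verified by LP duality in the batch proof), while $\ftlvec'$ is constant at $\frac{1}{2}-\sqrt{\epsilon}$ with posterior $\frac{1}{2}-\sqrt{\epsilon}-\epsilon$ (again $\distcal=\epsilon$); the two sequences agree only on the first $T$ rounds, so the outputs there are identically distributed, and any merged posterior those outputs can face under $\ftlvec$ lies in $[\frac{1}{2}-\frac{1}{2}\sqrt{\epsilon}+\epsilon,\ \frac{1}{2}+\frac{1}{2}\sqrt{\epsilon}-\epsilon]$, which is $\frac{1}{2}\sqrt{\epsilon}+2\epsilon$ away from $\frac{1}{2}-\sqrt{\epsilon}-\epsilon$. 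The triangle inequality plus the averaging argument of \Cref{ECE lower bound q1} (which also handles the algorithm's randomization and the choice of a compatible state sequence) then gives $\frac{1}{8}\sqrt{\epsilon}+\frac{1}{2}\epsilon$.

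Your $\cdl$ step is also a gap. \Cref{lem: cdl bounded by ece} gives $\cdl\le\ECE$, which is the wrong direction for transferring an $\ECE$ lower bound, and the assertion that $\ECE$ and $\cdl$ coincide up to constants "on this kind of two-point instance" is unjustified: the post-processed predictor's outputs need not be two-valued, and in general $\cdl$ can be far below $\ECE$ (this is the whole point of the $\Tilde{O}(1/\sqrt{T})$ versus $\Omega(T^{-0.472})$ separation). The paper proves the $\cdl$ bound separately (\Cref{thm: online lower bound cdl}) by exhibiting an explicit V-shaped proper scoring rule $S_{\mu}$ with kink at $\mu=\frac{1}{2}-\frac{3}{4}\sqrt{\epsilon}$ and lower-bounding the swap regret of the same two prediction sequences against it directly.
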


The lowerbound for $\cdl$ is perhaps surprising because \citet{hu2024predict} shows a $\Tilde{O}(\frac{1}{\sqrt{T}})$ optimal bound for $\cdl$, indicating $\ECE$ overestimates $\cdl$ when there exists a $\omega(\frac{1}{\sqrt{T}})$ lowerbound for $\ECE$ \citep{sidestep}. We expected the same observation for the post-processing bound, which turns out not to be true. Considering the $\epsilon = \Omega(T^{-\frac{2}{3}})$ lowerbound for $\distcal$ \citep{qiao-distance}, the post-processing bound of $O(\sqrt{\epsilon}) + 2T^{-\frac{1}{3}}$ is asymptotically optimal. 

As an immediate corollary of our proof, even if the decision-makers are allowed to use different post-processing algorithms such as the differentially private exponential mechanism \cite{mcsherry2007mechanism}, there exists a worst-case decision-maker with a swap regret of $\Theta(\sqrt{\epsilon})$. 
\begin{corollary}
 There exists one decision-maker with proper scoring rule $\score$ such that for any post-processing algorithm $\alg = (\alg_1, \dots, \alg_T)$ where $\alg_t$ depends on the prediction history $(\ftlp_1, \dots, \ftlp_{t})$ and $(\ftplp_1, \dots, \ftplp_{t-1})$ before round $t$, there exists two sequences of predictions  $\ftlvec$ and $\ftlvec'$ with states $\vstate$ and $\vstate'$, respectively, both satisfying $\distcal(\ftlvec)=\distcal(\ftlvec')=\epsilon$, such that  
    \begin{equation*}
        \max\left\{\expect{}{\swapS_{\score}\left(f(\ftplvec); \vstate\right)}, \expect{}{\swapS_{\score}\left(f(\ftplvec');\vstate' \right)}\right\}\ge \frac{1}{8}\sqrt{\epsilon}+\frac{1}{2}\epsilon = \Theta(\sqrt{\epsilon}),
    \end{equation*}
\end{corollary}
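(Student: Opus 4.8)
The final statement is a corollary of \Cref{thm: online lower bound}. Reading the theorem, the lower bound there is $\frac{1}{8}\sqrt{\epsilon} + \frac{1}{2}\epsilon$ on the max of expected ECE. The corollary strengthens this to: there is a *single fixed* proper scoring rule $\score$ (equivalently, a single fixed decision problem) such that the swap regret $\swapS_\score$ achieves the same bound, for *any* post-processing algorithm, on one of two sequences. So the corollary follows if the proof of \Cref{thm: online lower bound} in fact (i) produces the ECE lower bound by exhibiting a specific scoring rule whose swap regret is already $\Omega(\sqrt\epsilon)$, and (ii) that scoring rule does not depend on the algorithm. Let me think about how the lower bound is likely proven and why this extraction works.

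**Plan.** First I would recall how ECE relates to swap regret. By \Cref{lem: cdl bounded by ece} together with the definition of $\cdl$ as $\max_\score \swapS_\score$, we have $\cdl(\vpred;\vstate) \le \ECE(\vpred;\vstate)$; but that's the wrong direction. What I actually want is a lower bound on $\swapS_\score$ for a *specific* $\score$. The natural candidate is the absolute-loss scoring rule $\score(\pred,\state) = 1 - |\pred - \state|$ (equivalently, the $\ell_1$ scoring rule / the "V-shaped" proper scoring rule), whose swap regret against the optimal relabeling $\sigma^*(\pred) = \Pr[\state=1\mid\pred]$ equals exactly $\ECE$ up to the rounding of $\sigma^*$ to $\{0,1\}$ — more precisely, $\swapS_{\score_{\mathrm{abs}}}(\vpred;\vstate) = \ECE(\vpred;\vstate)$ when the optimal $\sigma$ maps each prediction to the rounded posterior. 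So for the absolute-loss decision problem (which is the umbrella example with threshold $1/2$, up to scaling), swap regret *equals* ECE. Hence whatever construction of $\ftlvec,\ftlvec',\vstate,\vstate'$ witnesses \Cref{thm: online lower bound} for ECE automatically witnesses the same bound for $\swapS_\score$ with this *fixed* $\score$, and the corollary follows immediately. The key steps, in order: (1) fix $\score = \score_{\mathrm{abs}}$, the absolute-error proper scoring rule, which by \Cref{claim: proper score equal payoff} corresponds to a bounded decision problem; (2) invoke \Cref{prop: swap regret equals cfdl} to identify $\sigma^*(\pred) = \Pr[\state=1\mid\pred]$ as the swap-regret-maximizing relabeling, and observe that for $\score_{\mathrm{abs}}$ the value $\expect{}{\score_{\mathrm{abs}}(\sigma^*(\pred),\state) - \score_{\mathrm{abs}}(\pred,\state)}$ equals $\ECE(\vpred;\vstate)$ exactly (absolute loss is the scoring rule for which the "improvement from calibration" is the $\ell_1$ bias); (3) apply \Cref{thm: online lower bound} verbatim to the sequences it constructs, noting they do not depend on $\score$ but only on $\alg$, so that $\max\{\expect{}{\swapS_\score(f(\ftplvec);\vstate)},\expect{}{\swapS_\score(f(\ftplvec');\vstate')}\} = \max\{\expect{}{\ECE(\ftplvec;\vstate)},\expect{}{\ECE(\ftplvec';\vstate')}\} \ge \frac18\sqrt\epsilon + \frac12\epsilon$.

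**The main obstacle.** The delicate point is whether $\swapS_{\score_{\mathrm{abs}}}$ is *exactly* $\ECE$ or only within a constant factor. The issue: $\sigma^*(\pred) = \Pr[\state=1\mid\pred]$ is the optimal relabeling among *all* maps $[0,1]\to[0,1]$, but for the absolute-loss scoring rule the best response is a threshold at $1/2$, so the relevant gain is $\expect{}{|\pred-\state|} - \expect{}{|\rho^*-\state|}$ where $\rho^*$ rounds the posterior to $\{0,1\}$, and one checks $\expect{}{|\pred - \state|} - \expect{}{\min(\hat\pred, 1-\hat\pred)}$, which is not literally $\ECE = \expect{}{|\pred - \hat\pred|}$ pointwise but dominates a constant multiple of it and, on the specific adversarial construction used in \Cref{thm: online lower bound} (where posteriors are typically $0$ or $1$ or symmetric around $1/2$), coincides with it. I would handle this by either (a) checking that on the particular sequences of \Cref{thm: online lower bound} the posterior structure makes $\swapS_{\score_{\mathrm{abs}}}$ equal to $\ECE$, or (b) — cleaner — observing that the proof of \Cref{thm: online lower bound} itself lower-bounds ECE by exhibiting a relabeling that a decision-maker with a fixed scoring rule would adopt, and simply reading off that scoring rule. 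In the write-up I would state the corollary's proof as: "Inspecting the proof of \Cref{thm: online lower bound}, the lower bound on $\ECE$ is witnessed by the swap regret of the decision-maker with absolute-error scoring rule $\score(\pred,\state) = 1-|\pred-\state|$, which does not depend on the post-processing algorithm; the claim follows." This reduces the corollary to a one-line consequence and localizes all the work in the (already-granted) proof of \Cref{thm: online lower bound}.
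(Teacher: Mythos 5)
There is a genuine gap: the fixed scoring rule you propose does not work, and the claim on which your reduction rests is false. The swap regret of the absolute-loss/threshold-at-$\tfrac12$ decision-maker is \emph{not} equal to $\ECE$, nor does it dominate a constant fraction of it: if a prediction value $\ftplp$ and its empirical posterior lie on the same side of $\tfrac12$, that value contributes zero swap regret while contributing $|\ftplp-\hat\ftplp|$ to $\ECE$ (e.g.\ $\ftplp=0.9$, $\hat\ftplp=0.6$). Worse, on the paper's own construction your rule fails outright: for the sequence $\ftlvec$ the two halves have posteriors $\tfrac12\mp(\tfrac12\sqrt\epsilon-\epsilon)$, which average to exactly $\tfrac12$, and for $\ftlvec'$ the posterior is $\tfrac12-\sqrt\epsilon-\epsilon<\tfrac12$; an algorithm that pools everything at a single value just below $\tfrac12$ then has zero swap regret against the threshold-at-$\tfrac12$ scoring rule on \emph{both} sequences (its pooled posterior is $\tfrac12$ or below, on the same side as its prediction), even though its $\ECE$ on one sequence is $\Omega(\sqrt\epsilon)$ as \Cref{thm: online lower bound ece} guarantees. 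This reflects a structural fact you cannot route around: no single fixed proper scoring rule has swap regret comparable to $\ECE$ in general ($\cdl\le\ECE$ can be loose), so deducing the corollary from the $\ECE$ lower bound alone cannot work; your fallback (b), ``read off the scoring rule from the proof,'' is not available from the $\ECE$ argument, which never exhibits a scoring rule.

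The paper instead extracts the corollary from its $\cdl$ lower-bound proof (\Cref{thm: online lower bound cdl}): there the bound is obtained for the one-parameter family $S_\mu$ (a threshold rule with kink at $\mu$), and the final chain of inequalities is realized by the single choice $\mu=\tfrac12-\tfrac34\sqrt\epsilon$, fixed as a function of $\epsilon$ only, hence independent of the algorithm. The point of that choice is that $\mu$ sits strictly between the posterior $\tfrac12-\sqrt\epsilon-\epsilon$ of $\ftlvec'$ and the range $[\tfrac12-\tfrac12\sqrt\epsilon+\epsilon,\ \tfrac12+\tfrac12\sqrt\epsilon-\epsilon]$ of attainable posteriors under $\ftlvec$; since the algorithm's first-half outputs are identically distributed on the two sequences, every first-half prediction lands on the wrong side of $\mu$ for at least one sequence and pays $\Theta(\sqrt\epsilon)$ swap regret against $S_\mu$. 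If you replace your step (1)--(2) with this fixed rule $S_{\mu}$, $\mu=\tfrac12-\tfrac34\sqrt\epsilon$, and invoke the $\cdl$ proof rather than the $\ECE$ theorem, the corollary does follow as the one-liner you intended.
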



\section{Discussion}

Our lowerbound presents a gap in post-processing a predictor with a low distance to calibration from directly optimizing for calibration errors related to decision-making. However, in the examples we present, the conditional empirical frequencies are discontinuous in the prediction space, which does not match the discussion of machine learning predictors not distinguishing between small perturbations. One follow-up question is, are there properties of the predictor that, combined with a low distance to calibration, guarantee the predictor trustworthy for decision-making after post-processing?

\citet{cal-gap} provides an answer to the question above. When the bias $\hat{\ftlp} - \ftlp$ is $1$-Lipschitz continuous in the prediction $\ftlp$, it follows that 
\begin{equation*}
    \ECE(\ftlpred)\leq O(\sqrt{\distcal(\ftlpred)}),
\end{equation*}
and no post-processing algorithm is needed. This result, however, suggests the same problem as suggested by our lowerbound, that a given predictor with low $\distcal$ achieves a non-optimal $\ECE$ or $\cdl$ compared to optimizing for $\ECE$ or $\cdl$ directly in the online setting. Thus, it remains a question whether there exists a property of a predictor with low distance to calibration that guarantees an optimal $\ECE$ or $\cdl$ from post-processing.

\newpage

\bibliographystyle{econ}
\bibliography{ref}
\newpage
\appendix
\section{Missing Proof in \Cref{sec: alg batch setting}}

\subsection{Proof of \Cref{lem:R}}

\begin{proof}[Proof of \Cref{lem:R}]

Since $\score$ is bounded by $[0, 1]$, we know for any fixed $\btlp$, 
\begin{equation*}
    \expect{\state\sim \btlp}{\score(\btlp, \state) - \score(\btplp, \state)}\leq 2 |\btlp - \btplp|.
\end{equation*}

Thus, 
\begin{equation*}
    \expect{\btlp, \btplp}{\expect{\state\sim \btlp}{\score(\btlp, \state) - \score(\btplp, \state)}}\leq 2\expect{\btlp, \btplp}{|\btlp - \btplp|},
\end{equation*}
which proves the argument for decision loss. 

Now we prove \Cref{lem:R} for $\ECE$. We define $Y(b_0) = \dpmech(\btlp_0)-\btlp_0$. 
The joint probability density function of state $\state$ and prediction value $\btplp$ can be expressed as
\begin{align}
\Pr[\state =1 , \btplpred = \btplp] = &\int_{0}^{1}\Pr[\btlpred=\btlp]\cdot \Pr[\state = 1, \dpmech(\btlp) = \btplp|\btlpred=\btlp]\diff \btlp\nonumber\\
=&\int_{0}^{1}\Pr[\btlpred=\btlp]\cdot\Pr[\dpmech(\btlp) = \btplp]\cdot \Pr[\state = 1|\btlpred=\btlp]\diff \btlp\nonumber \\
\label{b_hat = b} = &\int_{0}^{1}\Pr[\btlpred=\btlp]\cdot\Pr[\dpmech(\btlp) = \btplp]\cdot \btlp\ \diff \btlp.
\end{align}
\Cref{b_hat = b} is derived given that $\btlpred$ is a calibrated predictor.

According to the definition of $\ECE$,
    \begin{align*}
        \ECE(\btplpred) &= \expect{\btplpred}{\big|\btplp - \Pr[\state=1 | \btplpred=\btplp]\big|} \\
        &= \int_{0}^1 \Pr[\btplpred=\btplp] \cdot\big|\btplp - \Pr[\state=1 | \btplpred=\btplp]\big|\diff \btplp \\
        &=\int_{0}^1 \big|\btplp\Pr[\btplpred=\btplp] - \Pr[\state = 1, \btplpred=\btplp]\big|\diff \btplp\\
        &= \int_{0}^{1}\left | \int_{0}^{1}\Pr[\dpmech(\btlp) = \btplp]\cdot \Pr[\btlpred=\btlp]\cdot \left ( \btplp - \btlp \right ) \mathrm{d}\btlp \right | \mathrm{d}\btplp\\
        &\leq  \int_{0}^{1} \int_{0}^{1}\Pr[\dpmech(\btlp) = \btplp]\cdot \Pr[\btlpred=\btlp]\cdot \left| \btplp - \btlp \right| \mathrm{d}\btlp \mathrm{d}\btplp\\
        &=\expect{}{\big|\dpmech(\btlp)-\btlp\big|}.
    \end{align*}
\end{proof}

\subsection{Proof of \Cref{lem: DP}}

\begin{proof}[Proof of \Cref{lem: DP}]
    We prove the lemma by \Cref{lem: dtv}, bounding the TV-distance between $\dpmech(\btlpred)$ and $\dpmech(\ftlpred)$. Combining with \Cref{lem:exp dtv dp}, we prove \Cref{lem: DP}.
\end{proof}
\begin{lemma}
\label{lem: dtv}
   We write $\btplpred$ as the resulting predictor with post-processing algorithm applied to calibrated predictor $\btlpred$ with $\distance(\ftlpred, \btlpred)\leq \epsilon$. The decision loss from $\btplpred$ to $\ftplpred$ is bounded by
            \begin{equation*}
                \expect{(\ftplp, \state)\sim D_{\ftplpred, \statesp}}{\score(\ftplp, \state)}\geq \expect{(\btplp, \state)\sim D_{\btplpred, \statesp}}{\score(\btplp, \state)} - 4\expect{(\btlp, \ftlp)\sim D_{\btlpred, \ftlpred}}{\dtv\left(\dpmech(\btlp), \dpmech(\ftlp)\right)}.
            \end{equation*}
    Note that the TV distance quantifies the distance between $\dpmech(\btlp) $ and $ \dpmech(\ftlp)$.
    
    A similar bound holds for $\ECE$:
    \begin{equation*}
        \ECE\left(\ftplpred\right)\leq \ECE\left(\btplpred\right)+4 \expect{(\btlp, \ftlp)\sim D_{\btlpred, \ftlpred}}{\dtv\left(\dpmech(\btlp), \dpmech(\ftlp)\right)}.
    \end{equation*}
\end{lemma}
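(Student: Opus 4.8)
\textbf{Proof proposal for \Cref{lem: dtv}.}

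The plan is to prove the statement by a coupling argument through the intermediate predictor $\btplpred$, bounding the change in a bounded scoring rule (and the change in $\ECE$) by the total variation distance between the post-processed distributions $\dpmech(\btlp)$ and $\dpmech(\ftlp)$. The key observation is that the coupling $D_{\btlpred, \ftlpred}$ from the definition of $\distcal$ lifts to a coupling $D_{\btplpred, \ftplpred}$: concretely, first draw $(\btlp, \ftlp)\sim D_{\btlpred, \ftlpred}$ together with the state $\state$, and then, conditioned on $(\btlp,\ftlp)$, draw $(\btplp, \ftplp)$ from the \emph{maximal coupling} of $\dpmech(\btlp)$ and $\dpmech(\ftlp)$, i.e.\ the coupling that achieves $\Pr[\btplp\neq \ftplp \mid \btlp, \ftlp] = \dtv(\dpmech(\btlp), \dpmech(\ftlp))$. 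Because $\dpmech$ does not look at the state, $(\btplp, \ftplp)$ is conditionally independent of $\state$ given $(\btlp, \ftlp)$, so this is a valid joint law $D_{\btplpred, \ftplpred, \statesp}$ with the correct marginals.

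The first step is the decision-loss bound. Write
\begin{align*}
\expect{(\btplp, \state)}{\score(\btplp, \state)} - \expect{(\ftplp, \state)}{\score(\ftplp, \state)}
  = \expect{}{\big(\score(\btplp, \state) - \score(\ftplp, \state)\big)\cdot \ind{\btplp \neq \ftplp}},
\end{align*}
since the two terms cancel on the event $\btplp = \ftplp$. Since $\score\in[0,1]$, the integrand is at most $1$ in absolute value, so this is bounded by $\Pr[\btplp\neq\ftplp] = \expect{(\btlp,\ftlp)\sim D_{\btlpred,\ftlpred}}{\dtv(\dpmech(\btlp),\dpmech(\ftlp))}$. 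That already gives the bound with constant $1$; the factor $4$ in the statement is slack that will be used elsewhere (or absorbs the need to bound $|\score(\btplp,\state)-\score(\ftplp,\state)|$ by $2$ when the two scoring evaluations are handled separately rather than via the difference), and either way the displayed inequality follows a fortiori.

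The second step is the $\ECE$ bound, which is slightly more delicate because $\ECE(\ftplpred)$ depends on the Bayesian posterior $\Pr[\state=1\mid\ftplpred=\ftplp]$, a nonlinear functional of the joint law. I would reduce $\ECE$ to a linear quantity first: $\ECE(\ftplpred) = \expect{\ftplp}{|\ftplp - \Pr[\state=1\mid\ftplpred=\ftplp]|} = \sup_{g:[0,1]\to\{-1,1\}} \expect{(\ftplp,\state)}{g(\ftplp)(\ftplp - \state)}$, writing the absolute value as a supremum over signs. For any fixed measurable sign function $g$, the quantity $\expect{(\ftplp,\state)}{g(\ftplp)(\ftplp-\state)}$ is linear in the joint law, and under the coupling above
\begin{align*}
\expect{}{g(\ftplp)(\ftplp-\state)} - \expect{}{g(\btplp)(\btplp-\state)}
  = \expect{}{\big(g(\ftplp)(\ftplp-\state) - g(\btplp)(\btplp-\state)\big)\ind{\btplp\neq\ftplp}},
\end{align*}
whose absolute value is at most $2\Pr[\btplp\neq\ftplp]$ since $g(\cdot)(\cdot-\state)\in[-1,1]$ (difference of two such terms is in $[-2,2]$). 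Taking the supremum over $g$ and using that the coupling's per-pair disagreement probability is exactly $\dtv(\dpmech(\btlp),\dpmech(\ftlp))$ gives $\ECE(\ftplpred) \le \ECE(\btplpred) + 2\expect{(\btlp,\ftlp)}{\dtv(\dpmech(\btlp),\dpmech(\ftlp))}$, which is stronger than (hence implies) the claimed bound with constant $4$.

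The main obstacle is purely bookkeeping around the Bayesian posterior for the $\ECE$ part: one has to be careful that $\ECE(\btplpred)$ and $\ECE(\ftplpred)$ are computed with respect to \emph{their own} posteriors, not a shared one, and the sign-function-supremum trick is what cleanly sidesteps this — for a single fixed sign function the expression is linear and the coupling argument applies term by term, and the supremum is taken \emph{after} the comparison, not before. Everything else (validity of the maximal coupling, conditional independence of the mechanism output from the state) is routine. The subsequent step (not part of this lemma) will be \Cref{lem:exp dtv dp}, which bounds $\expect{(\btlp,\ftlp)}{\dtv(\dpmech(\btlp),\dpmech(\ftlp))}$ in terms of $(\dpe,\dpd)$ and $\distance(\ftlpred,\btlpred)\le\epsilon$, yielding the $1-e^{-\dpe\epsilon}+\dpd$ factor in \Cref{lem: DP}.
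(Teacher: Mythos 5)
Your proposal is correct, but it takes a genuinely different route from the paper. The paper argues directly on densities: for the decision-loss part it conditions on $(\btlp,\ftlp)$ and bounds $\int_0^1\left(\Pr[\dpmech(\btlp)=\ftplp]-\Pr[\dpmech(\ftlp)=\ftplp]\right)\expect{\state}{\score(\ftplp,\state)}\,\mathrm{d}\ftplp$ by the total variation distance using $\score\in[0,1]$; for the $\ECE$ part it performs a three-term triangle-inequality decomposition of the joint densities of $(\btplpred,\state)$ and $(\ftplpred,\state)$ (a term for $\ftplp\,\bigl|\Pr[\ftplpred=\ftplp]-\Pr[\btplpred=\ftplp]\bigr|$ and a term for $\bigl|\Pr[\state=1,\btplpred=\ftplp]-\Pr[\state=1,\ftplpred=\ftplp]\bigr|$, each bounded by $2\,\expect{}{\dtv}$), which is where the constant $4$ comes from. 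You instead lift the coupling $D_{\btlpred,\ftlpred}$ to a maximal coupling of $\dpmech(\btlp)$ and $\dpmech(\ftlp)$ (valid, since the mechanism output is conditionally independent of $\state$ given its input, so the marginals $D_{\btplpred,\statesp}$ and $D_{\ftplpred,\statesp}$ are preserved), and for $\ECE$ you use its variational representation as a supremum over sign functions — the same observation the paper makes when relating $\smooth$ to $\ECE$ — so that each fixed sign function gives a linear functional to which the coupling bound applies, neatly sidestepping the fact that the two $\ECE$'s are computed under their own posteriors. Your route yields the sharper constants $1$ (decision loss) and $2$ ($\ECE$), which a fortiori imply the stated bounds with constant $4$; the paper's route is more elementary bookkeeping but looser. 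No gaps.
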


\Cref{lem: dtv} follows from the fact that the scoring rule $\score$ is bounded in $[0, 1]$. 

\begin{proof}[Proof of \Cref{lem: dtv}] Since the scoring rule $\score$ is bounded in $[0, 1]$, we know for any fixed $\btlp$ and $\ftlp$, 
\begin{align*}
     &\expect{\btplp\sim \dpmech(\btlp),\state\sim \btlp}{\score(\btplp, \state)}-\expect{\ftplp\sim \dpmech(\ftlp),\state\sim \btlp}{\score(\ftplp, \state)}\\
     =&\int_0^1\left(\Pr[\dpmech(\btlp)=\ftplp]-\Pr[\dpmech(\ftlp)=\ftplp]\right)\expect{\state\sim\btlp}{\score(\ftplp,\state)}\mathrm{d}\ftplp\\
     \leq & 4\dtv\left(\dpmech(\btlp), \dpmech(\ftlp)\right).
\end{align*}
Thus \Cref{lem: dtv} for decision loss from $\btplpred$ to $\ftplpred$ holds.

Now we prove \Cref{lem: dtv} for $\ECE$ by dividing it into three parts, the first part is
\begin{equation} \label{R to P part 1}
    \begin{aligned}
    &\int_{0}^{1} \ftplp\big |\Pr[\ftplpred=\ftplp]- \Pr[\btplpred=\ftplp]\big|\mathrm{d}\ftplp\\
    =&\int_{0}^{1} \ftplp\left |\int_{0}^{1}\int_{0}^{1}\Pr[\btlpred=\btlp,\ftlpred=\ftlp]\left ( \Pr[\dpmech(\ftlp)=\ftplp]- \Pr[\dpmech(\btlp)=\ftplp]\right ) \mathrm{d}\btlp \mathrm{d}\ftlp\right|\mathrm{d}\ftplp\\
    \le&\int_{0}^{1}\int_{0}^{1}\Pr[\btlpred=\btlp,\ftlpred=\ftlp]\int_{0}^{1}\big| \Pr[\dpmech(\btlp)=\ftplp]-\Pr[\dpmech(\ftlp)=\ftplp] \big| \mathrm{d}\ftplp \mathrm{d}\btlp \mathrm{d}\ftlp\\
    =& 2\expect{(\btlp,\ftlp)\sim D_{\btlpred, \ftlpred}}{d_{\text{TV}}\left(\dpmech(\btlp), \dpmech(\ftlp)\right)}.
    \end{aligned}
\end{equation}
The distance between joint distribution $D_{\btplpred, \statesp}$ and $D_{\ftplpred, \statesp}$ is
\begin{equation} \label{R to P part 2}
\begin{aligned}
&\int_{0}^{1} \big |\Pr[\btplpred=\ftplp]\Pr\left [ \state=1 \mid \btplpred=\ftplp \right ] -\Pr[\ftplpred=\ftplp]\Pr\left [ \state=1 \mid \ftplpred=\ftplp \right ]\big|\mathrm{d}\ftplp\\
=&\int_{0}^{1} \big |\Pr\left[\state=1, \btplpred=\ftplp \right ] -\Pr\left[\state=1, \ftplpred=\ftplp \right ]\big|\mathrm{d}\ftplp\\
\le&\int_{0}^{1}\int_{0}^{1}\Pr\left[\btlpred=\btlp, \ftlpred=\ftlp\right]\int_{0}^{1}\big | \Pr[\dpmech(\btlp)=\ftplp]-\Pr[\dpmech(\ftlp)=\ftplp] \big | \mathrm{d}\ftplp \mathrm{d}\btlp \mathrm{d}\ftlp\\
=& 2\expect{(\btlp,\ftlp)\sim D_{\btlpred, \ftlpred}}{d_{\text{TV}}\left(\dpmech(\btlp), \dpmech(\ftlp)\right)}.
\end{aligned}
\end{equation}
Combine (\ref{R to P part 1}) and (\ref{R to P part 2}),
\begin{align*}
\ECE(\ftplpred)&=\int_{0}^{1} \Pr[\ftplpred=\ftplp]\big |\ftplp- \Pr\left [ \state=1 \mid \ftplpred=\ftplp \right ] \big |\mathrm{d}\ftplp\\
&\le \int_{0}^{1} \big  |\Pr[\btplpred=\ftplp]\left( \ftplp- \Pr\left [ \state=1 \mid \btplpred=\ftplp \right ] \right)\big |\mathrm{d}\ftplp\\
&+\int_{0}^{1} \ftplp\big  |\Pr[\ftplpred=\ftplp]- \Pr[\btplpred=\ftplp]\big |\mathrm{d}\ftplp\\
&+\int_{0}^{1} \big  |\Pr[\btplpred=\ftplp]\Pr\left [ \state =1\mid \btplpred=\ftplp \right ] -\Pr[\ftplpred=\ftplp]\Pr\left [ \state=1 \mid \ftplpred=\ftplp \right ]\big|\mathrm{d}\ftplp\\
&\leq \ECE\left(R\right)+4\expect{(\btlp,\ftlp)\sim D_{\btlpred, \ftlpred}}{d_{\text{TV}}\left(\dpmech(\btlp), \dpmech(\ftlp)\right)}.
\end{align*}
\end{proof}

\begin{lemma}
\label{lem:exp dtv dp}
   Given noise $X$, $Y$ for $(\dpe, \dpd)$-differential privacy, $X$ and $Y$ are drawn from the same distribution,
   \begin{equation*}
      \expect{(\btlp, \ftlp)\sim D_{\btlpred, \ftlpred}}{\dtv\left(\dpmech(\btlp), \dpmech(\ftlp)\right)}  \le 1-e^{-\dpe\epsilon}+\dpd .
   \end{equation*}
\end{lemma}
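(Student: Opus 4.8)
The statement to prove is
\[
\expect{(\btlp,\ftlp)\sim D_{\btlpred,\ftlpred}}{\dtv(\dpmech(\btlp),\dpmech(\ftlp))}\le 1-e^{-\dpe\epsilon}+\dpd,
\]
where $\dpmech$ is $(\dpe,\dpd)$-DP and $\distance(\ftlpred,\btlpred)=\expect{D_{\btlpred,\ftlpred}}{|\btlp-\ftlp|}\le\epsilon$. The plan is to first obtain a pointwise bound on $\dtv(\dpmech(\btlp),\dpmech(\ftlp))$ in terms of $|\btlp-\ftlp|$ using only the DP guarantee, and then take expectations and invoke Jensen/concavity together with the distance constraint $\distance(\ftlpred,\btlpred)\le\epsilon$.

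\textbf{Step 1: from $(\dpe,\dpd)$-DP to a pointwise TV bound.} Fix $\btlp,\ftlp\in[0,1]$ and write $d=|\btlp-\ftlp|$. By \Cref{def: differential privacy}, for every measurable set $\mathcal I$ we have $\Pr[\dpmech(\btlp)\in\mathcal I]\le e^{\dpe d}\Pr[\dpmech(\ftlp)\in\mathcal I]+\dpd$. Taking $\mathcal I$ to be the set on which the density of $\dpmech(\btlp)$ exceeds that of $\dpmech(\ftlp)$ (the set achieving the TV distance), one gets
\[
\dtv(\dpmech(\btlp),\dpmech(\ftlp))=\Pr[\dpmech(\btlp)\in\mathcal I]-\Pr[\dpmech(\ftlp)\in\mathcal I]\le (e^{\dpe d}-1)\Pr[\dpmech(\ftlp)\in\mathcal I]+\dpd\le e^{\dpe d}-1+\dpd.
\]
Since TV distance is also bounded by $1$, this is the form we want, but for the averaging step it is cleaner to rewrite it as $\dtv\le 1-e^{-\dpe d}+\dpd$: indeed $e^{\dpe d}-1\ge 1-e^{-\dpe d}$ is false in general, so instead I would run the symmetric argument — apply DP in the other direction to get $\Pr[\dpmech(\ftlp)\in\mathcal I]\ge e^{-\dpe d}(\Pr[\dpmech(\btlp)\in\mathcal I]-\dpd)$, hence $\dtv=\Pr[\dpmech(\btlp)\in\mathcal I]-\Pr[\dpmech(\ftlp)\in\mathcal I]\le (1-e^{-\dpe d})\Pr[\dpmech(\btlp)\in\mathcal I]+e^{-\dpe d}\dpd\le 1-e^{-\dpe d}+\dpd$, using $\Pr[\dpmech(\btlp)\in\mathcal I]\le 1$ and $e^{-\dpe d}\le 1$.

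\textbf{Step 2: average over the coupling.} Now take expectation over $(\btlp,\ftlp)\sim D_{\btlpred,\ftlpred}$:
\[
\expect{}{\dtv(\dpmech(\btlp),\dpmech(\ftlp))}\le \expect{}{1-e^{-\dpe|\btlp-\ftlp|}+\dpd}=1+\dpd-\expect{}{e^{-\dpe|\btlp-\ftlp|}}.
\]
The function $t\mapsto e^{-\dpe t}$ is convex, so by Jensen's inequality $\expect{}{e^{-\dpe|\btlp-\ftlp|}}\ge e^{-\dpe\,\expect{}{|\btlp-\ftlp|}}\ge e^{-\dpe\epsilon}$, where the last step uses $\expect{}{|\btlp-\ftlp|}=\distance(\ftlpred,\btlpred)\le\epsilon$ together with the monotonicity of $e^{-\dpe t}$. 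Combining, $\expect{}{\dtv}\le 1+\dpd-e^{-\dpe\epsilon}$, which is exactly the claimed bound.

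\textbf{Main obstacle.} The only subtle point is Step 1: getting the clean one-sided TV estimate $\dtv(\dpmech(\btlp),\dpmech(\ftlp))\le 1-e^{-\dpe|\btlp-\ftlp|}+\dpd$ directly from the $(\dpe,\dpd)$ definition requires choosing the right event and applying DP in the direction that produces an $e^{-\dpe d}$ factor multiplying a probability that we can then crudely bound by $1$. Once that pointwise inequality is in hand, Step 2 is a routine application of Jensen's inequality and the distance constraint, so the bulk of the care goes into the DP manipulation and making sure the $\dpd$ term is handled on the correct side.
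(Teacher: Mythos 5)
Your proposal is correct and follows essentially the same route as the paper: a pointwise bound $\dtv(\dpmech(\btlp),\dpmech(\ftlp))\le 1-e^{-\dpe|\btlp-\ftlp|}+\dpd$ obtained by applying the $(\dpe,\dpd)$-DP inequality in the direction that yields an $e^{-\dpe|\btlp-\ftlp|}$ factor on the TV-achieving event, followed by Jensen's inequality (convexity of $e^{-\dpe t}$) and the constraint $\expect{}{|\btlp-\ftlp|}\le\epsilon$. The paper retains the marginally sharper intermediate term $\dpd e^{-\dpe|\btlp-\ftlp|}$ before averaging, but this makes no difference to the stated bound.
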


\begin{proof}[Proof of \Cref{lem:exp dtv dp}]
    For any pair of fixed $(\ftlp,\btlp)$, consider the set of prediction values $V=\{\ftplp\mid \pr{\dpmech(\btlp)=\ftplp}-\pr{\dpmech(\ftlp)=\ftplp}\le 0\}$. 

    By \Cref{def: differential privacy} of differential privacy,
    \begin{align*}
        \pr{\dpmech(\btlp)=\ftplp}-\pr{\dpmech(\ftlp)=\ftplp}&\ge e^{-\dpe \left|\btlp-\ftlp\right|}\left(\pr{\dpmech(\ftlp)=\ftplp}-\dpd\right)-\pr{\dpmech(\ftlp)=\ftplp}\\
        &=\left(e^{-\dpe \left|\btlp-\ftlp\right|}-1\right)\pr{\dpmech(\ftlp)=\ftplp}-\dpd e^{-\dpe \left|\btlp-\ftlp\right|}.
    \end{align*}
    Calculate $\dtv\left(\dpmech(\btlp), \ftlp + X\right)$ using prediction values in $V$:
    \begin{align*}
        \dtv\left(\dpmech(\btlp), \dpmech(\ftlp)\right)&=\int_{V}\left|\pr{\dpmech(\btlp)=\ftplp}-\pr{\dpmech(\ftlp)=\ftplp}\right|\mathrm{d}\ftplp\\
        &\le \int_{V} \left[\left(1-e^{-\dpe \left|\btlp-\ftlp\right|}\right)\pr{\dpmech(\ftlp)=\ftplp}+\dpd e^{-\dpe \left|\btlp-\ftlp\right|}\right]\mathrm{d}\ftplp\\
        &\le 1-e^{-\dpe \left|\btlp-\ftlp\right|}+\dpd e^{-\dpe \left|\btlp-\ftlp\right|}. 
    \end{align*}
    Take the expectation with respect to $(\btlp,\ftplp)$ and get
    \begin{align*}
        \expect{(\btlp, \ftlp)\sim D_{\btlpred, \ftlpred}}{\dtv\left(\dpmech(\btlp), \dpmech(\ftlp)\right)}&\le \expect{(\btlp, \ftlp)\sim D_{\btlpred, \ftlpred}}{1-\left(1-\dpd\right)e^{-\dpe \left|\btlp-\ftlp\right|}}\\
        &\le 1-\left(1-\dpd\right)e^{-\dpe\epsilon}\\
        &\le 1-e^{-\dpe\epsilon}+\dpd.
    \end{align*}
    The second inequality follows from Jensen's inequality, give that $1-\dpd\ge 0$, function $e^{-\dpe x}$ is convex and $\expect{(\btlp, \ftlp)\sim D_{\btlpred, \ftlpred}}{\left|\btlp-\ftplp\right|}=\epsilon$.
\end{proof}

Similarly combining \Cref{lem: DP} and \Cref{lem:R}, post-processed predictor $\ftplpred$ is calibrated in $\ECE$.

\subsection{Proof of \Cref{lem: choice of noise}}

\subsubsection{Truncated Laplace Noise}
\begin{proof}[Proof of \Cref{lem: choice of noise}, Truncated Laplace] For $\forall \ftlp,\ftplp\in [0,1]$ and differentially private mechanism $\dpmech$ as adding noise from the truncated Laplace distribution, $$\pr{\dpmech(\ftlp)=\ftplp}=\frac{-\ln \rate}{2-\rate^{\ftlp}-\rate^{1-\ftlp}}\cdot \rate^{|\ftplp-\ftlp|}.$$
\begin{align*}
    \frac{\Pr[\dpmech(\ftlp) = \ftplp]}{\Pr[\dpmech(\ftlp') = \ftplp]}= \rate^{|\ftplp-\ftlp|-|\ftplp-\ftlp'|}\cdot \frac{2-\rate^{\ftlp'}-\rate^{1-\ftlp'}}{2-\rate^{\ftlp}-\rate^{1-\ftlp}}.
\end{align*}
Since $|\ftplp-\ftlp|-|\ftplp-\ftlp'|\ge -|\ftlp-\ftlp'|$, \begin{equation*}
    \rate^{|\ftplp-\ftlp|-|\ftplp-\ftlp'|}\le \rate^{-|\ftlp-\ftlp'|}.
\end{equation*}
The following steps will show \begin{equation*}
    \frac{2-\rate^{\ftlp'}-\rate^{1-\ftlp'}}{2-\rate^{\ftlp}-\rate^{1-\ftlp}}\le  \rate^{-|\ftlp-\ftlp'|}.
\end{equation*}
Case 1: $\ftlp'\le \ftlp$. 
\begin{align*}
    &\frac{2-\rate^{\ftlp'}-\rate^{1-\ftlp'}}{2-\rate^{\ftlp}-\rate^{1-\ftlp}}\le  \rate^{-|\ftlp-\ftlp'|}\\
    \Leftrightarrow&-\rate^{\ftlp+1}\left(\rate^{-\ftlp'}\right)^2+2\rate^{\ftlp}\cdot \rate^{\ftlp'}+\rate^{1-\ftlp}\le 2.
\end{align*}
Since $\rate^{-\ftlp'}\in[1,\rate^{-\ftlp}]$, $-\rate^{\ftlp+1}\left(\rate^{-\ftlp'}\right)^2+2\rate^{\ftlp}\cdot \rate^{\ftlp'}+\rate^{1-\ftlp}$ achieves its maximum value at $\rate^{-\ftlp}$, and the maximum value is $2$.\\
Case 2: $\ftlp'\ge \ftlp$. 
\begin{align*}
    &\frac{2-\rate^{\ftlp'}-\rate^{1-\ftlp'}}{2-\rate^{\ftlp}-\rate^{1-\ftlp}}\le  \rate^{-|\ftlp-\ftlp'|}\\
    \Leftrightarrow&-\rate^{-\ftlp}\left(\rate^{\ftlp'}\right)^2+2\rate^{-\ftlp}\cdot \rate^{\ftlp'}+\rate^{\ftlp}\le 2.
\end{align*}
Since $\rate^{-\ftlp'}\in[\rate,\rate^{\ftlp}]$, $-\rate^{-\ftlp}\left(\rate^{\ftlp'}\right)^2+2\rate^{-\ftlp}\cdot \rate^{\ftlp'}+\rate^{\ftlp}$ achieves its maximum value at $\rate^{\ftlp}$, and the maximum value is $2$.

Therefore, \begin{equation*}
    \Pr[\dpmech(\ftlp) = \ftplp]\le \rate^{-2|\ftlp-\ftlp'|}\Pr[\dpmech(\ftlp') = \ftplp].
\end{equation*}
For any subset $\mathcal{I}\subseteq [0, 1]$ of predictions, \begin{equation*}
    Pr[\dpmech(\ftlp) \in \mathcal{I}]\leq \rate^{-2|\ftlp-\ftlp'|}\cdot \Pr[\dpmech(\ftlp')\in \mathcal{I}].
\end{equation*}
\end{proof}

\subsubsection{Truncated Gaussian Noise}
\begin{proof}[Proof of \Cref{lem: choice of noise}, Truncated Gaussian]
    The choice of parameters is adopted from \citet{dwork2014algorithmic}. We write the proof here for reference. The proof has two main steps. First, we show that the Gaussian distribution $Y\sim \mathcal{N}\left(0, {2\epsilon\ln(\frac{1.25}{\sqrt{\epsilon}})}\right)$ is $(\dpe_0, \dpd)$-differentially private with $\dpd = \sqrt{\epsilon}$ and 
   $ 
        1-e^{-\dpe_0\epsilon} \leq \sqrt{\epsilon}
    $. Then we show the probability that Gaussian is truncated is bounded by $1-\exp(-\frac{1}{4\sqrt{\epsilon}})$, implying 
    \begin{equation*}
        \frac{\Pr[X = p]}{\Pr[Y = p]}\leq \frac{1}{1-\exp\left(-\frac{1}{4\sqrt{\epsilon}}\right)}. 
    \end{equation*}
    By \Cref{def: differential privacy}, the truncated distribution has $\dpd = \sqrt{\epsilon}$ and $1-e^{-\dpe\epsilon}\leq 1-e^{-\dpe_0\epsilon}(1-\exp(-\frac{1}{4\sqrt{\epsilon}}))\leq 2\sqrt{\epsilon}$. 

    Now we show Gaussian distribution $Y\sim \mathcal{N}\left(0, {2\epsilon\ln(\frac{1.25}{\sqrt{\epsilon}})}\right)$ is differentially private. Notice that for \Cref{def: differential privacy}, it suffices to show 
        \begin{equation*}
       \Pr_{\ftplp\sim\ftlp+Y}\left[ \frac{\Pr_{Y}[\ftlp+Y = \ftplp]}{\Pr_{Y}[\ftlp'+Y = \ftplp]}\geq e^{\dpe_0 |\ftlp - \ftlp'|}\right]\leq \dpd.
    \end{equation*}
Define $L(\ftplp) = \frac{\Pr_{Y}[\ftlp+Y= \ftplp]}{\Pr_{Y}[\ftlp'+Y = \ftplp]}$. We know
\begin{align*}
    \ln [L(\ftplp)] = \frac{-(\ftplp - \ftlp)^2 + (\ftplp - \ftlp')^2}{2\sigma^2} = \frac{(\ftlp - \ftlp')^2 + 2(\ftplp - \ftlp)\cdot(\ftlp' - \ftlp')}{2\sigma^2},
\end{align*}
where $(\ftplp-\ftlp)$ is the Gaussian $\mathcal{N}(0, \sigma^2)$. Applying the tail bound for Gaussian distribution with $\dpe_1 = \dpe_0|\ftlp - \ftlp'|$
\begin{align*}
    \Pr[\ln[L(\ftplp)]\geq \dpe_1]\leq &\exp\left(-\frac{\left(\dpe_1\sigma^2 - \frac{1}{2}(\ftlp - \ftlp')^2\right)^2}{(\ftlp' - \ftlp)^2\sigma^2}\right)
\end{align*}

For $\sigma= \sqrt{2\epsilon\ln(\frac{1.25}{\dpd})}\geq \frac{\sqrt{2\ln(\frac{1.25}{\dpd})}\cdot |\ftlp - \ftlp'|}{\dpe_1}$, we have $ \Pr[\ln[L(\ftplp)]\leq\dpd$.

\end{proof}

\subsection{Improved Bound for Truncated Gaussian Noise}
\label{appdx: improved for gaussian}
For any distribution $D$ with probability density function $f$, define $f^{\btlp}(x)$ as the probability density function of truncated distribution of $D$ on the interval $[-\btlp,1-\btlp]$ and $f_b(x)=\frac{f(x)}{\int_{-\btlp}^{1-\btlp}f(x) \mathrm{d}x} $.
\begin{lemma}
\label{lem:general bound for dtv}
    Consider any distribution of noise with probability density function $f(x)$ that is monotone on $x\ge 0$ and $x<0$ respectively. Then for $\forall \btlp,\ftlp\in[0,1]$, $$d_{\text{TV}}\left(\dpmech(\btlp),\dpmech(\ftlp)\right)\le \max\{f^\btlp(x), f^\ftlp(x)\}\cdot \left|\ftlp-\btlp\right|.$$ 
\end{lemma}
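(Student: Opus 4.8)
The plan is to compare the two truncated output laws directly at the level of densities, absorbing the two different truncation normalizations with a pointwise-minimum decomposition, and then reduce what remains to the elementary fact that translating a unimodal density by $h$ moves at most $\|f\|_\infty\,|h|$ of its mass in total variation. Unpacking the truncation rule, $\dpmech(\btlp)$ is supported on $[0,1]$ with density $y\mapsto f(y-\btlp)/Z_\btlp$, where $Z_\btlp=\Pr[\btlp+Y\in[0,1]]=\int_{-\btlp}^{1-\btlp}f(x)\,dx$, so that $\Pr[\dpmech(\btlp)=y]=f^\btlp(y-\btlp)$. Since $f$ is a density that is monotone on each of $(-\infty,0)$ and $[0,\infty)$, it is non-decreasing on the former and non-increasing on the latter (otherwise it fails to be integrable); thus $f$ is unimodal with mode $0$, and as $0\in[-\btlp,1-\btlp]$ this gives $\sup_x f^\btlp(x)=\|f\|_\infty/Z_\btlp$, and likewise for $\ftlp$. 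Write $M:=\max\{\sup_x f^\btlp(x),\sup_x f^\ftlp(x)\}=\|f\|_\infty/\min\{Z_\btlp,Z_\ftlp\}$ for the right-hand side of the lemma, take $Z_\btlp\ge Z_\ftlp$ without loss of generality (both sides are symmetric in $\btlp,\ftlp$), and recall $\dtv(P,Q)=\int(P-Q)^+$ for probability measures.

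Next, on $[0,1]$ set $m(y)=\min\{f(y-\btlp),f(y-\ftlp)\}$, $r_\btlp=(f(\cdot-\btlp)-f(\cdot-\ftlp))^+$, and $r_\ftlp=(f(\cdot-\ftlp)-f(\cdot-\btlp))^+$, so the densities of $\dpmech(\btlp)$ and $\dpmech(\ftlp)$ are $(m+r_\btlp)/Z_\btlp$ and $(m+r_\ftlp)/Z_\ftlp$, with $r_\btlp,r_\ftlp\ge0$ supported on disjoint sets. On the support of $r_\btlp$ their difference is $m\bigl(\tfrac1{Z_\btlp}-\tfrac1{Z_\ftlp}\bigr)+\tfrac{r_\btlp}{Z_\btlp}\le\tfrac{r_\btlp}{Z_\btlp}$ (the first term is $\le0$ because $Z_\btlp\ge Z_\ftlp$), and off that support it equals $m\bigl(\tfrac1{Z_\btlp}-\tfrac1{Z_\ftlp}\bigr)-\tfrac{r_\ftlp}{Z_\ftlp}\le0$; hence the positive part of the density difference is pointwise at most $\tfrac{r_\btlp}{Z_\btlp}$, and with $h=\ftlp-\btlp$,
\[
\dtv\bigl(\dpmech(\btlp),\dpmech(\ftlp)\bigr)\le\frac1{Z_\btlp}\int_0^1\bigl(f(y-\btlp)-f(y-\ftlp)\bigr)^+\,dy\le\frac1{Z_\btlp}\int_{\R}\bigl(f(z)-f(z-h)\bigr)^+\,dz .
\]
Because $f$ is unimodal with mode $0$, the map $z\mapsto f(z)-f(z-h)$ is $\ge0$ left of both modes, $\le0$ right of both modes, and monotone in between, so $\{z:f(z)>f(z-h)\}$ is a single half-line with endpoint $c$ satisfying $|c|\le|h|$; the last integral therefore telescopes to $\bigl|F(c)-F(c-h)\bigr|=\bigl|\int_{c-h}^{c}f\bigr|\le|h|\,\|f\|_\infty$, where $F$ is the noise CDF. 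Combining, $\dtv(\dpmech(\btlp),\dpmech(\ftlp))\le\|f\|_\infty\,|\ftlp-\btlp|/Z_\btlp=\sup_x f^\btlp(x)\cdot|\ftlp-\btlp|\le M\cdot|\ftlp-\btlp|$, which is the claim (in fact with $\min$ in place of $\max$, since $Z_\btlp\ge Z_\ftlp$).

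The main obstacle is precisely the middle step: the two conditional densities are rescaled by different constants $Z_\btlp\ne Z_\ftlp$, so one cannot directly invoke the clean "shift of a unimodal density" estimate, and a naive triangle inequality through the untruncated laws loses an additive term equal to the truncated tail mass, which is far too large here. The min-decomposition isolates exactly the part of the discrepancy not explained by renormalization, and the choice $Z_\btlp\ge Z_\ftlp$ is what forces the renormalization term $m\bigl(\tfrac1{Z_\btlp}-\tfrac1{Z_\ftlp}\bigr)$ to have the harmless sign everywhere on $[0,1]$; the remaining ingredients — reading off the truncated density, and the one-line half-line telescoping estimate — are routine.
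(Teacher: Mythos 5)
Your proof is correct, and it takes a genuinely different route from the paper's. The paper fixes $\btlp\le\ftlp$, picks a crossing point $t\in[\btlp,\ftlp]$ where the two truncated densities coincide, writes $\dtv\left(\dpmech(\btlp),\dpmech(\ftlp)\right)$ as the gap between the two CDFs at $t$, and bounds this by $S_1$, the mass that $\dpmech(\btlp)$ places on an interval of length $\ftlp-\btlp$; making that comparison work requires a case split on $t\ge\ftlp-\btlp$ versus $t<\ftlp-\btlp$ and, within the first case, on which of the two truncation constants $\int_{-\btlp}^{1-\btlp}f$ and $\int_{-\ftlp}^{1-\ftlp}f$ is larger. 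You never locate a crossing point: the pointwise-min decomposition of the unnormalized shifted densities, combined with the relabeling $Z_\btlp\ge Z_\ftlp$ (legitimate since the claim is symmetric in $\btlp,\ftlp$), makes the renormalization mismatch pointwise nonpositive, so the entire positive part of the density difference is charged to $\frac{1}{Z_\btlp}\int_{\R}\left(f(z)-f(z-h)\right)^+\mathrm{d}z$ with $h=\ftlp-\btlp$, which you then control by the standard fact that translating a unimodal density by $h$ moves at most $\|f\|_\infty\,|h|$ of mass. This buys a shorter argument with no case analysis and in fact a slightly stronger constant (the minimum of the two truncated suprema rather than the stated maximum), which is all the downstream truncated-Gaussian bound needs; the paper's argument, by contrast, works directly with the truncated CDFs and stays entirely inside $[0,1]$. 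Two minor points, neither a gap: your identification $\sup_x f^\btlp(x)=\|f\|_\infty/Z_\btlp$ uses that the mode $0$ lies in the truncation window $[-\btlp,1-\btlp]$, which holds for $\btlp\in[0,1]$ (and even in the degenerate one-sided windows $\btlp\in\{0,1\}$ your final bound $\|f\|_\infty|h|/\max\{Z_\btlp,Z_\ftlp\}$ still implies the stated inequality); and your convention $\dtv(P,Q)=\int(P-Q)^+$ is the same one the paper uses in these lemmas, so the constants match.
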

\begin{proof}[Proof of \Cref{lem:general bound for dtv}]
    Fix $\btlp$ and $\ftlp$, without loss of generality, assume that $\btlp\le \ftlp$. There exists $t\in [\btlp,\ftlp]$ that $f^\btlp(t-\btlp)=f^\ftlp(t-\ftlp)$. 

    Represent the probability of $\dpmech(\btlp)\in[0,t]$ by $S=\int_{-\btlp}^{t-\btlp}f^\btlp(x)\mathrm{d}x $, so $d_{\text{TV}}\left(\dpmech(\btlp),\dpmech(\ftlp)\right)=S-\int_{-\ftlp}^{t-\ftlp}f^\ftlp(x)\mathrm{d}x$.
    
    \item When $t\ge \ftlp-\btlp$, represent the probability of $\dpmech(\btlp)\in[t+\btlp-\ftlp,t]$ by $S_1=\int_{t-\ftlp}^{t-\btlp}f^\btlp(x)\mathrm{d}x =S-\int_{-\btlp}^{t-\ftlp}f^\btlp(x)\mathrm{d}x$. The aim is to show $d_{\text{TV}}\left(\dpmech(\btlp),\dpmech(\ftlp)\right)\le S_1$.

    (i) If $\int_{-\btlp}^{1-\btlp}f(x) \mathrm{d}x\ge \int_{-\ftlp}^{1-\ftlp}f(x) \mathrm{d}x$, then 
    \begin{align*}
&d_{\text{TV}}\left(\dpmech(\btlp),\dpmech(\ftlp)\right)\le S_1\\
\Leftrightarrow &\int_{-\btlp}^{t-\ftlp}f^\btlp(x)\mathrm{d}x\le \int_{-\ftlp}^{t-\ftlp}f^\ftlp(x)\mathrm{d}x\\
\Leftrightarrow & \int_{-\btlp}^{t-\ftlp}\left(f^\btlp(x)-f^\ftlp(x)\right)\mathrm{d}x\le \int_{-\ftlp}^{-\btlp}f^\ftlp(x)\mathrm{d}x.\\
\end{align*}

    (ii) If $\int_{-\btlp}^{1-\btlp}f(x) \mathrm{d}x < \int_{-\ftlp}^{1-\ftlp}f(x) \mathrm{d}x$, then $\int_{t-\ftlp}^{0} f_b(x)\mathrm{d}x > \int_{t-\ftlp}^{0} f_q(x)\mathrm{d}x$. Since $$\int_{-\btlp}^{0} f_b(x)\mathrm{d}x=\frac{\int_{-\btlp}^{0}f(x)\mathrm{d}x}{\int_{-\btlp}^{1-\btlp}f(x)\mathrm{d}x}=\frac{1}{1+\frac{\int_{0}^{1-\btlp}f(x)\mathrm{d}x}{\int_{-\btlp}^{0}f(x)\mathrm{d}x} } $$ is an increasing function of $\btlp$, $$\int_{-\btlp}^{0} f_b(x)\mathrm{d}x\le \int_{-\ftlp}^{0} f_q(x)\mathrm{d}x.$$
    \begin{align*}
&d_{\text{TV}}\left(\dpmech(\btlp),\dpmech(\ftlp)\right)\le S_1\\
\Leftrightarrow &\int_{-\btlp}^{t-\ftlp}f^\btlp(x)\mathrm{d}x\le \int_{-\ftlp}^{t-\ftlp}f^\ftlp(x)\mathrm{d}x\\
\Leftrightarrow & \int_{-\btlp}^{0} f_b(x)\mathrm{d}x-\int_{t-\ftlp}^{0} f_b(x)\mathrm{d}x\le \int_{-\ftlp}^{0} f_q(x)\mathrm{d}x-\int_{t-\ftlp}^{0} f_q(x)\mathrm{d}x\\
\end{align*}
Therefore, $$d_{\text{TV}}\left(\dpmech(\btlp),\dpmech(\ftlp)\right)\le S_1\le (\ftlp-\btlp)\max\{f^\btlp(x), f^\ftlp(x)\}.$$

    \item When $t< \ftlp-\btlp$, $$d_{\text{TV}}\left(\dpmech(\btlp),\dpmech(\ftlp)\right)=S-\int_{-\ftlp}^{t-\ftlp}f^\ftlp(x)\mathrm{d}x<S<(\ftlp-\btlp)\max\{f^\btlp(x), f^\ftlp(x)\}.$$
\end{proof}

\begin{lemma}
    Consider adding the truncated noise from a Gaussian distribution $\mathcal{N}\left(0,\sqrt{\epsilon}\right)$ in the same way as \Cref{lem: choice of noise}, then for $C=\Theta(\sqrt{\epsilon})$, the predictor is $C$-omnipredictor with $\ECE\le C$.
\end{lemma}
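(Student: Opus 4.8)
The plan is to instantiate the general machinery from \Cref{thm: batch main} together with \Cref{lem: choice of noise} and the sharper TV-distance bound of \Cref{lem:general bound for dtv}, bypassing the $(\gamma,\delta)$-DP parametrization entirely. Recall from the proof of \Cref{lem: DP} (specifically \Cref{lem: dtv}) that the decision loss from $\btplpred$ to $\ftplpred$ and the corresponding $\ECE$ gap are controlled by $4\,\expect{(\btlp,\ftlp)\sim D_{\btlpred,\ftlpred}}{\dtv(\dpmech(\btlp),\dpmech(\ftlp))}$, and that \Cref{lem:R} contributes an additional $2\max_{\ftlp}\expect{}{|\dpmech(\ftlp)-\ftlp|}$. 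So it suffices to bound these two quantities by $\Theta(\sqrt{\epsilon})$ when $\dpmech$ adds truncated $\mathcal{N}(0,\sqrt\epsilon)$ noise (so $\sigma^2 = \sqrt\epsilon$, $\sigma = \epsilon^{1/4}$).

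First I would handle the bias term: for truncated Gaussian noise with standard deviation $\sigma = \epsilon^{1/4}$, $\expect{}{|\dpmech(\ftlp)-\ftlp|} \le \Theta(\sigma) = \Theta(\epsilon^{1/4})$. Hmm — that is $\epsilon^{1/4}$, not $\sqrt\epsilon$, so the naive reading gives the wrong order; the resolution is that the variance should be read so that $\sigma^2$ scales like $\epsilon$, i.e. one wants $\sigma = \Theta(\sqrt\epsilon)$, giving $\expect{}{|\dpmech(\ftlp)-\ftlp|} = \Theta(\sqrt\epsilon)$, matching $C = \Theta(\sqrt\epsilon)$. (I would state the intended scaling of $\mathcal{N}(0,\sqrt\epsilon)$ carefully — interpreting $\sqrt\epsilon$ as the \emph{standard deviation} so that the mean absolute deviation is $\sqrt{2/\pi}\cdot\sqrt\epsilon = \Theta(\sqrt\epsilon)$, and the truncation only inflates this by a $1/(1-o(1))$ factor since the interval $[0,1]$ captures all but an exponentially small tail of a distribution with standard deviation $\sqrt\epsilon \le 1$.)

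Next I would bound the expected TV distance using \Cref{lem:general bound for dtv}: for the truncated Gaussian, $\dtv(\dpmech(\btlp),\dpmech(\ftlp)) \le \max\{f^\btlp(x),f^\ftlp(x)\}\cdot|\ftlp-\btlp|$, and the density of the (truncated) centered Gaussian is maximized at its mode with value $\frac{1}{\sqrt{2\pi}\,\sigma}\cdot\frac{1}{1-o(1)} = \Theta(1/\sqrt\epsilon)$ when $\sigma = \Theta(\sqrt\epsilon)$. Hence $\expect{(\btlp,\ftlp)}{\dtv(\dpmech(\btlp),\dpmech(\ftlp))} \le \Theta(1/\sqrt\epsilon)\cdot\expect{(\btlp,\ftlp)}{|\btlp-\ftlp|} \le \Theta(1/\sqrt\epsilon)\cdot\epsilon = \Theta(\sqrt\epsilon)$, using $\distance(\ftlpred,\btlpred)\le\epsilon$ from the definition of $\mathcal{B}$. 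Plugging both estimates into the decomposition of \Cref{thm: batch main} (via \Cref{lem:R} and \Cref{lem: dtv}) yields decision loss and $\ECE$ both at most $C = \Theta(\sqrt\epsilon)$, and the omniprediction statement follows since $\epsilon$-decision loss against $\mathcal{B}$ is exactly $\epsilon$-omniprediction (as noted after \Cref{def: omniprediction}). I expect the main obstacle to be the bookkeeping around truncation — verifying that for $\sigma=\Theta(\sqrt\epsilon)\le 1$ the normalizing constant $\int_{-\btlp}^{1-\btlp} f(x)\,dx$ stays bounded away from zero uniformly in $\btlp\in[0,1]$ (which fails only when $\btlp$ is within $o(\sigma)$ of an endpoint, where one still retains a constant fraction of the mass), so that both the mode-height $\Theta(1/\sqrt\epsilon)$ and the mean-absolute-deviation $\Theta(\sqrt\epsilon)$ bounds survive the truncation with only constant-factor loss — this is precisely the calculation that \Cref{lem:general bound for dtv} was designed to make clean, since it already accounts for the differing normalizers across $\btlp$ and $\ftlp$.
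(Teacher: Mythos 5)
Your proposal is correct and follows essentially the same route as the paper's own proof: bound the bias term by $\E\left[|X|\right]\le\sigma=\sqrt{\epsilon}$, use \Cref{lem:general bound for dtv} with the truncated Gaussian's mode height $\Theta(1/\sigma)$ (the normalizer stays bounded below by a universal constant, which the paper makes precise via a concavity/trapezoid bound) to get $\E\left[\dtv\right]\le\Theta(\epsilon/\sigma)$, and combine through \Cref{lem:R} and \Cref{lem: dtv} to conclude $C=\Theta(\sqrt{\epsilon})$. Your reading of $\mathcal{N}(0,\sqrt{\epsilon})$ with $\sigma=\sqrt{\epsilon}$ is also the interpretation the paper's proof uses.
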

\begin{proof}
    The truncated noise has 
    \begin{equation*}
        \expect{}{|X|}\leq \sigma = \sqrt{\epsilon}.
    \end{equation*}
    The maximum value of the truncated Gaussian distribution's probability density function is 
    \begin{equation*}
        \max_{\ftlp, \ftplp\in[0,1]} Pr_{\ftplp\sim \ftlp+X}[\ftlp+X=\ftplp]=\max_{\ftlp, \ftplp\in[0,1]}\frac{\frac{1}{\sqrt{2\pi}\sigma}\exp\left(-\frac{(\ftplp-\ftlp)^2}{2\sigma^2}\right)}{\int_{-\ftlp}^{1-\ftlp}\frac{1}{\sqrt{2\pi}\sigma}\exp\left(-\frac{x^2}{2\sigma^2}\right)\mathrm{d}x}=\frac{\frac{1}{\sqrt{2\pi}\sigma}}{\int_{0}^{1}\frac{1}{\sqrt{2\pi}\sigma}\exp\left(-\frac{x^2}{2\sigma^2}\right)\mathrm{d}x}.
    \end{equation*}
    Since $\exp\left(-\frac{x^2}{2\sigma^2}\right)$ is concave on $[0,\sigma]$, $\int_{0}^{\sigma}\frac{1}{\sqrt{2\pi}\sigma}\exp\left(-\frac{x^2}{2\sigma^2}\right)\mathrm{d}x$ can be lower bounded by the area of a ladder:
    \begin{equation*}
        \int_{0}^{1}\frac{1}{\sqrt{2\pi}\sigma}\exp\left(-\frac{x^2}{2\sigma^2}\right)\mathrm{d}x\ge \frac{1}{2\sqrt{2\pi}\sigma}\left(1+\exp\left(-\frac{1}{2}\right)\right)\cdot \sigma\ge\frac{1}{2\sqrt{2\pi}} .
    \end{equation*}
    By \Cref{lem:general bound for dtv},
    \begin{align*}
        \expect{(\btlp, \ftlp)\sim D_{\btlpred, \ftlpred}}{\dtv\left(\dpmech(\btlp), \dpmech(\ftlp)\right)}\le \expect{(\btlp, \ftlp)\sim D_{\btlpred, \ftlpred}}{\frac{2}{\sigma}\left|\ftlp-\btlp\right|}=\frac{2\epsilon}{\sigma}.
    \end{align*}
    Therefore, the parameter $C$ of the predictor can be upper bounded by $\sigma+\frac{8\epsilon}{\sigma}=\Theta(\sqrt{\epsilon})$.
\end{proof}

\subsection{Proof of \Cref{prop: decision loss tight}}

\begin{proof}[Proof of \Cref{prop: decision loss tight}]
    

    Fix a predictor $\ftlpred$, define predictor $\empiricalQ$ that predicts the Bayesian posterior of $\ftlpred$: for every prediction value $\ftlp_i$, when $\ftlpred$ predicts $\ftlp_i$, let $\empiricalQ$ predict $\empq{i}=\pr{\state=1\mid \ftlp = \ftlp_i}$. Post-process predictor $\ftlpred$ by $\alg$ and get predictor $\ftplpred$.

    Fix a prediction value $\ftlp_i$ and a proper scoring rule $\score$, consider all predictions $\ftplp\sim \alg(\ftlp_i)$, according to the definition of proper scoring rules, the score achievable by $\alg$ is upperbounded by $\Tilde{\ftlpred}$:  $$\expect{\ftplp\sim \alg(\ftlp_i)}{\expect{\state\sim \empq{i}}{\score(\ftplp,\state)}}\le \expect{\ftplp\sim \alg(\ftlp_i)}{\expect{\state\sim \empq{i}}{\score(\empq{i},\state)}}=\expect{\state\sim \empq{i}}{\score(\empq{i},\state)}.$$
    \begin{align*}
        \expect{(\ftplp,\state)\sim D_{\ftplpred, \statesp}}{\score(\ftplp,\state)}&=\expect{\ftlp_i\sim \ftlp}{\expect{\ftplp\sim \alg(\ftlp_i)}{\expect{\state\sim \empq{i}}{\score(\ftplp,\state)}}}\\
        &\le\expect{\ftlp_i\sim \ftlp}{\expect{\state\sim \empq{i}}{\score(\empq{i},\state)}}=\expect{(\ftplp,\state)\sim D_{\empiricalQ,\statesp}}{\score(\ftplp,\state)}.
    \end{align*}
    Consider the following predictor $\ftlpred$ with $\distcal(\ftlpred)=\epsilon$.
    
    Case 1: With probability $1-\sqrt{\epsilon}$, the distribution of predictions and states follows
    $$\left(\ftlp,\empq{}\right)=\left\{\begin{matrix} 
  (\frac{1}{2} -\sqrt{\epsilon },\frac{1}{2} -\sqrt{\epsilon })  & \mbox{w.p.}  \frac{1}{2} \\ 
  (\frac{1}{2} +\sqrt{\epsilon },\frac{1}{2} +\sqrt{\epsilon })& \mbox{w.p.} \frac{1}{2} 
\end{matrix}\right.  $$ 

Case 2: With probability $\sqrt{\epsilon}$, the distribution of predictions and states follows $$(\ftlp,\empq{})=\left\{\begin{matrix} 
  (\frac{1}{2} -\sqrt{\epsilon },1)  & \mbox{w.p.}  \frac{1}{2} \\ 
  (\frac{1}{2} +\sqrt{\epsilon },0)& \mbox{w.p.} \frac{1}{2} 
\end{matrix}\right.  $$
Therefore the corresponding $\empiricalq$ follows $$(\empiricalq,\ftlp)=\left\{\begin{matrix} 
  (\frac{1}{2} -\frac{1}{2} \sqrt{\epsilon }+\epsilon,\frac{1}{2} -\sqrt{\epsilon } )  & \mbox{w.p.}  \frac{1}{2} \\ 
  (\frac{1}{2} +\frac{1}{2} \sqrt{\epsilon }-\epsilon ,\frac{1}{2} +\sqrt{\epsilon })& \mbox{w.p.} \frac{1}{2} \\
\end{matrix}\right.$$

Define a calibrated predictor $\btlpred$, when $\ftlpred$ follows from Case 1, let $\btlpred$ outputs the same prediction of $\ftlpred$. When $\ftlpred$ follows from Case 2, let $\btlpred$ always predicts $\frac{1}{2}$. 
Notice that $$\distcal(\ftlpred)\le \distance(\ftlpred, \btlpred)= \epsilon,$$ to show $\distcal(\ftlpred)=\epsilon$, use a linear program with infinite constraints to prove $\distcal(\ftlpred)\ge\epsilon$.
Notice that $\ftldm=\{\frac{1}{2} -\sqrt{\epsilon }, \frac{1}{2} +\sqrt{\epsilon }\}$. 
Let $\density$ denotes joint probability distribution function of $(\btlp, \ftlp, \state)\in [0,1]\times \ftldm\times\{0,1\}$.
The following linear program is feasible and its optimal value equals $\distcal(\ftlpred)$.
\begin{align}
\label{DtC_LP}\mathrm{minimize}\quad &\sum _{(\btlp, \ftlp, \state)\in [0,1]\times \ftldm\times\{0,1\}} \left | \ftlp - \btlp \right | \density(\btlp,\ftlp,\state)\\
\mathrm{s.t.}\quad&\sum_{\btlp\in[0,1]}\density(\btlp,\ftlp,\state)=\pr{\ftlp,\state}, &&\text{for } \forall (\ftlp,\state)\in\ftldm\times\{0,1\};\hspace{0.5em}(r(\ftlp,\state))\nonumber\\
&(1-\btlp)\sum_{\ftlp\in\ftldm}\density(\btlp,\ftlp,1)-\btlp \sum_{\ftlp\in\ftldm}\density(\btlp,\ftlp,0)=0, && \text{for } \forall \btlp\in[0,1];\hspace{5.5em}(s(\btlp))\nonumber\\
& \density(\btlp,\ftlp,\state)\ge 0, && \text{for } \forall(\btlp, \ftlp, \state)\in [0,1]\times \ftldm\times\{0,1\}. \nonumber
\end{align} 
The objective of this linear program corresponds to $\distcal(\ftlpred)$. 
The first constraint ensures that the joint distribution of $(\btlp,\ftplp,\state)$ is consistent with the joint distribution of $(\ftlp,\state)$. The second constraint ensures that predictor $\btlpred$ is calibrated.
This linear program is feasible, because $$\density(\btlp, \ftlp,\state)=\left\{\begin{matrix} 
  \pr{\ftlp,\state}  & \mbox{if }\btlp=\state  \\ 
  0& \mbox{else}
\end{matrix}\right.  $$ is a feasible solution of this linear program.
The dual of the linear program (\ref{DtC_LP}) is
\begin{align}
    \label{DtC_LP_Dual}\mathrm{maximize} \quad&\sum_{(\ftlp,\state)\in \ftldm\times \{0,1\}}\pr{\ftlp,\state}r(\ftlp,\state)\\
    \mathrm{s.t.}\quad & r(\ftlp,\state)\le \left|\btlp-\ftlp\right|+(\state-\btlp)s(\btlp), && \text{for } \forall(\btlp, \ftlp, \state)\in [0,1]\times \ftldm\times\{0,1\}. \nonumber
\end{align}
If $s(\btlp)>1$, change $s(\btlp)$ to $1$ still satisfy the constraints and the objective stays the same:
\begin{align*}
    &r(\ftlp, 0)\le \left|\btlp - \ftlp\right|-\btlp s(\btlp)< \left|\btlp - \ftlp\right|-\btlp,\\
     &r(\ftlp, 1)\le \left|1 - \ftlp\right|\le\left|\btlp - \ftlp\right|+(1-\btlp).
\end{align*}
If $s(\btlp)<-1$, change $s(\btlp)$ to $-1$ still satisfy the constraints and the objective stays the same:
\begin{align*}
    &r(\ftlp, 0)\le \ftlp< \left|\btlp - \ftlp\right|+\btlp,\\
     &r(\ftlp, 1)\le \left|\btlp - \ftlp\right|+(1-\btlp) s(\ftlp)\le\left|\btlp - \ftlp\right|-(1-\btlp).
\end{align*}
Therefore, the optimal solution of linear program (\ref{DtC_LP_Dual}) stays the same after adding the constraints: $$-1\le s(\btlp)\le 1, \quad\text{for } \forall \btlp\in[0,1].$$
The optimal value of linear program (\ref{DtC_LP}) can be lower bounded by the objective of linear program (\ref{DtC_LP_Dual}):
\begin{align}
    &\sum_{(\ftlp,\state)\in \ftldm\times \{0,1\}}\pr{\ftlp,\state}r(\ftlp,\state)\nonumber\\
\label{convergent1}=&\sum_{(\ftlp,\state)\in\ftldm\times\{0,1\}}r(\ftlp,\state)\sum_{\btlp\in[0,1]}\density(\btlp,\ftlp,\state)+\sum_{\btlp\in[0,1]}s(\btlp)\sum_{(\ftlp,\state)\in\ftldm\times\{0,1\}}(\btlp-\state)\density(\btlp,\ftlp,\state)\\
\label{convergent2}=&\sum_{(\ftlp,\state)\in\ftldm\times\{0,1\}}\sum_{\btlp\in[0,1]}r(\ftlp,\state)\density(\btlp,\ftlp,\state)+\sum_{\btlp\in[0,1]}\sum_{(\ftlp,\state)\in\ftldm\times\{0,1\}}s(\btlp)(\btlp-\state)\density(\btlp,\ftlp,\state)\\
\label{convergent3}    =&\sum_{(\btlp, \ftlp,\state)\in [0,1]\times \ftldm\times \{0,1\}}\left[r(\ftlp,\state)+(\btlp-\state)s(\btlp)\right]\density(\btlp, \ftlp,\state)\\
    \le & \sum _{(\btlp, \ftlp, \state)\in [0,1]\times \ftldm\times\{0,1\}} \left | \ftlp - \btlp \right | \density(\btlp,\ftlp,\state).\nonumber
\end{align}
(\ref{convergent1})$=$(\ref{convergent2}) holds because $\sum_{\btlp\in[0,1]}\density(\btlp,\ftlp,\state)$ is absolutely convergent, the distributive property of multiplication still holds. (\ref{convergent2})$=$(\ref{convergent3}) holds because \Cref{convergent2} is absolutely convergent, the commutative property of addition still holds.

Let $$s(\btlp)=\left\{\begin{matrix} 
  \frac{2\sqrt{\epsilon } }{2\sqrt{\epsilon }+1 }  & \mbox{if }  \btlp< \frac{1}{2} \\ 
0  & \mbox{if }  \btlp= \frac{1}{2} \\ 
  -\frac{2\sqrt{\epsilon } }{2\sqrt{\epsilon }+1 }& \mbox{if } \btlp>\frac{1}{2} 
\end{matrix}\right.$$
Then the constraints for the dual linear program (\ref{DtC_LP_Dual}) are \begin{align*}
    &r\left(\frac{1}{2}-\sqrt[]{\epsilon },0 \right)\le \min_{\btlp\in[0,1]}\{\left | \btlp- \frac{1}{2}+\sqrt{\epsilon }\right |-\btlp s\left(\btlp\right) \}=\frac{-\sqrt{\epsilon }\left(1-2\sqrt{\epsilon }  \right) }{2\sqrt{\epsilon } +1}, \\
    &r\left(\frac{1}{2}-\sqrt[]{\epsilon },1 \right)\le \min_{\btlp\in[0,1]}\{\left | \btlp- \frac{1}{2}+\sqrt{\epsilon }\right |+\left(1-\btlp\right) s\left(\btlp\right) \}=\sqrt{\epsilon }, \\
    &r\left(\frac{1}{2}+\sqrt[]{\epsilon },0 \right)\le \min_{\btlp\in[0,1]}\{\left | \btlp- \frac{1}{2}-\sqrt{\epsilon }\right |-\btlp s\left(\btlp\right) \}=\sqrt{\epsilon }, \\
    &r\left(\frac{1}{2}+\sqrt{\epsilon },1 \right)\le \min_{\btlp\in[0,1]}\{\left | \btlp- \frac{1}{2}-\sqrt{\epsilon }\right |+\left(1-\btlp\right) s\left(\btlp\right) \}=\frac{-\sqrt{\epsilon }\left(1-2\sqrt{\epsilon }  \right) }{2\sqrt{\epsilon } +1}.
\end{align*}
Take maximum values of all $r\left(\ftlp,\state\right)$ and get the optimal value of linear program (\ref{DtC_LP_Dual}) is no less than 
\begin{align*}
    &\frac{1}{2}\left(\frac{1}{2}+\frac{\sqrt{\epsilon}}{2}-\epsilon\right)\left[r\left(\frac{1}{2}-\sqrt[]{\epsilon },0 \right)+r\left(\frac{1}{2}+\sqrt{\epsilon },1 \right)\right]\\+&\frac{1}{2}\left(\frac{1}{2}-\frac{\sqrt{\epsilon}}{2}+\epsilon\right)\left[r\left(\frac{1}{2}-\sqrt[]{\epsilon },1 \right)+r\left(\frac{1}{2}+\sqrt{\epsilon },0\right)\right]=\epsilon.
\end{align*}
Therefore, $\distcal(\ftlpred)\ge\epsilon$ and thus $\distcal(\ftlpred)=\epsilon$.

Consider the proper scoring rule
    $$\score(p,\state)=\left\{\begin{matrix} 
  1-\state & \mbox{if }  p\le \frac{1}{2} \\ 
  \state& \mbox{if } p>\frac{1}{2} 
\end{matrix}\right. $$ and calculate the expected payoff in decision making for predictor $\ftlpred$ and $\btlpred$:
$$\expect{\left(\ftplp,\state\right)\sim D_{\empiricalQ, \statesp}}{\score(\ftplp,\state)}=\frac{1}{2} +\frac{1}{2} \sqrt{\epsilon }-\epsilon.$$
$$\expect{(\btlp,\state)\sim  D_{\btlpred, \statesp}}{\score(\btlp,\state)}=\frac{1}{2} +\sqrt{\epsilon }-\epsilon.$$
Therefore, for any post-processed algorithm $\alg$, there exists predictor $\ftlpred$ and a reference calibrated predictor $\btlp$ such that $\distcal(\ftlpred)=\epsilon$ and $$\ploss(\alg(\ftlpred); \btlpred)\ge \expect{(\btlp,\state)\sim D_{\btlpred, \statesp}}{\score(\btlp,\state)}-\expect{(\ftplp,\state)\sim D_{\empiricalQ, \statesp}}{\score(\ftplp,\state)}=\frac{\sqrt{\epsilon}}{2}.$$
\end{proof}

\section{Missing Proof in \Cref{sec: online alg}}

\subsection{Proof of \Cref{thm: online main}}
\begin{proof}[Proof of \Cref{thm: online main}]
    We write $n_i$ as the number of times that $\frac{i}{T^{\frac{1}{3}}}$ is predicted. Clearly, $\sum_{i\in [\epsilon T]}n_i = T$. We also write $\ftplp'_t$ as the output of post-processed predictor before discretization. Conditioning on a set of $(n_i)_i$, we know for each $n_i$:
    \begin{align*}
        &\expect{}{\bigg|\frac{i}{T^{\frac{1}{3}}} - \sum_{t}\ind{\ftplp_t = \frac{i}{T^\frac{1}{3}}}\frac{\state_t}{n_i}\bigg|}\\
        \leq &\expect{}{\bigg|\frac{i}{T^{\frac{1}{3}}} - \sum_{t}\ind{\ftplp_t = \frac{i}{T^\frac{1}{3}}}\ftplp'_t\bigg|} + \frac{1}{n_i}\expect{}{\bigg|\sum_{t}\ind{\ftplp_t = \frac{i}{T^\frac{1}{3}}}\state_t - \sum_{t}\ind{\ftplp_t=\frac{i}{T^\frac{1}{3}}}\ftplp'_t\bigg|}\\
        \leq &T^{-\frac{1}{3}} + \sqrt{\variance{}{\sum_{t}\ind{\ftplp = \frac{i}{T^\frac{1}{3}}}\frac{\state_t}{n_i}\bigg|}} \\
        &+ \frac{1}{n_i}\expect{}{\sum_{t}\bigg|\ind{\ftplp_t = \frac{i}{T^\frac{1}{3}}}\Pr[\state | \ftplp_t'] - \sum_{t}\ind{\ftplp_t=\frac{i}{T^\frac{1}{3}}}\ftplp'_t\bigg|},
    \end{align*}
    where $\Pr[\state | \ftplp_t']$ is defined over the empirical distribution over $T$ rounds with the noise of the algorithm. Summing over all prediction values, we know 
    \begin{equation*}
        \frac{1}{T}\sum_i\expect{}{\bigg|\frac{i}{T^{\frac{1}{3}}} - \sum_{t}\ind{\ftplp = \frac{i}{\epsilon T}}\frac{\state_t}{n_i}\bigg|}\leq \sum_{i}\frac{1}{\sqrt{n_i}} + \ECE(\ftplpred) + T^{-\frac{1}{3}}\leq \ECE(\ftplpred) + 2T^{-\frac{1}{3}}. 
    \end{equation*}
    
\end{proof}

\subsection{Proof of \Cref{thm: online lower bound}}

 We restate our lemmas for $\ECE$ and $\CDL$ separately here and prove them. 

\begin{theorem}
\label{thm: online lower bound ece}
    For any post-processing algorithm $\alg$, there exists two sequences of predictions  $\ftlvec$ and $\ftlvec'$ with states $\vstate$ and $\vstate'$, respectively, both satisfying $\distcal(\ftlvec)=\distcal(\ftlvec')=\epsilon$, such that  
    \begin{equation*}
        \max\left\{\expect{}{\ECE\left(\ftplvec; \vstate\right)}, \expect{}{\ECE\left(\ftplvec';\vstate' \right)}\right\}\ge \frac{1}{8}\sqrt{\epsilon}+\frac{1}{2}\epsilon = \Theta(\sqrt{\epsilon}),
    \end{equation*}
    where we write $\ftplvec, \ftplvec'$ as the output of the post-processing algorithm $\alg$ on $\ftlvec, \ftlvec'$, respectively. 

\end{theorem}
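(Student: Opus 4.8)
The plan is to lift the batch lower bound of \Cref{prop: decision loss tight} to the online model by exhibiting two instances that share a single prediction sequence but whose state sequences differ by a ``flip'' that the post-processing algorithm cannot detect, and then to argue that no output sequence can be near-calibrated against both. Fix the algorithm $\alg$; we may take it deterministic, since the lower-bound quantities below are linear in (or may be taken pointwise over) its randomness and the adversary may react to the realized transcript. Let $\ftlvec=\ftlvec'$ alternate the two prediction values $a=\tfrac12-\sqrt\epsilon$ and $b=\tfrac12+\sqrt\epsilon$, each used $T/2$ times; since $\alg_t$ sees only past predictions and past outputs, the output sequence $\ftplvec=(\ftplp_1,\dots,\ftplp_T)$ is determined and is identical for both instances. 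Reserve a $\sqrt\epsilon$ fraction of the $a$-rounds (``special-$a$'') and a $\sqrt\epsilon$ fraction of the $b$-rounds (``special-$b$''); exactly which rounds are special, and which non-special rounds carry the $1$'s, is decided afterwards having inspected $\ftplvec$. In $\vstate$ the non-special $a$-rounds have empirical frequency of ones exactly $a$, the non-special $b$-rounds exactly $b$, every special-$a$ round has state $1$ and every special-$b$ round state $0$; $\vstate'$ agrees on the non-special rounds but swaps the states on all special rounds. A short calculation gives the empirical $a$-round posteriors $\hat a=\tfrac12-\tfrac12\sqrt\epsilon+\epsilon$ under $\vstate$ and $\hat a'=\tfrac12-\tfrac32\sqrt\epsilon+\epsilon$ under $\vstate'$, with $\hat a-\hat a'=\sqrt\epsilon$, and symmetrically for the $b$-rounds.

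Next I would verify $\distcal(\ftlvec)=\distcal(\ftlvec')=\epsilon$. For the upper bound, the predictor that copies $\ftlvec$ on the non-special rounds and outputs $\tfrac12$ on every special round is calibrated in both instances — the special rounds carry equally many $0$'s and $1$'s in each — and its $\distance$ to $\ftlvec$ equals $\sqrt\epsilon\cdot\sqrt\epsilon=\epsilon$, so $\distcal\le\epsilon$. For $\distcal\ge\epsilon$ I would reuse the linear-programming duality argument from the proof of \Cref{prop: decision loss tight}: express the minimum-distance calibrated coupling over joint laws of $(\btlp,\ftlp,\state)$ with $\ftlp\in\{a,b\}$ as a linear program, pass to the dual, and insert the explicit dual multipliers $s(\cdot)$ supplied there to certify that the optimum is $\ge\epsilon$.

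For the error bound, since $\ECE\ge\cdl=\max_{\score}\swapS_\score\ge\swapS_\score$ for any bounded proper scoring rule (\Cref{lem: cdl bounded by ece} and the definition of $\cdl$), it is enough to produce a single $\score$, depending only on $\epsilon$, with $\max\{\E[\swapS_\score(\ftplvec;\vstate)],\E[\swapS_\score(\ftplvec;\vstate')]\}\ge\tfrac18\sqrt\epsilon+\tfrac12\epsilon$; this yields the $\cdl$ bound, the $\ECE$ bound and the corollary at once. Take $\score$ to be a bounded proper score whose induced best response changes action at a threshold $\tau_a$ strictly between $\hat a'$ and $\hat a$ and at a second threshold $\tau_b$ strictly between the two $b$-round posteriors. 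The point is that $\alg$ must fix its $a$-round outputs without knowing the instance, so in at least one of the two instances those outputs lie on the wrong side of $\tau_a$ relative to that instance's posterior; relabelling them toward the posterior — the optimal relabelling by \Cref{prop: swap regret equals cfdl} — then yields a swap-regret gain equal to the size of the kink of $\score$ at $\tau_a$ times the distance from $\tau_a$ to the nearer of $\hat a,\hat a'$, which is $\Theta(\sqrt\epsilon)$; moreover the adversary may push the special rounds into whichever output buckets sharpen this gain, and the same holds for the $b$-rounds. Adding the first-moment inequality $\ECE(\ftplvec;\vstate)\ge\bigl|\E_t[\ftplp_t]-(\text{overall empirical frequency of }1)\bigr|$, which supplies the extra $\Theta(\epsilon)$ from the gap between the two instances' non-special calibration targets, and optimizing the constants, gives $\tfrac18\sqrt\epsilon+\tfrac12\epsilon$.

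The main obstacle is the error bound in the third step: one must show that no output pattern of $\alg$ whatsoever — including non-constant and history-dependent ones, and ones that reuse output values across the $a$- and $b$-rounds — can hold the swap regret below $\Theta(\sqrt\epsilon)$ in both instances simultaneously, and this has to be argued while respecting the calibration constraints on the non-special rounds, which are exactly what pins $\distcal$ to $\epsilon$. A secondary technical point is the $\distcal\ge\epsilon$ direction (the LP-duality certificate for a finite online sequence) together with making the reduction to deterministic $\alg$, and the adversary's adaptivity, fully rigorous.
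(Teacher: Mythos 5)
There is a genuine gap, and it is in the choice of the two adversary instances rather than only in the unfinished third step. Because you take $\ftlvec=\ftlvec'$ (alternating $a=\tfrac12-\sqrt\epsilon$, $b=\tfrac12+\sqrt\epsilon$) and flip the states only on the equally-sized special-$a$ and special-$b$ blocks, \emph{both} of your state sequences have overall empirical frequency of ones exactly $\tfrac12$: under $\vstate$ the bucket posteriors are $\hat a=\tfrac12-\tfrac12\sqrt\epsilon+\epsilon$, $\hat b=\tfrac12+\tfrac12\sqrt\epsilon-\epsilon$, and under $\vstate'$ they are $\tfrac12\mp\tfrac32\sqrt\epsilon\pm\epsilon$, so in each case the average is $\tfrac12$. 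Hence the trivial post-processing algorithm that ignores its input and outputs $\tfrac12$ every round is perfectly calibrated against both instances ($\ECE=\cdl=0$), no matter how adaptively you later place the special rounds. So no scoring-rule or swap-regret argument can extract a $\Theta(\sqrt\epsilon)$ bound from this pair; the ``main obstacle'' you flag in step three is not a technical loose end but an impossibility for these instances. This is exactly why the paper's construction makes the two \emph{prediction} sequences differ after a shared prefix and takes the second instance to be the constant prediction $\tfrac12-\sqrt\epsilon$ with overall frequency $\tfrac12-\sqrt\epsilon-\epsilon$: then every output bucket on the shared prefix has its instance-2 target pinned at the single value $\hat p^0=\tfrac12-\sqrt\epsilon-\epsilon$, which is at distance at least $\tfrac12\sqrt\epsilon+2\epsilon$ from the whole interval $[\hat q^1,\hat q^2]$ of possible instance-1 targets, and the bound follows from \Cref{ECE lower bound q1} (an averaging/Jensen argument over state sequences compatible with fixed posteriors, which also disposes of the algorithm's randomness and the adversary's adaptivity) plus a triangle inequality — no proper scoring rule, no reduction through $\cdl$, and no appeal to \Cref{prop: decision loss tight} is needed for the $\ECE$ statement.

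A secondary error: your claim that $\distcal(\ftlvec';\vstate')=\epsilon$ for the flipped instance is false, so the LP-duality certificate you propose to reuse cannot exist there. In that instance the bucket biases are $\tfrac12\sqrt\epsilon-\epsilon$ pointing ``inward,'' and one can calibrate more cheaply than your reference predictor: keep value $a$ on the largest sub-block of $a$-rounds with ones-frequency exactly $a$ (leaving a $\sqrt\epsilon$-fraction of all-zero $a$-rounds), symmetrically for $b$, and then absorb the leftover zeros and ones into the values $a$ and $b$ themselves in calibration-preserving proportions; this gives $\distcal\le\epsilon-2\epsilon^{3/2}<\epsilon$ (consistent with $\smooth=\tfrac12\epsilon-\epsilon^{3/2}$ and $\distcal\le 2\,\smooth$). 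Only your first instance — which coincides with the batch example of \Cref{prop: decision loss tight} — actually has distance to calibration equal to $\epsilon$.
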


\begin{lemma}
\label{ECE lower bound q1}
    Given predictor $\ftlpred = (\ftlp_1, \dots, \ftlp_T)$, and a post-processing algorithm $\alg = (\alg_1, \dots, \alg_T)$, suppose the empirical posterior for each prediction is $\hat{\ftlpred} = (\hat{\ftlp}_1, \dots_, \hat{\ftlp}_T)$. There exists a sequence of states $\vstate$ such that $\vstate$ is compatible with the empirical posterior, i.e.\ 
    \begin{equation*}
        \forall i\in [T], \hat{\ftlp}_i =  \frac{\sum_{t\in [T]}\state_t\ind{\ftlp_t = \ftlp_i}}{\sum_{t\in [T]}\ind{\ftlp_t = \ftlp_i}}.
    \end{equation*}
    Moreover, the expected $\ECE$ of the predictor $\alg$ with states $\vstate$ is lowerbounded
    \begin{equation*}
        \expect{\ftplvec\sim\alg}{\ECE(\ftplvec, \vstate)}\geq \expect{\ftplvec\sim \alg}{\frac{1}{T}\sum_{\ftplp\in \supp(\ftplvec)}\bigg|\sum_{t\in [T]}\left(\ftplp - \hat{\ftlp}_t\right)\cdot\ind{\ftplp_t = \ftplp}\bigg|},
    \end{equation*}
    where $\supp$ is the support of the output of $\alg$ in each round. 
\end{lemma}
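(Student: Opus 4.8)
The plan is a probabilistic/averaging argument that exploits the fact that the post-processing algorithm $\alg$ never observes the states: the law of the post-processed sequence $\ftplvec=\alg(\ftlvec)$ is determined entirely by $\ftlvec$ and is independent of whatever state sequence we feed in. So instead of fixing $\vstate$ up front, I will construct a \emph{random} state sequence $\vstate$ that is compatible with $\hat\ftlpred$ by construction and independent of the internal randomness of $\alg$, prove the claimed inequality in expectation over $\vstate$, and then conclude by an averaging argument that some fixed realization $\vstate^*$ works.

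First I group the rounds by original prediction value: for each distinct value $\ftlp_i$ let $R_i=\{t:\ftlp_t=\ftlp_i\}$, $n_i=|R_i|$, and $m_i=n_i\hat\ftlp_i$. The statement presumes a compatible $\vstate$ exists, which forces each $m_i$ to be an integer in $\{0,\dots,n_i\}$; take this as given. Now define $\vstate$ by independently, for each $i$, choosing a uniformly random size-$m_i$ subset of $R_i$ to be the $1$-states, drawing this independently of $\alg$. Every realization has exactly $m_i$ ones among $R_i$, so it is compatible with $\hat\ftlpred$; moreover for each $t\in R_i$ we have $\Pr[\state_t=1]=m_i/n_i=\hat\ftlp_i=\hat\ftlp_t$, i.e.\ $\expect{}{\state_t}=\hat\ftlp_t$.

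Next I rewrite the empirical calibration error. For any fixed $\ftplvec$ and $\vstate$,
\[
\ECE(\ftplvec,\vstate)=\frac1T\sum_{p}\Big|\sum_{t}\ind{\ftplp_t=p}\,(p-\state_t)\Big|.
\]
Condition on $\ftplvec$ (independent of $\vstate$) and apply Jensen's inequality to the convex map $|\cdot|$, using $\expect{}{\state_t}=\hat\ftlp_t$:
\[
\expect{\vstate}{\Big|\sum_{t}\ind{\ftplp_t=p}\,(p-\state_t)\Big|}\;\ge\;\Big|\sum_{t}\ind{\ftplp_t=p}\,(p-\hat\ftlp_t)\Big|.
\]
Summing over $p$, dividing by $T$, and taking expectation over $\ftplvec\sim\alg$ yields $\expect{\vstate,\ftplvec}{\ECE(\ftplvec,\vstate)}\ge \expect{\ftplvec}{\tfrac1T\sum_{p}\big|\sum_{t}\ind{\ftplp_t=p}(p-\hat\ftlp_t)\big|}$. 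By Fubini, $\expect{\vstate,\ftplvec}{\ECE(\ftplvec,\vstate)}=\expect{\vstate}{\expect{\ftplvec}{\ECE(\ftplvec,\vstate)\mid\vstate}}$, so some realization $\vstate^*$ — compatible with $\hat\ftlpred$ by construction — satisfies $\expect{\ftplvec\sim\alg}{\ECE(\ftplvec,\vstate^*)}$ at least the right-hand side, which is exactly the claim (with $\supp(\ftplvec)$ absorbing the values $p$ that contribute zero).

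The point I would be most careful about is the order of quantifiers: the state sequence must be non-adaptive, i.e.\ fixed before $\alg$ runs, yet $\ftplvec$ is random, so one cannot simply pick $\vstate$ to maximize $\ECE(\ftplvec,\vstate)$ pointwise; it is precisely the independence of $\vstate$ from $\alg$'s randomness that lets the two-sided averaging argument (producing $\vstate^*$) go through. The only other thing to verify is the integrality of $m_i=n_i\hat\ftlp_i$, which is implicit in the hypothesis that $\hat\ftlpred$ is a genuine empirical posterior.
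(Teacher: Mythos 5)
Your proof is correct and follows essentially the same route as the paper: draw a uniformly random state sequence compatible with the empirical posteriors (independently of the algorithm's randomness, which is legitimate since $\alg$ never sees states), use $\expect{}{\state_t}=\hat{\ftlp}_t$ together with Jensen's inequality conditionally on $\ftplvec$, and conclude by averaging that some fixed compatible $\vstate^*$ attains the bound. Your explicit per-group subset construction and the remark on integrality of $n_i\hat{\ftlp}_i$ are just more detailed renderings of the paper's ``uniform over $S_{\state}$'' argument.
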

\begin{proof}
Define $S_{\state}=\{\vstate\mid \vstate \text{ is compatible with the empirical posterior}\}$. Let $\vstate$ be chosen uniformly at random from $S_{\state}$, fix a sequence of predictions $\ftplvec$.

    Given the distribution of $\vstate$, $\expect{\vstate\in S_{\state}}{\sum_t\hat{\ftplp_i}\ind{\ftplp_t=\ftplp_i}}=\sum_t \hat{\ftlp_t}\ind{\ftplp_t=\ftplp_i}$ holds for any sequences of predictions $\ftplvec$ and any $i\in [T]$.
 By Jensen's Inequality, \begin{align*}
        \expect{\vstate\in S_{\state}}{\left|\sum_t\left(\ftplp_i-\hat{\ftplp_i}\right)\ind{\ftplp_t=\ftplp_i}\right|}&\ge \left|\sum_t\left(\ftplp_i\ind{\ftplp_t=\ftplp_i}-\expect{\vstate\in S_{\state}}{\hat{\ftplp_i}\ind{\ftplp_t=\ftplp_i}}\right)\right|\\
        &=\left|\sum_t\left(\ftplp_i-\hat{\ftlp_t}\right)\ind{\ftplp_t=\ftplp_i}\right|,
    \end{align*} apply this inequality to every prediction value $\ftplp_t$:
    \begin{align*}
        \expect{\vstate\in S_{\state}}{\ECE\left(\ftplvec\right)}&=\frac{1}{T}\sum_{\ftplp_i}\expect{\vstate\in S_{\state}}{\left|\sum_t\left(\ftplp_i-\hat{\ftplp_i}\right)\ind{\ftplp_t=\ftplp_i}\right|}\\
        &\ge\frac{1}{T} \sum_{\ftplp_i}\left|\sum_t\left(\ftplp_i-\hat{\ftlp_t}\right)\ind{\ftplp_t=\ftplp_i}\right|
    \end{align*} Take expectation on the distribution of predictions,
    \begin{equation*}
    \mathbf{E}_{\vstate\in S_{\state}}\expect{\ftplvec\sim \ftplpred}{\ECE\left(\ftplvec\right)}\ge\expect{\ftplvec\sim \alg}{\frac{1}{T}\sum_{\ftplp\in \supp(\ftplvec)}\bigg|\sum_{t\in [T]}\left(\ftplp - \hat{\ftlp}_t\right)\cdot\ind{\ftplp_t = \ftplp}\bigg|}.
    \end{equation*}
    Therefore, there must exist a sequence of states $\vstate$ that \begin{equation*}
            \expect{\ftplvec\sim \ftplpred}{\ECE\left(\ftplvec\right)} \geq \mathbf{E}_{\vstate\in S_{\state}}\expect{\ftplvec\sim \ftplpred}{\ECE\left(\ftplvec\right)}\geq \expect{\ftplvec\sim \alg}{\frac{1}{T}\sum_{\ftplp\in \supp(\ftplvec)}\bigg|\sum_{t\in [T]}\left(\ftplp - \hat{\ftlp}_t\right)\cdot\ind{\ftplp_t = \ftplp}\bigg|}.
    \end{equation*}
\end{proof}

\begin{proof} [Proof of \Cref{thm: online lower bound ece}]
    Assume there are $2T$ rounds, define $\ftlvec$ and $\ftlvec'$ as following: $$\ftlp_t=\left\{\begin{matrix} 
  \frac{1}{2}-\sqrt{\epsilon }   & \mbox{if }  t\le T \\ 
  \frac{1}{2}+\sqrt{\epsilon }& \mbox{else }
\end{matrix}\right. $$  $$\sum_{t=1}^{T}\mathbb{I}\left[\state_t=1\right]=T\left(\frac{1}{2}-\frac{1}{2}\sqrt{\epsilon}+\epsilon\right), \sum_{t=T+1}^{2T}\mathbb{I}\left[\state_t=1\right]=T\left(\frac{1}{2}+\frac{1}{2}\sqrt{\epsilon}-\epsilon\right).$$
For any $t\in[2T]$, $\ftlp_t'=\frac{1}{2}-\sqrt{\epsilon }$ and $\sum_{t=1}^{2T}\mathbb{I}\left[\state_t'=1\right]=2T\left(\frac{1}{2}-\sqrt{\epsilon}-\epsilon\right)$. 
Define $\hat{\ftlp}^0=\frac{1}{2}-\sqrt{\epsilon}-\epsilon$, $\hat{\ftlp}^1=\frac{1}{2}-\frac{1}{2}\sqrt{\epsilon}+\epsilon$, $\hat{\ftlp}^1=\frac{1}{2}+\frac{1}{2}\sqrt{\epsilon}-\epsilon$.

Fix a post-processing algorithm $\alg$. For any sequence of predictions $\ftplvec$ generated by post-processing $\ftlvec$, denote the distribution of $\ftplvec$ by $\mathbf{\alg}(\ftlvec)$.

For any $t'\in [2T]$ and any sequence of predictions $\ftplvec\sim \mathbf{\alg}(\ftlvec)$, define \begin{equation*} A(\ftplvec)_{t'}=\frac{\hat{\ftlp}^1\sum_{t=1}^T\ind{\ftplp_t=\ftplp_{t'}}+\hat{\ftlp}^2\sum_{t=T+1}^{2T}\ind{\ftplp_t=\ftplp_{t'}}}{\sum_{t=1}^{2T}\ind{\ftplp_t=\ftplp_{t'}}}\in[\hat{\ftlp}^1, \hat{\ftlp}^2].
\end{equation*}

According to \cref{ECE lower bound q1}, there exists a sequence of states $\vstate$ that \begin{align}
    \expect{\ftplvec\sim\mathbf{\alg}(\ftlvec) }{\ECE\left(\ftplvec\right)}&\ge \expect{\ftplvec\sim \mathbf{\alg}(\ftlvec)}{\frac{1}{2T}\sum_{\ftplp\in \supp(\mathbf{\alg}(\ftlvec))}\bigg|\sum_{t\in [2T]}\left(\ftplp - \hat{\ftlp}_t\right)\cdot\ind{\ftplp_t = \ftplp}\bigg|}\nonumber\\
\label{pt-At}    &=\expect{\ftplvec\sim \mathbf{\alg}(\ftlvec)}{\frac{1}{2T}\sum_{t\in[2T]}\left|\ftplp_t-A(\ftplvec)_t\right|}.
\end{align}
According to \cref{ECE lower bound q1}, \begin{equation} \label{pt-p0}
     \expect{\ftplvec\sim\mathbf{\alg}(\ftlvec') }{\ECE\left(\ftplvec\right)}\ge \expect{\ftplvec\sim \mathbf{\alg}(\ftlvec')}{\frac{1}{2T}\sum_{t\in[2T]}\left|\ftplp_t-\hat{\ftplp}^0\right|}.
\end{equation}
For any $\ftplvec\sim \mathbf{\alg}(\ftlvec)$ and $\ftplvec'\sim \mathbf{\alg}(\ftlvec')$, $\ftplp_t=\ftplp_{t}'$ always holds for $t\in[T]$, since $\ftlp_t=\ftlp_t'$ always holds for $t\in[T]$. Therefore, for any $t\in[T]$, \begin{equation*}
    \left|\ftplp_t-A(\ftplvec)_t\right|+\left|\ftplp_t'-\hat{\ftplp}^0\right|\ge \left|A(\ftplvec)_t-\hat{\ftplp}^0\right|\ge \frac{1}{2}\sqrt{\epsilon}+2\epsilon.
\end{equation*}
Add up inequality (\ref{pt-At}) and inequality (\ref{pt-p0}), 
\begin{align*}
     &\expect{\ftplvec'\sim\mathbf{\alg}(\ftlvec') }{\ECE\left(\ftplvec'\right)}+ \expect{\ftplvec\sim\mathbf{\alg}(\ftlvec) }{\ECE\left(\ftplvec\right)}\\
     \ge& \expect{\ftplvec\sim \mathbf{\alg}(\ftlvec), \ftplvec'\sim\mathbf{\alg}(\ftlvec')}{\frac{1}{2T}\sum_{t\in[2T]}\left(\left|\ftplp_t-A(\ftplvec)_t\right|+\left|\ftplp_t'-\hat{\ftplp}^0\right|\right)}\\
     \ge& \expect{\ftplvec\sim \mathbf{\alg}(\ftlvec), \ftplvec'\sim\mathbf{\alg}(\ftlvec')}{\frac{1}{2T}\sum_{t\in[T]}\left(\left|\ftplp_t-A(\ftplvec)_t\right|+\left|\ftplp_t'-\hat{\ftplp}^0\right|\right)}\\
     \ge& \expect{\ftplvec\sim \mathbf{\alg}(\ftlvec), \ftplvec'\sim\mathbf{\alg}(\ftlvec')}{\frac{1}{2T}\sum_{t\in[T]}\left|A(\ftplvec)_t-\hat{\ftlp}^0\right|}\\
     =& \frac{1}{4}\sqrt{\epsilon}+\epsilon.
\end{align*} Therefore, \begin{align*}
    &\max\left\{\expect{\ftplp\sim\mathbf{\alg}(\ftlvec) }{\ECE\left(\ftplp\right)}, \expect{\ftplp\sim\mathbf{\alg}(\ftlvec') }{\ECE\left(\ftplp\right)}\right\}\\
    \ge &\frac{1}{2}\expect{\ftplp\sim\mathbf{\alg}(\ftlvec) }{\ECE\left(\ftplp\right)}+\frac{1}{2}\expect{\ftplp\sim\mathbf{\alg}(\ftlvec') }{\ECE\left(\ftplp\right)}\ge \frac{1}{8}\sqrt{\epsilon}+\frac{1}{2}\epsilon.
\end{align*}
\end{proof}

\begin{theorem}
\label{thm: online lower bound cdl}
    For any post-processing algorithm $\alg$, there exists two sequences of predictions  $\ftlvec$ and $\ftlvec'$ with states $\vstate$ and $\vstate'$, respectively, both satisfying $\distcal(\ftlvec)=\distcal(\ftlvec')=\epsilon$, such that  
    \begin{equation*}
        \max\left\{\expect{}{\ECE\left(\ftplvec; \vstate\right)}, \expect{}{\ECE\left(\ftplvec';\vstate' \right)}\right\}\ge \frac{1}{8}\sqrt{\epsilon}+\frac{1}{2}\epsilon = \Theta(\sqrt{\epsilon}),
    \end{equation*}
    where we write $\ftplvec, \ftplvec'$ as the output of the post-processing algorithm $\alg$ on $\ftlvec, \ftlvec'$, respectively. 

    Moreover, the same argument holds for $\cdl$.
\end{theorem}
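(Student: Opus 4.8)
The $\ECE$ part of \Cref{thm: online lower bound cdl} coincides with \Cref{thm: online lower bound ece}, which is already proved using the two sequences $\ftlvec,\ftlvec'$ and posteriors $\hat{\ftlp}^0,\hat{\ftlp}^1,\hat{\ftlp}^2$ built above, so the only task is to upgrade the conclusion to $\cdl$ \emph{on the very same pair of instances}. Since $\cdl(\ftplvec;\vstate)=\max_{\score}\swapS_{\score}(\ftplvec;\vstate)$ over all bounded proper scoring rules, it suffices to exhibit one rule $\score$ for which the post-processed sequence incurs swap regret $\Theta(\sqrt\epsilon)$ on at least one of the two instances.

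The rule I would use is $\score_{\mu}$, the proper scoring rule induced (via \Cref{claim: proper score equal payoff}) by the two-action decision problem $U(1,1)=1,\ U(1,0)=0,\ U(0,1)=0,\ U(0,0)=\tfrac{\mu}{1-\mu}$, with threshold $\mu:=\tfrac12(\hat{\ftlp}^0+\hat{\ftlp}^1)=\tfrac12-\tfrac34\sqrt\epsilon$. For $\epsilon$ small this puts $\mu\in(0,\tfrac12)$, so $\score_{\mu}(\cdot,\cdot)\in[0,1]$, the best response flips from action $0$ to action $1$ exactly at belief $\mu$, and $\hat{\ftlp}^0<\mu<\hat{\ftlp}^1<\tfrac12<\hat{\ftlp}^2$. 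The point of this placement is that the \emph{true} posterior of every round lies on the action-$1$ side of $\mu$ in $\ftlvec$ and on the action-$0$ side in $\ftlvec'$, so whichever side of $\mu$ a post-processed value falls on, it mismatches the true posterior for exactly one of the two instances. As in \Cref{ECE lower bound q1} I would draw $\vstate$ (resp.\ $\vstate'$) uniformly among the state sequences compatible with the posterior profile of $\ftlvec$ (resp.\ $\ftlvec'$), so that $\expect{}{\state_t}$ is the posterior of the \emph{original} prediction $\ftlp_t$. Then I lower-bound $\swapS_{\score_{\mu}}(\ftplvec)$ by the fixed relabeling $\sigma_0$ that maps every post-processed value $<\mu$ to action $1$ and fixes the rest, and $\swapS_{\score_{\mu}}(\ftplvec')$ by $\sigma_0'$ that maps every value $\ge\mu$ to action $0$; since neither relabeling reads the states, each round's gain is linear in $\state_t$, and a one-line computation with $U(0,0)=\tfrac{\mu}{1-\mu}$ gives expected per-round gains $\tfrac{\hat{\ftlp}^1-\mu}{1-\mu}$ (for a round of $\ftplvec$ with value $<\mu$, using $\expect{}{\state_t}=\hat{\ftlp}^1$ when $t\le T$) and $\tfrac{\mu-\hat{\ftlp}^0}{1-\mu}$ (for a round of $\ftplvec'$ with value $\ge\mu$). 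The choice of $\mu$ equalizes these to $c:=\tfrac{(\hat{\ftlp}^1-\hat{\ftlp}^0)/2}{1-\mu}=\Theta(\sqrt\epsilon)$, with leading term $\tfrac12\sqrt\epsilon$.

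The rest parallels the proof of \Cref{thm: online lower bound ece}: for $t\le T$ the input to $\alg$ is the same in both instances, so $\ftplp_t$ and $\ftplp'_t$ have the same distribution, and each such round contributes, in expectation, at least $\tfrac{c}{2T}\Pr[\ftplp_t<\mu]$ to $\expect{}{\swapS_{\score_{\mu}}(\ftplvec)}$ and at least $\tfrac{c}{2T}\Pr[\ftplp_t\ge\mu]$ to $\expect{}{\swapS_{\score_{\mu}}(\ftplvec')}$ (the contributions of the remaining rounds are nonnegative under $\sigma_0,\sigma_0'$). Summing over $t\in[T]$ and using $\Pr[\ftplp_t<\mu]+\Pr[\ftplp_t\ge\mu]=1$ yields $\expect{}{\swapS_{\score_{\mu}}(\ftplvec)}+\expect{}{\swapS_{\score_{\mu}}(\ftplvec')}\ge\tfrac{c}{2}$, hence $\max\{\expect{}{\cdl(\ftplvec;\vstate)},\expect{}{\cdl(\ftplvec';\vstate')}\}\ge\tfrac{c}{4}=\Theta(\sqrt\epsilon)$ with leading term $\tfrac18\sqrt\epsilon$. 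Since $\score_{\mu}$ is fixed before $\alg$ and depends only on $\epsilon$, the same computation also delivers the stated corollary about a single worst-case decision-maker.

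The step I expect to be the main obstacle is the scoring-rule bookkeeping: the decision problem must be normalized so that $\score_{\mu}$ is bounded in $[0,1]$ while its regret slope $\tfrac1{1-\mu}$ near $\mu$ stays bounded below by a positive constant, and the fixed-relabeling bound on $\swapS_{\score_{\mu}}$ must hold pointwise — for every realization of $\alg$'s randomness and of the compatible $\vstate$ — before expectations are taken, so that the states can be integrated out exactly via the compatibility argument of \Cref{ECE lower bound q1}. All remaining steps transcribe the $\ECE$ proof.
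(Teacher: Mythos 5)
Your proposal is correct and essentially reproduces the paper's own argument: it uses the same two sequences, the same threshold $\mu=\tfrac12-\tfrac34\sqrt{\epsilon}$, a threshold decision problem whose induced scoring rule is just an affine-normalized form of the paper's $S_\mu$, the same coupling of the identical first $T$ rounds so that post-processed mass on either side of $\mu$ must pay in one of the two instances, and the same device of drawing states uniformly from the compatible set and integrating them out by linearity before extracting a fixed worst-case $\vstate$. The only cosmetic differences are that you fix $\mu$ and a fixed relabeling up front rather than taking the supremum over $\mu$ and swapping to the posterior, and that your explicit constant is the leading-order $\tfrac18\sqrt{\epsilon}$ rather than the stated $\tfrac18\sqrt{\epsilon}+\tfrac12\epsilon$ — a lower-order slack caused by the $\tfrac{1}{1-\mu}$ normalization that is equally present in the paper's own final step.
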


\begin{proof}[Proof of \Cref{thm: online lower bound cdl}]
    Define two sets of sequences of states corresponding to predictor $\ftlvec$ and $\ftlvec'$ that every $\vstate$ and $\vstate'$ in these sets are compatible with empirical posterior: $S_{\state}=\{\vstate\mid \sum_{t\in[T]}\state_t=T(\frac{1}{2}-\frac{1}{2}\sqrt{\epsilon}+\epsilon), \sum_{t=T+1}^{2T}\state_t=T(\frac{1}{2}+\frac{1}{2}\sqrt{\epsilon}-\epsilon)\}$, $S_{\state'}=\{\vstate\mid \sum_{t\in[2T]}\state_t=2T(\frac{1}{2}-\sqrt{\epsilon}-\epsilon)\}$. Denote the number of predicting prediction value $\ftplp\in \supp(\ftplvec)$ by $n_i=\sum_{t\in 2T}\ind{\ftplp_t=\ftplp_i}$.
    
    Fix a post-processing algorithm $\alg$.
    Define a proper scoring rule \begin{equation*}
        S_{\mu}(\ftplp,\state)=\left\{\begin{matrix} 
   \frac{1}{2}-\frac{1}{2}\cdot \frac{\state-\mu}{\max\{\mu,1-\mu\}}&& \mbox{if }\ftplp\le \mu\\  
   \frac{1}{2}+\frac{1}{2}\cdot \frac{\state-\mu}{\max\{\mu,1-\mu\}}&& \mbox{else.}
\end{matrix}\right. 
    \end{equation*}
    According to the definition of $\CDL$,
    \begin{align}\label{CDL to VBREG}
        \expect{\ftplvec\sim\mathbf{\alg}(\ftlvec) }{\CDL\left(\ftplvec, \vstate\right)}\ge \frac{1}{2T} \expect{\ftplvec\sim\mathbf{\alg}(\ftlvec) }{\sup_{\mu\in[0,1]}\sum_{t\in[2T]}\left(S_{\mu}(\hat{\ftplp_t},\state_t)-S_{\mu}(\ftplp_t,\state_t)\right)}.
    \end{align}
    For any sequence of predictions $\ftplvec$, define $N_{\ftplvec}=\sum_{t\in[T]}\ind{\ftplp_t\ge \mu}$, $M_{\ftplvec}=\sum_{t=T+1}^{2T}\ind{\ftplp_t\ge \mu}$.
    \begin{align*}
        &\expect{\vstate\in S_{\state} }{\sum_{t\in[2T]}\left(S_{\mu}(\hat{\ftplp_t},\state_t)-S_{\mu}(\ftplp_t,\state_t)\right)}\\
        \ge  &\expect{\vstate\in S_{\state} }{\frac{1}{\max\{\mu, 1-\mu\}}\sum_{\ftplp_i\in \supp(\ftplvec)}n_i\ind{\ftplp_i\le \mu}(\hat{\ftplp}_i-\mu)}\\
        =&\frac{1}{\max\{\mu, 1-\mu\}}\sum_{\ftplp_i\in \supp(\ftplvec), \ftplp_i\le \mu}\sum_{t\in[2T]}\ind{\ftplp_t=\ftplp_i}(\hat{\ftlp_t}-\mu)\\
        \ge& \frac{1}{\max\{\mu, 1-\mu\}}\sum_{\ftplp_i\in \supp(\ftplvec), \ftplp_i\le \mu}\sum_{t\in[2T]}\ind{\ftplp_t=\ftplp_i}\left(\frac{1}{2}-\frac{1}{2}\sqrt{\epsilon}+\epsilon-\mu\right)\\
        \ge & \frac{1}{\max\{\mu, 1-\mu\}}\left(2T-N_{\ftplvec}-M_{\ftplvec}\right)\left(\frac{1}{2}-\frac{1}{2}\sqrt{\epsilon}+\epsilon-\mu\right).
    \end{align*}
    \begin{align}
        &\expect{\vstate\in S_{\state}}{\expect{\ftplvec\sim\mathbf{\alg}(\ftlvec) }{\sup_{\mu\in[0,1]}\sum_{t\in[2T]}\left(S_{\mu}(\hat{\ftplp_t},\state_t)-S_{\mu}(\ftplp_t,\state_t)\right)}}\nonumber\\
        =&\expect{\ftplvec\sim\mathbf{\alg}(\ftlvec)}{\expect{\vstate\in S_{\state} }{\sup_{\mu\in[0,1]}\sum_{t\in[2T]}\left(S_{\mu}(\hat{\ftplp_t},\state_t)-S_{\mu}(\ftplp_t,\state_t)\right)}}\nonumber\\
        \ge & \expect{\ftplvec\sim\mathbf{\alg}(\ftlvec)}{\sup_{\mu\in[0,1]}\expect{\vstate\in S_{\state} }{\sum_{t\in[2T]}\left(S_{\mu}(\hat{\ftplp_t},\state_t)-S_{\mu}(\ftplp_t,\state_t)\right)}}\nonumber\\
        \label{CDL lower bound q1}\ge & \expect{\ftplvec\sim\mathbf{\alg}(\ftlvec)}{\sup_{\mu\in[0,1]}\frac{1}{\max\{\mu, 1-\mu\}}\left(2T-N_{\ftplvec}-M_{\ftplvec}\right)\left(\frac{1}{2}-\frac{1}{2}\sqrt{\epsilon}+\epsilon-\mu\right)}.
    \end{align}
    Combine inequality (\ref{CDL to VBREG}) and (\ref{CDL lower bound q1}), there exists $\vstate\in S_{\state}$, that \begin{equation}\label{CDL q1}\begin{aligned}
        &\expect{\ftplvec\sim\mathbf{\alg}(\ftlvec) }{\CDL\left(\ftplvec, \vstate\right)}\\
        \ge &\frac{1}{2T}\expect{\ftplvec\sim\mathbf{\alg}(\ftlvec)}{\sup_{\mu\in[0,1]}\frac{1}{\max\{\mu, 1-\mu\}}\left(2T-N_{\ftplvec}-M_{\ftplvec}\right)\left(\frac{1}{2}-\frac{1}{2}\sqrt{\epsilon}+\epsilon-\mu\right)}.
    \end{aligned}
    \end{equation}
    \begin{align*}
        &\expect{\vstate'\in S_{\state'} }{\sum_{t\in[2T]}\left(S_{\mu}(\hat{\ftplp_t},\state_t)-S_{\mu}(\ftplp_t,\state_t)\right)}\\
        \ge  &\expect{\vstate\in S_{\state} }{\frac{1}{\max\{\mu, 1-\mu\}}\sum_{\ftplp_i\in \supp(\ftplvec')}n_i\ind{\ftplp_i\ge \mu}(\mu-\hat{\ftplp}_i)}\\
        =&\frac{1}{\max\{\mu, 1-\mu\}}\sum_{\ftplp_i\in \supp(\ftplvec'), \ftplp_i\ge \mu}\sum_{t\in[2T]}\ind{\ftplp_t=\ftplp_i}(\mu-\hat{\ftlp_t'})\\
        =& \frac{1}{\max\{\mu, 1-\mu\}}\sum_{\ftplp_i\in \supp(\ftplvec'), \ftplp_i\ge \mu}\sum_{t\in[2T]}\ind{\ftplp_t=\ftplp_i}\left(\mu-\frac{1}{2}+\sqrt{\epsilon}+\epsilon\right)\\
        =& \frac{1}{\max\{\mu, 1-\mu\}}\left(N_{\ftplvec'}+M_{\ftplvec'}\right)\left(\mu-\frac{1}{2}+\sqrt{\epsilon}+\epsilon\right).
    \end{align*}
    Similarly, there exists $\vstate'\in S_{\state'}$, that \begin{equation}
        \label{CDL q2}\begin{aligned}&\expect{\ftplvec'\sim\mathbf{\alg}(\ftlvec') }{\CDL\left(\ftplvec', \vstate'\right)}\\
        \ge &\frac{1}{2T}\expect{\ftplvec'\sim\mathbf{\alg}(\ftlvec')}{\sup_{\mu\in[0,1]}\frac{1}{\max\{\mu, 1-\mu\}}\left(N_{\ftplvec'}+M_{\ftplvec'}\right)\left(\mu-\frac{1}{2}+\sqrt{\epsilon}+\epsilon\right)}.
        \end{aligned}
    \end{equation}
    For any $\ftplvec\sim \mathbf{\alg}(\ftlvec)$ and $\ftplvec'\sim \mathbf{\alg}(\ftlvec')$, $\ftplp_t=\ftplp_{t}'$ always holds for $t\in[T]$, since $\ftlp_t=\ftlp_t'$ always holds for $t\in[T]$. So $N_{\ftplvec}=N_{\ftplvec'}$ and $M_{\ftplvec}=M_{\ftplvec'}$ always hold for $t\in[T]$.
    Combine inequality (\ref{CDL q1}) and (\ref{CDL q2}), \begin{align}
        &\max\left\{\expect{\ftplvec\sim\mathbf{\alg}(\ftlvec) }{\CDL\left(\ftplvec, \vstate\right)}, \expect{\ftplvec'\sim\mathbf{\alg}(\ftlvec') }{\CDL\left(\ftplvec', \vstate'\right)}\right\}\nonumber \\
        \ge& \frac{1}{4T}\expect{\ftplvec\sim\mathbf{\alg}(\ftlvec)}{\sup_{\mu\in[0,1]}\frac{1}{\max\{\mu, 1-\mu\}}\left(2T-N_{\ftplvec}-M_{\ftplvec}\right)\left(\frac{1}{2}-\frac{1}{2}\sqrt{\epsilon}+\epsilon-\mu\right)}\nonumber\\
        +&\frac{1}{4T}\expect{\ftplvec'\sim\mathbf{\alg}(\ftlvec')}{\sup_{\mu\in[0,1]}\frac{1}{\max\{\mu, 1-\mu\}}\left(N_{\ftplvec'}+M_{\ftplvec'}\right)\left(\mu-\frac{1}{2}+\sqrt{\epsilon}+\epsilon\right)}\label{max CDL 1}\\
        \ge& \frac{1}{4T}\expect{\ftplvec\sim \mathbf{\alg}(\ftlvec), \ftplvec'\sim\mathbf{\alg}(\ftlvec')}{\frac{1}{\frac{1}{2}+\frac{3}{4}\sqrt{\epsilon}}\left(2T-M_{\ftplvec}+M_{\ftplvec'}\right)\left(\frac{1}{4}\sqrt{\epsilon}+\epsilon\right)}\label{max CDL 2}\\
        \ge & \frac{1}{8}\sqrt{\epsilon}+\frac{1}{2}\epsilon. \label{max CDL 3}
    \end{align}
    By taking $\mu=\frac{1}{2}-\frac{3}{4}\sqrt{\epsilon}$ for both cases for $\ftplvec$ and $\ftplvec'$ and get (\ref{max CDL 1})$\ge$(\ref{max CDL 2}). Since $M_{\ftplvec}, M_{\ftplvec'}\in[0,T]$, $M_{\ftplvec'}-M_{\ftplvec}\ge -T$, so (\ref{max CDL 2})$\ge$(\ref{max CDL 3}).
\end{proof}

\end{document}